\def\balign#1\ealign{\begin{align}#1\end{align}}
\def\baligns#1\ealigns{\begin{align*}#1\end{align*}}
\def\balignat#1\ealign{\begin{alignat}#1\end{alignat}}
\def\balignats#1\ealigns{\begin{alignat*}#1\end{alignat*}}
\def\bitemize#1\eitemize{\begin{itemize}#1\end{itemize}}
\def\benumerate#1\eenumerate{\begin{enumerate}#1\end{enumerate}}
\newenvironment{talign*}
 {\csname align*\endcsname}
 {\endalign}
\newenvironment{talign}
 {\csname align\endcsname}
 {\endalign}
\def\balignst#1\ealignst{\begin{talign*}#1\end{talign*}}
\def\balignt#1\ealignt{\begin{talign}#1\end{talign}}
\let\originalleft\left
\let\originalright\right
\renewcommand{\left}{\mathopen{}\mathclose\bgroup\originalleft}
\renewcommand{\right}{\aftergroup\egroup\originalright}
\def\tinycitep*#1{{\tiny\citep*{#1}}}
\def\tinycitealt*#1{{\tiny\citealt*{#1}}}
\def\tinycite*#1{{\tiny\cite*{#1}}}
\def\smallcitep*#1{{\scriptsize\citep*{#1}}}
\def\smallcitealt*#1{{\scriptsize\citealt*{#1}}}
\def\smallcite*#1{{\scriptsize\cite*{#1}}}
\def\<{\left\langle} 
\def\>{\right\rangle}
\DeclareSymbolFont{rsfs}{U}{rsfs}{m}{n}
\DeclareSymbolFontAlphabet{\mathscrsfs}{rsfs}
\newcommand{\cmark}{\ding{51}}
\newcommand{\xmark}{\ding{55}}
\DeclareMathOperator*{\argmin}{arg\,min}
\newtheorem{mydefinition}{Definition}
\newtheorem{claim}{Claim}
\newtheorem{assumption}{Assumption}
\newtheorem{theorem}{Theorem}[section]
\newtheorem{proposition}{Proposition}[section]
\newtheorem{lemma}{Lemma}[section]
\newtheorem{corollary}{Corollary}[section]
\date{}
\begin{document}

\title{Generalization Bounds for Stochastic Gradient Descent \\via Localized $\varepsilon$-Covers}

 \author{
 Sejun Park\thanks{
  Department of Artificial Intelligence at
  Korea University, \texttt{sejun.park000@gmail.com}
 }
 \ \ \ \ \ \ 
  Umut \c{S}im\c{s}ekli\thanks{DI ENS, Ecole Normale Supérieure, Université PSL, CNRS, INRIA \texttt{umut.simsekli@inria.fr}}
 \ \ \ \ \ \
 Murat A. Erdogdu\thanks{
  Department of Computer Science at
  University of Toronto, and Vector Institute, \texttt{erdogdu@cs.toronto.edu}
 }
}

\maketitle

\begin{abstract}%
\noindent In this paper, we propose a new covering technique localized for the trajectories of SGD. This localization provides an algorithm-specific complexity measured by the covering number, which can have dimension-independent cardinality in contrast to standard uniform covering arguments that result in exponential dimension dependency.
Based on this localized construction, we show that if the objective function is a finite perturbation of a piecewise strongly convex and smooth function with $P$ pieces, i.e.\ non-convex and non-smooth in general, the generalization error can be upper bounded by $O(\sqrt{(\log n\log(nP))/n})$, 
where $n$ is the number of data samples.
In particular, this rate is independent of dimension and does not require early stopping and decaying step size.
Finally, we employ these results in various contexts and derive generalization bounds for multi-index linear models, multi-class support vector machines, and $K$-means clustering for both hard and soft label setups, improving the known state-of-the-art rates.
\end{abstract}

\vspace{-.05in}
\section{Introduction}\label{sec:intro}
\vspace{-.05in}
We consider the following stochastic optimization problem
\begin{align}
    \min_{\theta\in\Theta} \Big\{F(\theta):=\mathbb E_{ Z}[f(\theta;Z)]\Big\},\label{eq:rm}
\end{align}
where $\theta$ represents the optimization parameter,
$\Theta\subset\mathbb R^d$ is a convex parameter domain,
$f(\,\cdot\,;z)$ is a possibly non-convex loss incurred by
a single data point $z \in \mathcal Z$, and $Z$ is a random variable on
$\mathcal Z$ following the data distribution. Since the distribution of $Z$
is unknown in general, the following proxy based on independent and
identically distributed (i.i.d.) samples $z_1,\dots,z_n$ of $Z$
is optimized instead
\begin{align}
    \min_{\theta\in\Theta} \Big\{\hat F(\theta):=\frac1n\sum_{i=1}^nf(\theta;z_i)\Big\}\label{eq:erm}.
\end{align}

\vspace{-.05in}
Given a learning algorithm $A(\,\cdot\,)$ mapping samples $z_1,\dots,z_n$ to an approximate solution of \eqref{eq:erm}, bounding the \emph{generalization error}\footnote{This quantity is also referred to as the generalization gap or the estimation error in the literature.}  $\hat F(A(z_1,\dots,z_n))- F(A(z_1,\dots,z_n))$ is a fundamental problem in learning theory. 
Classical \emph{algorithm-independent} results rely on uniform convergence over the entire domain $\Theta\subset \mathbb{R}^d$;
thus, they apply to any learning algorithm. However, these bounds often increase with the dimension $d$ \cite{shalev09,shalev14,feldman16}, becoming vacuous in the modern overparameterized regime \cite{zhang21b}.
To derive dimension-independent bounds, researchers
have been investigating \emph{algorithm-dependent} generalization properties, especially for commonly used training methods such as stochastic gradient descent (SGD) \cite{hardt15,soudry18,yun21}.

Notably, \emph{algorithmic stability} is a technique 
for deriving generalization bounds based on the properties of
a specific learning algorithm, which leverages that if a parameter learned by an algorithm is robust under a perturbation of samples $z_1,\dots,z_n$, then the generalization error at that parameter must be small~\cite{bousquet02}.
Based on this principle, several works proposed dimension-independent generalization bounds for SGD and its variants under various setups~\cite{hardt15,london17,feldman19,li19,bassily20,lei20,farnia21,lei21b,kozachkov22}.  Bounds derived using algorithmic stability is optimal for strongly convex and smooth functions \cite{shalev09} and convex and non-smooth functions \cite{bassily20,amir21a}.
Nevertheless, without global (strong) convexity, early stopping, and/or decaying step size,
generalization bounds based on algorithmic stability often diverge with the number of SGD iterations~\cite{hardt15,li19}, failing to explain the empirical observations.

To obtain (ambient) dimension-independent bounds that do not diverge with the number of iterations,
recent works proposed to utilize the low-dimensional fractal structures generated by the SGD iterates
whose complexity can be measured by a notion called the Hausdorff dimension \cite{falconer14}. 
In this context, \cite{csimcsekli20} showed that, under a continuous-time surrogate for SGD,
the generalization error can be bounded by $\widetilde O(\sqrt{d_H/n})$,
where $d_H$ denotes the Hausdorff dimension of the optimization trajectory.
This result was later extended to discrete-time iterated function systems by~\cite{camuto21}.
Here, the Hausdorff dimension can be smaller than the ambient dimension \cite{camuto21}, ultimately providing improved generalization bounds. However, both of these results are inherently asymptotic,
and rely on opaque assumptions that are hard to verify in practice.

In this paper, we propose a new framework for deriving generalization bounds for the projected SGD
with a constant step size and without requiring early stopping.
Inspired by the works \cite{csimcsekli20,camuto21},
our framework is based on a complexity measure of the trajectory of SGD, 
which can be quantified under standard verifiable conditions.
Our contributions are as follows.
\begin{itemize}[noitemsep,leftmargin=8pt]%
\item 
\textbf{Localized $\varepsilon$-covers for SGD.} 
Our first principle contribution is a covering technique localized for the trajectories of SGD.
This localization provides an algorithm-specific complexity measured by the covering number, which can have dimension-independent cardinality in contrast to standard covering arguments that result in exponential dimension dependency.

\item\textbf{Generalization bounds for SGD.} Based on this localized covering, we establish dimension-independent generalization bounds for SGD, for a rich class of non-convex loss functions $f$ whose gradients can be approximated by that of a piecewise strongly convex and smooth function $h$, i.e. $\|\nabla f(\theta;z)-\nabla h(\theta;z)\|\le\xi$ for some $\xi$. 
In particular, with high probability, we prove the bound
\begin{align}\label{eq:intro-disp}
\left|\hat F(\theta^{(t)})- F(\theta^{(t)})\right|=O\left(\sqrt{\frac{\log n\log(nP)}{n}}+\xi\right),
\end{align}
where $\theta^{(t)}$ denotes the parameter generated by $t$ SGD iterations for a sufficiently large $t$, and $P$ denotes the number of strongly convex pieces needed to approximate $f$. We further show that the gradient of any (piecewise) smooth function $f$ can be approximated with that of a piecewise strongly
convex and smooth function, demonstrating the wide applicability of the bound \eqref{eq:intro-disp}.
Finally in the special case $P=1$ and $\xi=0$,
our result reduces to a non-asymptotic bound where the complexity is captured by
the Hausdorff dimension of the invariant measure of SGD.

\item \textbf{Improved bounds in specific models.}
  We employ the above result to derive generalization bounds in several statistical models trained by SGD, including multi-index linear models, multi-class support vector machines, and $K$-means clustering with both hard and soft label setups, improving the previously known state-of-the-art generalization error bounds in this context.
 
\end{itemize}

\textbf{Notation and problem setup.}
For $k\in\mathbb N$, we denote $[k]:=\{1,\dots,k\}$
We use $\|\cdot\|$ to denote the $\ell_2$-norm.
For $\varepsilon>0$ and $\theta\in\mathbb R^d$, we use $\mathcal B_{\varepsilon}^d(\theta)$ to denote the $d$-dimensional closed $\ell_2$-ball of radius $\varepsilon$, centered at $\theta$. 
Given a set $\mathcal S\subseteq\mathbb R^d$ and $\varepsilon>0$, we say $\mathcal C_\varepsilon\subseteq\mathbb R^d$ is an ``$\varepsilon$-cover'' of $\mathcal S$ if $\mathcal S\subseteq\bigcup_{\theta\in\mathcal C_\varepsilon}\mathcal B_\varepsilon^d(\theta)$.

Given a step size $\eta>0$, an initial parameter $\theta^{(0)}\in\Theta$, and a randomly sampled index $i_t\in[n]$, 
the $t$-th iteration of the projected SGD performs the following update on the parameters \begin{align}\label{eq:sgd-update}
    \theta^{(t)}=g_{i_t}(\theta^{(t-1)}):=\Pi_{\Theta}\big(\theta^{(t-1)}-\eta\nabla f(\theta^{(t-1)};z_{i_t})\big)\ \ \text{ for }\ t=1,2,...,
\end{align}
where $\Pi_{\Theta}(\theta):=\argmin_{\theta^\prime\in\Theta}\|\theta^\prime-\theta\|$ denotes the Euclidean projection.
The domain $\Theta$ is convex; thus, the projection operation is unique. 
We note that the projection is not needed under the presence of an $\ell_2$-regularizer and Lipschitz continuity; see Section~\ref{sec:discussion} for more details. Throughout the paper, 
we use $\theta^{(t)}:=g_{i_t}\circ\cdots\circ g_{i_1}(\theta^{(0)})$ for possibly random indices $i_1,\dots,i_t$.

Lastly, we recall a few standard notions.
$f:\Theta\rightarrow\mathbb R$ is called ``$\alpha$-strongly convex''  and ``$\beta$-smooth'' respectively
if for all $\theta,\theta^\prime\in\Theta$, the following conditions are satisfied 
\begin{align*}
    f(\theta)-f(\theta^\prime)-\nabla f(\theta^\prime)^\top(\theta-\theta^\prime)\ge\frac\alpha2\|\theta-\theta^\prime\|^2\!,\ \ \ \text{ and }\ \ \ \|\nabla f(\theta)-\nabla f(\theta^\prime)\|\le\beta\|\theta-\theta^\prime\|.
\end{align*}
The function $f$ is called ``convex'' if it is $0$-strongly convex.

\section{Main results}\label{sec:mainresults}
We demonstrate our covering construction in Section~\ref{sec:strconvsmooth}
in the classical strongly convex and smooth case in which
the localization argument can be simplified by the contractivity of SGD.
We present our main generalization result on non-convex losses in Section~\ref{sec:piecestrconvsmooth}, and its implications in Section~\ref{sec:mainapplications}. 
\subsection{A localized covering construction: Strongly convex and smooth case}
\label{sec:strconvsmooth}
To motivate our approach, let us first briefly discuss the limitations of prior methods that are based on uniform convergence of empirical processes. 
Given a set of parameters $\Theta \subseteq\mathcal B_R^d(0)$, let $\mathcal C_\varepsilon$ be an  $\varepsilon$-cover of $\Theta$, i.e. $\Theta\subseteq\bigcup_{\phi\in\mathcal C_\varepsilon}\mathcal B_\varepsilon^d(\phi)$.
Uniform convergence over $\Theta$ can be 
established with high probability by simply applying the union bound over $\mathcal C_\varepsilon$,
which yields a generalization error bound depending on the cardinality of the cover $\sqrt{\log|\mathcal C_\varepsilon|}$. 
However, $\mathcal C_\varepsilon$ is typically independent of the algorithm being used, and standard $\varepsilon$-covers for $\Theta$ yield $|\mathcal C_\varepsilon|=(R/\varepsilon)^{\Omega(d)}$; thus, bounds based on covering numbers often grow with $\sqrt{d}$, which can be loose if $d$ is large.
To overcome this issue, we use the contractive properties of SGD in the strongly convex and smooth case and \emph{localize}
the $\varepsilon$-cover. Namely, instead of covering the entire feasible set $\Theta$,
we construct a cover that contains only the points that can be reached by SGD trajectories,
resulting in a covering number that is independent of the ambient dimension $d$.
We introduce the following sets produced by SGD trajectories.

\begin{mydefinition}\label{def:cover}
  We define the following two subsets of $\Theta$.
  \vspace{-.1in}
  \begin{itemize}[leftmargin=20pt,noitemsep]
  \item The set of points that can be reached by $T$ SGD iterations initialized at $\theta^{(0)}\in \Theta$,
    \begin{align*}
    \Psi_T(\theta^{(0)}) :=
    \{g_{i_T}\circ\cdots\circ g_{i_1}(\theta^{(0)}): i_1,\dots,i_T\in[n]\}.
    \end{align*}
  \item The set of points that can be reached by any $t\geq T$ SGD iterations initialized at $\theta^{(0)}\in \Theta$,
    \begin{align*}
    \Psi_{\geq T}(\theta^{(0)}) := \bigcup_{t\geq T}\Psi_t(\theta^{(0)}).
    \end{align*}
  \end{itemize}
\end{mydefinition}

For $\gamma\in(0,1)$, a function $g:\Theta\!\rightarrow\!\Theta$ is called ``$\gamma$-contractive'' if for all $\theta,\theta^\prime\in\Theta$, it satisfies $\|g(\theta)-g(\theta^\prime)\|\le\gamma{\|\theta-\theta^\prime\|}$. In the constant step-size case, SGD iterates converge to a distribution instead of a single point~\cite{dieuleveut20}, but their contractivity can still provide the following localization: {all possible SGD iterates after sufficiently many iterations can be $\varepsilon$-covered by $n^{O(\log(1/\varepsilon))}$ points.}

\begin{lemma}\label{lem:cover}
Suppose that $g_1,g_2,\dots$ are $\gamma$-contractive for some $\gamma \in (0,1)$.
 Then, for any initialization $\theta^{(0)}\in\Theta\subseteq\mathcal B_R^d(0) $ and for any {$\varepsilon >0$, for
 $T \coloneqq T_\varepsilon=\max\left\{\left\lceil\frac{\log(R/\varepsilon)}{\log(1/\gamma)}\right\rceil,0\right\}$}, 
 we have
  \begin{align*}
  \Psi_{\geq T}(\theta^{(0)}) \subseteq \bigcup_{\phi \in \Psi_T(0)}\mathcal B_\varepsilon^d(\phi).
  \end{align*}
\end{lemma}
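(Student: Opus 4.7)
The overall idea is to exploit the $\gamma$-contractivity of each $g_i$ to synchronize any long SGD trajectory from $\theta^{(0)}$ with a shorter trajectory of length exactly $T$ started from the origin. Because applying the same composition of $T$ contractive maps to two different starting points shrinks their distance by a factor $\gamma^T$, the choice $T=T_\varepsilon$ is exactly what is needed to bring this factor below $\varepsilon/R$.

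Concretely, fix any $\theta \in \Psi_{\geq T}(\theta^{(0)})$, so that $\theta = g_{i_t}\circ\cdots\circ g_{i_1}(\theta^{(0)})$ for some $t\geq T$ and indices $i_1,\dots,i_t\in[n]$. The plan is to define
\[
\phi := g_{i_t}\circ\cdots\circ g_{i_{t-T+1}}(0),
\]
which by construction belongs to $\Psi_T(0)$ since it uses exactly $T$ SGD steps starting from $0\in\Theta$ (the assumption $\Theta\subseteq\mathcal B_R^d(0)$ guarantees $0$ is a legitimate base point to compare against; if $0\notin\Theta$, the same argument goes through with any fixed element of $\Theta$, but here $0$ suffices as it is implicit in the statement). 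Writing $\psi := g_{i_{t-T}}\circ\cdots\circ g_{i_1}(\theta^{(0)})\in\Theta$ (an empty composition when $t=T$, in which case $\psi=\theta^{(0)}$), we have $\theta = g_{i_t}\circ\cdots\circ g_{i_{t-T+1}}(\psi)$ and $\phi = g_{i_t}\circ\cdots\circ g_{i_{t-T+1}}(0)$, i.e.\ the two points are images of $\psi$ and $0$ under the \emph{same} $T$-fold composition of $\gamma$-contractive maps.

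Iterating the contractivity bound $T$ times then gives
\[
\|\theta-\phi\| \le \gamma^T\,\|\psi-0\| \le \gamma^T R,
\]
using $\psi\in\Theta\subseteq\mathcal B_R^d(0)$ in the last step. The definition $T=\lceil\log(R/\varepsilon)/\log(1/\gamma)\rceil$ ensures $\gamma^T\le\varepsilon/R$, so $\|\theta-\phi\|\le\varepsilon$, proving $\theta\in\mathcal B_\varepsilon^d(\phi)$ with $\phi\in\Psi_T(0)$. Since $\theta\in\Psi_{\geq T}(\theta^{(0)})$ was arbitrary, the inclusion follows.

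Finally I would address the degenerate case $\varepsilon\ge R$, where the formula gives $T=0$. Here the convention $\Psi_0(0)=\{0\}$ (empty composition) together with $\Psi_{\ge 0}(\theta^{(0)})\subseteq\Theta\subseteq\mathcal B_R^d(0)\subseteq\mathcal B_\varepsilon^d(0)$ immediately yields the claim. The only mild obstacle is the bookkeeping for short trajectories ($t=T$) and for $T=0$, which is handled by the empty-composition conventions above; the substantive content is the one-line contractivity estimate. Note the argument uses nothing about how the indices $i_1,\dots,i_t$ are sampled, so it is purely geometric and covers \emph{all} realizable trajectories simultaneously.
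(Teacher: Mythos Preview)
Your proposal is correct and follows essentially the same approach as the paper's proof: you pick the same coupling point $\phi=g_{i_t}\circ\cdots\circ g_{i_{t-T+1}}(0)\in\Psi_T(0)$, apply $\gamma$-contractivity $T$ times to obtain $\|\theta-\phi\|\le\gamma^T\|\psi\|\le\gamma^T R\le\varepsilon$, and conclude. The only additions are your explicit handling of the edge cases $t=T$ and $T=0$, which the paper leaves implicit.
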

\begin{proof}
For $t\ge T$,   let $\theta^{(t)} \in   \Psi_{\geq T}(\theta^{(0)})$ such that $\theta^{(t)}:=g_{i_t}\circ\cdots\circ g_{i_1}(\theta^{(0)})$ for some $i_1,\dots,i_t\in[n]$.
  Let $\phi:=g_{i_t}\circ\cdots\circ g_{i_{t-T+1}}(0)$ and notice that $\phi \in \Psi_T(0)$ by construction. Then for $T \coloneqq T_\varepsilon$ and $\theta^{(t-T)}:=g_{i_{t-T}}\circ\cdots\circ g_{i_1}$, we have  
  \begin{align*}
    \|\theta^{(t)}-\phi\|\,{\le\gamma^T\|\theta^{(t-T)}-0\|}\le\gamma^TR\, \le\ \varepsilon,
  \end{align*}
 {since each $g_{i}$ is $\gamma$-contractive and $\|\theta^{(t-T)}\|\le R$ by \eqref{eq:sgd-update}.}
 This implies
  $
  \theta^{(t)} \in \bigcup_{\phi \in \Psi_T(0)}\mathcal B_\varepsilon^d(\phi).
  $ 
\end{proof}
For strongly convex and smooth $f(\,\cdot\,;z)$, an SGD update $g_i$ with
a sufficiently small step size is contractive~\cite{dieuleveut20}; 
that is, applying $g_i$ to any two points decreases the distance between them (see Appendix~\ref{sec:contractivity}
for a formal derivation). 
Therefore, for any $\varepsilon>0$, there exists $T\coloneqq T_\varepsilon$ such that applying $T$ \emph{synchronously coupled} SGD updates that use the same sample at each iteration can make the distance between the initial points smaller than $\varepsilon$; see Figure~\ref{fig:sample-dependent} (left). 
This observation implies that the set $\Psi_T(0)$ of all parameters that can be generated by $T$ SGD updates when initialized at the origin $\varepsilon$-\emph{covers} the set $\Psi_{\geq T}(\theta^{(0)})$ of all parameters that can be generated by any $t\ge T$ SGD updates for an arbitrary initialization $\theta^{(0)}$; see Figure~\ref{fig:sample-dependent} (right). 
In contrast to algorithm-independent covers of $\Theta$ that scale with $|\mathcal C_\varepsilon|=(R/\varepsilon)^{\Omega(d)}=e^{\widetilde\Omega(d)}$,
we obtain $|\Phi_T(0)|\le n^{T}=e^{\widetilde O(1)}$ which is independent of the dimension $d$ and only polynomial in the number of samples $n$.

\begin{figure}
\centering
\includegraphics[width=\textwidth]{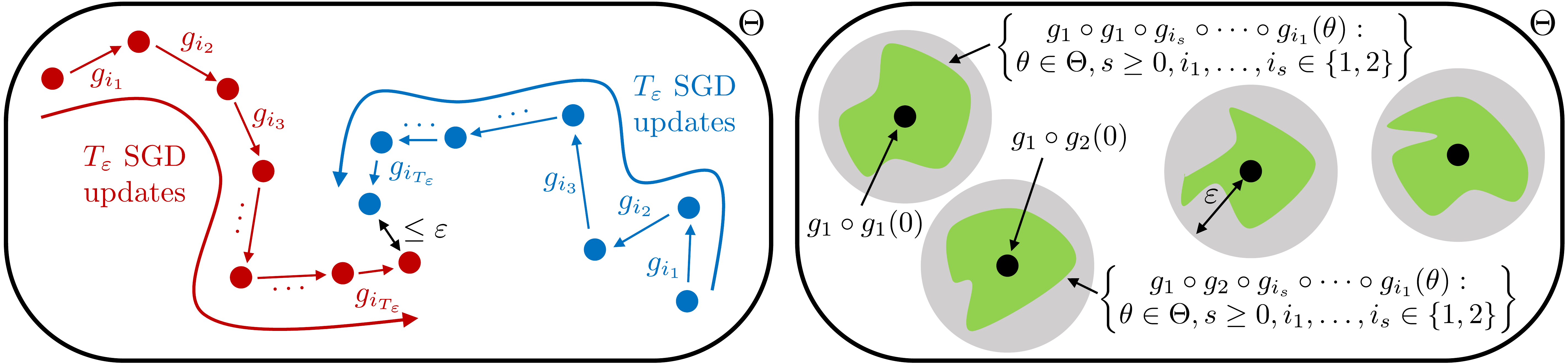}
\caption{Left: illustration of $T=T_\varepsilon$ coupled projected SGD updates from two distinct points. Right: illustration of our localized cover $\Psi_T$ covering $\Psi_{\ge T}$ with $T=2$ and $n=2$ and $|\Psi_T|=n^T=4$.}
\label{fig:sample-dependent}
\end{figure}

We make the following additional assumptions on the loss function $f$.
\begin{assumption}[Weak Lipschitz continuity]\label{asm:lipschitz}
  For $L>0$, there exists $h:\Theta\rightarrow\mathbb R$ such that for all $\theta,\theta^\prime\in\Theta$ and $z\in\mathcal Z$, $
      |f(\theta;z)-h(\theta)-(f(\theta^\prime;z)-h(\theta^\prime))|\le L\|\theta-\theta^\prime\|.
  $
\end{assumption}
Notice that $h=0$ reduces to the classical Lipshitz continuity, but the above condition is more general.
\begin{assumption}[Bounded deviation]\label{asm:boundedval}
  For $B\!>\!0$, for all  $\theta\!\in\!\Theta$, $
      \big|\sup_{z\in\mathcal Z}f(\theta;z)-\inf_{z\in\mathcal Z}f(\theta;z)\big|\!\le\! B.
  $
\end{assumption}
We note that Assumptions \ref{asm:lipschitz} \& \ref{asm:boundedval} (or their variants) both appear in several algorithmic stability-based results; see e.g. \cite[Thm~3.10]{hardt15}. 
We also highlight that these conditions are invariant to adding regularizers (e.g. consider $f(\theta;z)\leftarrow f(\theta;z)+\frac\lambda2\|\theta\|^2$) which will be useful in Section~\ref{sec:mainapplications}.

\begin{theorem}\label{thm:strconvsmooth}
  Suppose that Assumptions \ref{asm:lipschitz} \& \ref{asm:boundedval} hold and there exist $\alpha,\beta>0$ such that
  $f(\,\cdot\,;z)$ is $\alpha$-strongly convex and $\beta$-smooth on $\Theta\subseteq\mathcal B_R^d(0)$ for all $z\in\mathcal Z$.
  For any $\eta\in(0,2/\beta)$, let $\gamma:=\sqrt{1-2\alpha\eta+\alpha\beta\eta^2}$,
  {$T:=\max\left\{\left\lceil\frac{\log(2LRn)}{\log(1/\gamma)}\right\rceil,0\right\}$.} 
  Then, with probability at least $1-\delta$, for any $\theta^{(0)}\in\Theta$, $t\ge T$, and $i_1,\dots,i_t\in[n]$, SGD iterate $\theta^{(t)}$ satisfies
  \begin{align}\label{eq:strsmth-1}
    \left|\hat F(\theta^{(t)})-F(\theta^{(t)})\right|\le\frac{BT+1}{n}+B\sqrt{\frac{T\log n+\log(2/\delta)}{2n}}.
  \end{align}
\end{theorem}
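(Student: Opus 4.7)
The plan is to combine Lemma~\ref{lem:cover} with a pointwise Hoeffding bound and a union bound over the (data-dependent but small) cover $\Psi_T(0)$.

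First, I would verify that under $\alpha$-strong convexity and $\beta$-smoothness with $\eta \in (0, 2/\beta)$, each SGD map $g_i$ is $\gamma$-contractive with $\gamma = \sqrt{1 - 2\alpha\eta + \alpha\beta\eta^2}$ --- a classical consequence of the co-coercivity of $\nabla f(\,\cdot\,;z)$ and the nonexpansiveness of $\Pi_\Theta$ (formally derived in Appendix~\ref{sec:contractivity}). Lemma~\ref{lem:cover} can then be applied at the carefully chosen scale $\varepsilon := 1/(2Ln)$, which matches the stated $T$ and yields $\Psi_{\geq T}(\theta^{(0)}) \subseteq \bigcup_{\phi \in \Psi_T(0)} \mathcal B_\varepsilon^d(\phi)$ with $|\Psi_T(0)| \leq n^T$, since each of the $T$ compositions has at most $n$ choices of sample index. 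Assumption~\ref{asm:lipschitz} then transfers the generalization gap from any iterate $\theta^{(t)} \in \Psi_{\geq T}(\theta^{(0)})$ to a cover point $\phi$ within distance $\varepsilon$: the $h(\theta)$ part is independent of $z$ and so cancels in $\hat F - F$, yielding
\begin{align*}
\big|(\hat F(\theta^{(t)}) - F(\theta^{(t)})) - (\hat F(\phi) - F(\phi))\big| \leq 2L\varepsilon = \frac{1}{n}.
\end{align*}
It therefore suffices to control $|\hat F(\phi) - F(\phi)|$ uniformly over $\phi \in \Psi_T(0)$.

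The main obstacle is that each $\phi_{\mathbf{i}} := g_{i_T} \circ \cdots \circ g_{i_1}(0)$ is itself data-dependent, so one cannot apply Hoeffding's inequality directly. The key observation to exploit is that $\phi_{\mathbf{i}}$ depends on the sample only through $\{z_{i_1}, \ldots, z_{i_T}\}$, i.e.\ through at most $T$ of the $n$ samples. For fixed $\mathbf{i}$, letting $S_{\mathbf{i}} := \{i_1, \ldots, i_T\}$, I would split
\begin{align*}
\hat F(\phi_{\mathbf{i}}) - F(\phi_{\mathbf{i}}) = \frac1n \sum_{j \in S_{\mathbf{i}}} \big(f(\phi_{\mathbf{i}};z_j) - \mathbb E_Z f(\phi_{\mathbf{i}};Z)\big) + \frac1n \sum_{j \notin S_{\mathbf{i}}} \big(f(\phi_{\mathbf{i}};z_j) - \mathbb E_Z f(\phi_{\mathbf{i}};Z)\big),
\end{align*}
bound the first (at most $T$) terms deterministically by $TB/n$ using Assumption~\ref{asm:boundedval}, and bound the second term by Hoeffding's inequality applied \emph{conditionally} on $\{z_j : j \in S_{\mathbf{i}}\}$ --- since conditionally the remaining samples are i.i.d., independent of $\phi_{\mathbf{i}}$, and produce centered summands of range at most $B$.

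Finally, a union bound over all $n^T$ index sequences $\mathbf{i} \in [n]^T$ replaces the confidence $\delta$ in the Hoeffding step by $\delta/n^T$, contributing a factor $\log(2n^T/\delta) = T\log n + \log(2/\delta)$ under the square root. Adding the deterministic $TB/n$ from the dependent samples and the $1/n$ from the cover-radius approximation yields the $(BT+1)/n$ term, and altogether produces~\eqref{eq:strsmth-1}. The bound extends uniformly to every $t \geq T$, every $\theta^{(0)} \in \Theta$, and every $i_1, \ldots, i_t \in [n]$ because the high-probability event is indexed only by $\mathbf{i} \in [n]^T$ and every such iterate lies within $\varepsilon$ of some $\phi_{\mathbf{i}} \in \Psi_T(0)$.
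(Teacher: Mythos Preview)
Your proposal is correct and follows essentially the same route as the paper: the paper packages your steps 4--8 (the Lipschitz transfer, the split into dependent/independent samples, conditional Hoeffding, and the union bound over $n^T$ sequences) into a general covering lemma (Theorem~\ref{thm:covering}) and then instantiates it with $\Phi_{T,\varepsilon}=\Psi_T(0)$ and $\varepsilon=1/(2Ln)$, but the underlying argument is identical to yours.
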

\textbf{Remark.} A few remarks are in order. 
For stochastic convex optimization, the best achievable bound via uniform convergence over $\Theta$ is worse than the algorithmic stability-based bounds for SGD by a factor of $\sqrt{d}$~\cite{shalev09,feldman16,feldman19}; however, by localizing the uniform convergence argument, we are able to obtain a dimension-independent bound.
We note that this high-probability result is still not directly comparable to most stablity-based bounds which are given in expectation. An exception is \cite[Thm~4.5]{feldman19} which obtains near-optimal bounds for SGD in the convex case by tuning the number of iterations to be taken, obtaining a rate of $O(\log(n/\delta)^2\log (n) /\sqrt{n})$. 
In contrast, our bound reads $O(\sqrt{\log(n)^2\log(1/\delta)/n})$ which is better by a logarithmic factor and holds for any sufficiently large number of SGD iterations.
We emphasize that the bound \eqref{eq:strsmth-1} holds for \emph{the union of all trajectories of SGD} generated by $t\ge T$ iterations, whereas the stability-based results often consider a single parameter generated by SGD. A similar setup is considered in Corollary~\ref{cor:strconvsmooth} where we establish improved bounds on the generalization error removing the logarithmic factor. 
{Lastly, we note that our localized covering can also be used for deriving bounds in expectation; see Appendix~\ref{sec:expbound}.}

The proof of Theorem~\ref{thm:strconvsmooth} follows from three steps. $i)$ Lemma~\ref{lem:cover} implies that any $\theta^{(t)}$ for $t\ge T$ can be approximated by some parameter $\theta^{(T)}$ generated by $T$ SGD updates initialized at 0. $ii)$ $\hat F(\theta^{(T)})$ concentrates around $F(\theta^{(T)})$ since $\theta^{(T)}$
is \emph{almost} independent of the samples, i.e. it depends on at most $T=O(\log(n))$ of $n$ samples. $iii)$ The empirical process $|\hat F(\theta^{(T)})-F(\theta^{(T)})|$ uniformly converges over the set $\Psi_T(0)$ which has a dimension-free cardinality. See Appendix~\ref{sec:pfthm:strconvsmooth} for details.

While the bound in Theorem~\ref{thm:strconvsmooth} holds uniformly over all possible initializations, it can be tightened by considering a single realization of $\theta^{(0)}$ in the following corollary. 
\begin{corollary}\label{cor:strconvsmooth}
  Assume the setup in Theorem~\ref{thm:strconvsmooth}. 
  Then, for any $\theta^{(0)}\in\Theta$, $t\ge T$, and $i_1,\dots,i_t\in[n]$, with probability at least $1-\delta$, we have 
  \begin{align*}
    \left|\hat F(\theta^{(t)})-F(\theta^{(t)})\right|\le\frac{BT+1}{n}+B\sqrt{\frac{\log(2/\delta)}{2n}}.
  \end{align*}
\end{corollary}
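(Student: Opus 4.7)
The strategy is to mimic the three-step argument sketched below Theorem~\ref{thm:strconvsmooth}, but to replace the uniform-convergence step over $\Psi_T(0)$ by a pointwise concentration argument that exploits the limited sample-dependence of the surrogate point. Because here we only need to control a \emph{single} trajectory rather than the union over all trajectories of length $t\ge T$, the $\sqrt{T\log n/n}$ contribution coming from the union bound over $|\Psi_T(0)|\le n^T$ should no longer appear; instead the $T$-dependence will only be felt through the deterministic $BT/n$ term.

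Fix any $\theta^{(0)}\in\Theta$, any $t\ge T$, and any $i_1,\dots,i_t\in[n]$, and set $\phi := g_{i_t}\circ\cdots\circ g_{i_{t-T+1}}(0)\in\Psi_T(0)$. I would first invoke Lemma~\ref{lem:cover} with $\varepsilon = 1/(2Ln)$, which is exactly the choice that recovers the value of $T$ in the hypothesis of Theorem~\ref{thm:strconvsmooth}, to get $\|\theta^{(t)}-\phi\|\le 1/(2Ln)$. Combining this with Assumption~\ref{asm:lipschitz} and the cancellation of the $h$-terms in the difference of the two generalization gaps yields
\[
\bigl|(\hat F-F)(\theta^{(t)})-(\hat F-F)(\phi)\bigr|\;\le\;2L\|\theta^{(t)}-\phi\|\;\le\;\frac{1}{n},
\]
so it suffices to control $|\hat F(\phi)-F(\phi)|$.

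Next, I would exploit that $\phi$ is a measurable function of at most $T$ samples, namely those indexed by $S:=\{i_{t-T+1},\dots,i_t\}\subseteq[n]$, with $|S|\le T$. Decomposing
\[
\hat F(\phi)-F(\phi)\;=\;\frac{1}{n}\sum_{i\in S}\bigl[f(\phi;z_i)-F(\phi)\bigr]\;+\;\frac{1}{n}\sum_{i\notin S}\bigl[f(\phi;z_i)-F(\phi)\bigr],
\]
the first sum is bounded deterministically by $B|S|/n\le BT/n$ using Assumption~\ref{asm:boundedval}. For the second, conditioning on $\{z_i:i\in S\}$ freezes $\phi$ while leaving the remaining $n-|S|$ samples i.i.d.\ from the data distribution and independent of $\phi$; each summand is then mean-zero with range at most $B$ by Assumption~\ref{asm:boundedval}. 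A conditional Hoeffding inequality applied to $n-|S|$ bounded independent variables and the elementary estimate $\sqrt{(n-|S|)/n^2}\le 1/\sqrt{n}$ deliver
\[
\Bigl|\tfrac{1}{n}\textstyle\sum_{i\notin S}[f(\phi;z_i)-F(\phi)]\Bigr|\;\le\;B\sqrt{\log(2/\delta)/(2n)}
\]
with conditional probability at least $1-\delta$, which integrates to the same unconditional bound. Assembling the three pieces with the triangle inequality gives exactly $(BT+1)/n+B\sqrt{\log(2/\delta)/(2n)}$.

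The main subtle point is the conditioning argument: $\phi$ is a random element of $\Psi_T(0)$ jointly with the samples, so one must verify that after conditioning on $\{z_i:i\in S\}$ the remaining $z_i$'s retain their i.i.d.\ product law and are independent of $\phi$. This holds because the $z_i$'s are i.i.d.\ and, by construction, $\phi$ is a measurable function of only $\{z_i:i\in S\}$. Once this is in place, every other step is a direct specialization of the Theorem~\ref{thm:strconvsmooth} computations with the $\log|\Psi_T(0)|\le T\log n$ union-bound factor removed.
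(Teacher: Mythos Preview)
Your proposal is correct and essentially matches the paper's proof: the paper constructs the same surrogate $\phi=g_{i_t}\circ\cdots\circ g_{i_{t-T+1}}(0)$, uses contractivity to get $\|\theta^{(t)}-\phi\|\le 1/(2Ln)$, and then applies its general covering theorem (Theorem~\ref{thm:covering}) with the singleton cover $\Phi_{T,\varepsilon}=\{\phi\}$, which unwinds to exactly your Lipschitz transfer, $BT/n$ deterministic bound on the $S$-indexed sum, and conditional Hoeffding on the remaining $n-|S|$ samples. The only cosmetic difference is that you carry out these steps explicitly rather than invoking the packaged theorem.
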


For sufficiently large $n$, the bound in Corollary~\ref{thm:strconvsmooth} coincides with the tight concentration bound at a fixed $\theta\in\Theta$; however, it has the limitation that at least $t \geq T$ SGD updates are required. 
This is remedied in the next result which does not require strong convexity and smoothness. 
\begin{corollary}\label{thm:early}
  Suppose that Assumption~\ref{asm:boundedval} holds. Then, for any $\theta^{(0)}\in\Theta$, $t\ge0$, and $i_1,\dots,i_t\in[n]$, with probability at least $1-\delta$, we have
    \begin{align*}
    \left|\hat F(\theta^{(t)})-F(\theta^{(t)})\right|\le\frac{Bt}{n}+B\sqrt{\frac{\log(2/\delta)}{2n}}.
  \end{align*}
\end{corollary}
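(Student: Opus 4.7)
The plan is to exploit the fact that after only $t$ SGD iterations, the iterate $\theta^{(t)}$ depends on at most $t$ of the $n$ data samples; the remaining $n-t$ samples are still i.i.d.\ draws independent of $\theta^{(t)}$ and can be used for a Hoeffding-type concentration. Since the indices $i_1,\dots,i_t$ are fixed in the statement, $\theta^{(t)}=g_{i_t}\circ\cdots\circ g_{i_1}(\theta^{(0)})$ is a deterministic function of $\{z_{i_k}\}_{k=1}^t$, and hence independent of $\{z_j: j\in [n]\setminus S\}$, where $S:=\{i_1,\dots,i_t\}\subseteq[n]$ satisfies $|S|\le t$.

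I would split the empirical process as
\begin{align*}
\hat F(\theta^{(t)})-F(\theta^{(t)})=\frac{1}{n}\sum_{i\in S}\bigl(f(\theta^{(t)};z_i)-F(\theta^{(t)})\bigr)+\frac{1}{n}\sum_{i\notin S}\bigl(f(\theta^{(t)};z_i)-F(\theta^{(t)})\bigr).
\end{align*}
For the first (``dependent'') sum I use Assumption~\ref{asm:boundedval} directly: since $F(\theta^{(t)})=\mathbb{E}[f(\theta^{(t)};Z)]$ lies between $\inf_z f(\theta^{(t)};z)$ and $\sup_z f(\theta^{(t)};z)$, each summand is at most $B$ in absolute value, giving a deterministic bound of $|S|B/n\le tB/n$, which matches the first term in the target inequality.

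For the second (``independent'') sum, conditional on the samples indexed by $S$, the variables $\{f(\theta^{(t)};z_i)\}_{i\notin S}$ are independent with common mean $F(\theta^{(t)})$ and range at most $B$. Applying Hoeffding's inequality to this sum of $n-|S|$ bounded independent centered summands yields, with probability at least $1-\delta$,
\begin{align*}
\frac{1}{n}\left|\sum_{i\notin S}\bigl(f(\theta^{(t)};z_i)-F(\theta^{(t)})\bigr)\right|\le\frac{n-|S|}{n}\cdot B\sqrt{\frac{\log(2/\delta)}{2(n-|S|)}}\le B\sqrt{\frac{\log(2/\delta)}{2n}},
\end{align*}
where the last inequality uses $(n-|S|)/n^2\le 1/n$. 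Combining the two bounds via the triangle inequality gives the claim.

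There is no real obstacle here; the only subtlety is setting up the conditional independence correctly (fixed $i_1,\dots,i_t$ makes $\theta^{(t)}$ measurable with respect to $\{z_i:i\in S\}$ only), after which the argument is a clean split into a trivial sample-dependence term and a standard Hoeffding term. Note that no smoothness, convexity, or contractivity assumption is needed, which is why the statement holds for all $t\ge 0$; the price is that the dependent-sample term $tB/n$ grows with $t$, so the bound is only useful in the early-iteration regime, complementing Theorem~\ref{thm:strconvsmooth} and Corollary~\ref{cor:strconvsmooth}.
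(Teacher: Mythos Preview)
Your proof is correct and is essentially the same argument the paper gives: the paper invokes its general covering result (Theorem~\ref{thm:covering}) with the trivial choices $\Phi_n=\Phi_{T,\varepsilon}=\{\theta^{(t)}\}$, $\varepsilon=0$, $T\leftarrow t$, and the proof of that theorem performs exactly the split you describe---bounding the $|S|$ dependent terms by $B|S|/n$ via Assumption~\ref{asm:boundedval} and applying Hoeffding to the remaining $n-|S|$ independent terms. The only difference is that you write the argument out directly rather than routing through the abstract lemma.
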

The bound in Corollary~\ref{thm:early} requires early stopping since it diverges as $t\rightarrow\infty$.
However, since Corollary~\ref{thm:early} only requires Assumption~\ref{asm:boundedval}, it can be easily combined with other results, e.g. Corollary~\ref{cor:strconvsmooth}, 
to provide a generalization bound that holds for any number of SGD iterations.
The proofs of Corollaries \ref{cor:strconvsmooth} and \ref{thm:early} are presented in Appendices~\ref{sec:pfcor:strconvsmooth} and \ref{sec:pfthm:early}, respectively.

\textbf{Relation to fractal dimension.} There is an interesting connection to be made between Theorem~\ref{thm:strconvsmooth} and the Hausdorff dimension of the support $\mu$ of the stationary distribution of (projected) SGD. For example, suppose that $\eta\in(0,1)$, $\Theta=\mathcal B_R^d(0)$, $f(\theta;z)=\frac12\|\theta-\theta^*_z\|^2$ for some $\theta^*_z\in\mathcal B_R^d(0)$, 
and $\|\theta^*_{z_i}-\theta^*_{z_j}\|\ge2\gamma R$ for all $i\ne j$.
Here, the last assumption can be satisfied with high probability if $d,\eta$ are large enough and $\theta^*_z\sim N(0,\sigma^2I)$, i.e. zero-mean Gaussian with covariance $\sigma^2I$;
see e.g. \cite[Appendix~C]{park21}. 
Under these assumptions, 
the bound in Theorem \ref{thm:strconvsmooth} can be reformulated as
\begin{align}
  \left|\hat F(\theta^{(t)})-F(\theta^{(t)})\right|\le\frac{BT+1}{n}+B\sqrt{\frac{\big\lceil d_H+\frac{\log 2LR}{\log(1/\gamma)}\big\rceil\log n+\log(2/\delta)}{2n}},\label{eq:fractalbound}
\end{align}
where $d_H$ denotes the Hausdorff dimension of $\mu$.
The precise definition of $d_H$ and the derivation of \eqref{eq:fractalbound} are presented in Appendix~\ref{sec:pfeq:fractalbound}.
In \eqref{eq:fractalbound}, $d_H$ replaces the ambient dimension $d$ that appear in algorithm-independent bounds~\citep{shalev14}.
We note that such a connection between the fractal dimension and generalization bounds has been also studied in \citep{csimcsekli20,camuto21}; however, their bounds are asymptotic and require  non-trivial assumptions that are not easy to verify in practice.

{\textbf{Generalization due to contractivity.} 
Contractivity of SGD has also been used to derive generalization bounds in the concurrent work~\citep{kozachkov22}. Although the localized covering construction in Lemma~\ref{lem:cover} relies on the same principle,
our results differ from those in \citep{kozachkov22} on two key aspects.
First and foremost, as we shall see in Section~\ref{sec:piecestrconvsmooth}, the localized covering argument used in Lemma~\ref{lem:cover} can also be applied to the non-convex case in which the uniform stability-based argument used in \cite{kozachkov22} provably breaks down (Appendix~\ref{sec:stabilityfail}); thus, extending the results of \cite{kozachkov22} to cover non-convex objectives is highly non-trivial. 
Further, the generalization bounds in \citep{kozachkov22} are provided in expectation, i.e. $|\mathbb E[\hat F(\theta^{(t)})-F(\theta^{(t)})]|\le O(1/n)$, whereas we provide high-probability bounds, i.e. $|\hat F(\theta^{(t)})-F(\theta^{(t)})| \le \widetilde O(1/\sqrt{n})$. When translated to bounds in expectation, our results read $|\mathbb E[\hat F(\theta^{(t)})-F(\theta^{(t)})]| \le \widetilde O(1/n)$ and $\mathbb E[|\hat F(\theta^{(t)})-F(\theta^{(t)})|] \le\widetilde O(1/\sqrt{n})$; see Appendices~\ref{sec:expbound}~\&~\ref{sec:comparison}. 
}

\subsection{Non-convex case: Perturbations of piecewise strongly convex and smooth functions}\label{sec:piecestrconvsmooth}
Algorithmic stability technique yields (near) optimal rates for strongly convex objectives;
however, when applied to non-convex functions, 
the resulting bounds often diverge with the number of SGD iterations \citep{hardt15,li19}.
The localized covering construction introduced in the previous section remedies this issue, providing more stable generalization bounds. Specifically, we establish a dimension-independent generalization bound for functions that are finite perturbations of piecewise strongly convex and smooth functions. 
We further prove an approximation result and show that any smooth non-convex function can be approximated with a piecewise strongly convex function. Since piecewise strongly convex and smooth functions may not be differentiable on the entire $\Theta$, we will use the auxiliary gradient in Definition~\ref{def:piecestrconvsmooth} as a surrogate for the SGD update \eqref{eq:sgd-update}.

\begin{mydefinition}\label{def:piecestrconvsmooth}
  $f$ is ``piecewise $\alpha$-strongly convex and $\beta$-smooth with $P$ pieces on $\Theta$'' if there exists a partition $\{\mathcal P_{1},\dots,\mathcal P_{P}\}$ of $\Theta$ and $\alpha$-strongly convex and $\beta$-smooth functions $h_1,\dots,h_P$ on $\Theta$ such that $f=h_p$ on $\mathcal P_{p}$ for all $p\in[P]$. 
  We also define $\nabla f(\theta):=\nabla h_p(\theta)$ if $\theta\in\mathcal P_p$.
\end{mydefinition}
Piecewise strongly convex and smooth objectives are widely used in machine learning applications. For example, the objective of learning a single layer of a ReLU network with $\ell_2$-regularization is piecewise strongly convex and smooth. Furthermore, the objective of learning an entire ReLU network is also piecewise strongly convex and smooth on small loss regions \citep{milne18}.
Such observations easily extend to more general settings, e.g. an objective function defined as the summation of a piecewise linear loss (e.g. hinge loss) and $\ell_2$-regularization.
However, if we further allow for finite perturbations, 
any piecewise smooth non-convex function can be covered within this framework.

\begin{proposition}\label{prop:smoothapprox}
For any piecewise $\beta^\prime$-smooth $f$ with $Q$ pieces on $\Theta\subset\mathcal B_R^d(0)$, and for any $\xi>0$ and
$0<\alpha\le\beta$,
there exists a piecewise $\alpha$-strongly convex and $\beta$-smooth $h$ with at most $Q(3(\beta+\beta^\prime)R/\xi)^d$ pieces such that $\|\nabla f(\theta)-\nabla h(\theta)\|\le\xi$ for all $\theta\in\Theta$.
\end{proposition}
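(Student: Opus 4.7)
The plan is a direct construction: refine the given partition by a fine $\varepsilon$-covering of $\Theta$, and on each refined cell replace $f$ by a carefully chosen quadratic whose Hessian equals $\alpha I$.

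Let $\{\mathcal Q_1,\dots,\mathcal Q_Q\}$ be the partition and $g_1,\dots,g_Q$ the $\beta'$-smooth functions on $\Theta$ witnessing that $f$ is piecewise $\beta'$-smooth with $Q$ pieces (so $f=g_q$ and $\nabla f=\nabla g_q$ on $\mathcal Q_q$). Set
\[
\varepsilon := \frac{\xi}{\beta+\beta'}.
\]
First I would invoke the standard volume argument to produce an $\varepsilon$-covering $\{\theta_{0,1},\dots,\theta_{0,N}\}$ of $\Theta\subseteq\mathcal B_R^d(0)$ of cardinality $N\le (3R/\varepsilon)^d = \bigl(3(\beta+\beta')R/\xi\bigr)^d$. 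Denote the corresponding closed balls by $\mathcal B_j:=\mathcal B_\varepsilon^d(\theta_{0,j})$, so $\Theta\subseteq\bigcup_{j\le N}\mathcal B_j$.

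Next I would build the refined partition by intersecting the two: for each pair $(q,j)\in[Q]\times[N]$, let
\[
\mathcal P_{q,j} := \mathcal Q_q \cap \Bigl(\mathcal B_j\setminus\textstyle\bigcup_{j'<j}\mathcal B_{j'}\Bigr),
\]
keeping only those that are nonempty. These sets are disjoint, cover $\Theta$, and there are at most $QN\le Q\bigl(3(\beta+\beta')R/\xi\bigr)^d$ of them, which matches the claimed count. On each such cell $\mathcal P_{q,j}$, define the quadratic
\[
h_{q,j}(\theta) := g_q(\theta_{0,j}) + \nabla g_q(\theta_{0,j})^\top(\theta-\theta_{0,j}) + \frac{\alpha}{2}\|\theta-\theta_{0,j}\|^2,
\]
which is $\alpha$-strongly convex and $\alpha$-smooth (hence $\beta$-smooth, since $\alpha\le\beta$) on all of $\Theta$, as required by Definition~\ref{def:piecestrconvsmooth}. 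Set $h := h_{q,j}$ on $\mathcal P_{q,j}$.

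Finally I would verify the gradient estimate. Fix $\theta\in\mathcal P_{q,j}$; then $\theta\in\mathcal Q_q$ (so $\nabla f(\theta)=\nabla g_q(\theta)$) and $\theta\in\mathcal B_j$ (so $\|\theta-\theta_{0,j}\|\le\varepsilon$). Using $\nabla h(\theta)=\nabla g_q(\theta_{0,j})+\alpha(\theta-\theta_{0,j})$ together with $\beta'$-smoothness of $g_q$,
\[
\|\nabla f(\theta)-\nabla h(\theta)\|
\le \|\nabla g_q(\theta)-\nabla g_q(\theta_{0,j})\| + \alpha\|\theta-\theta_{0,j}\|
\le (\beta'+\alpha)\varepsilon \le (\beta+\beta')\varepsilon = \xi.
\]

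I expect the only subtle point is that the reference center $\theta_{0,j}$ need not lie in the same $f$-piece $\mathcal Q_q$ as $\theta$; this is what forces the quadratic to be anchored at $\theta_{0,j}$ using the globally defined $g_q$ rather than at some point of the cell itself. Using $g_q$ (which is $\beta'$-smooth on all of $\Theta$ by hypothesis) rather than $f$ at $\theta_{0,j}$ is exactly what makes the Lipschitz bound on $\nabla g_q(\theta)-\nabla g_q(\theta_{0,j})$ legitimate, and no other step involves any nontrivial estimate.
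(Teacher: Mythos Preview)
Your proof is correct and follows essentially the same construction as the paper's: refine the given $Q$-piece partition by an $\varepsilon$-cover of $\Theta$ with $\varepsilon=\xi/(\beta+\beta')$, and on each refined cell replace $f$ by a quadratic anchored at the cover center using the globally defined smooth piece $g_q$. The only cosmetic difference is that the paper takes the quadratic coefficient to be $\tfrac{\beta}{2}$ (yielding a $\beta$-strongly convex, $\beta$-smooth piece, hence $\alpha$-strongly convex) whereas you take $\tfrac{\alpha}{2}$ (yielding an $\alpha$-strongly convex, $\alpha$-smooth piece, hence $\beta$-smooth); both choices satisfy the statement and give the same piece count and the same gradient bound via $(\alpha+\beta')\varepsilon\le(\beta+\beta')\varepsilon=\xi$.
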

For this general class of non-convex functions, we derive the following generalization bound.

\begin{theorem}\label{thm:approxpiecestrconvsmooth}
  Suppose that Assumptions~\ref{asm:lipschitz} \& \ref{asm:boundedval} hold and $\Theta\subseteq\mathcal B_R^d(0)$. Suppose further that there exists $h:\Theta \times \mathcal{Z}\to \mathbb{R}$ such that for any $z\in\mathcal Z$, $h(\,\cdot\,;z)$ is piecewise $\alpha$-strongly convex and $\beta$-smooth with $P$ pieces on $\Theta$ satisfying
  \begin{align}
      \|\nabla f(\theta;z)-\nabla h(\theta;z)\|\le\xi,\label{eq:approxpiecestrconvsmooth}
  \end{align}
  for all $\theta\in\Theta$. 
  For any $\eta\in(0,2/\beta)$, let $\gamma:=\sqrt{1-2\alpha\eta+\alpha\beta\eta^2}$.
  Then given $T\in\mathbb N$, with probability at least $1-\delta$, for any $\theta^{(0)}\in\Theta$, $t\ge T$, and $i_1,\dots,i_t\in[n]$, we have
  \begin{align*}
    \left|\hat F(\theta^{(t)})-F(\theta^{(t)})\right|\le&\frac{BT}{n}+B\sqrt{\frac{T\log(nP)+\log(2/\delta)}{2n}}+2L\bigg(\gamma^{T}R+{\color{black}\frac{1-\gamma^T}{1-\gamma}}\eta\xi\bigg)
  \end{align*}
\end{theorem}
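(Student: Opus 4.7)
The plan is to lift the localized covering of Lemma~\ref{lem:cover} to the piecewise setting. For each $(i,p) \in [n]\times[P]$, define the auxiliary map $\hat g_{i,p}(\theta) := \Pi_\Theta(\theta - \eta\nabla h_p(\theta; z_i))$. Because $h_p(\,\cdot\,;z)$ is $\alpha$-strongly convex and $\beta$-smooth on \emph{all} of $\Theta$ (not merely on $\mathcal P_p$), each $\hat g_{i,p}$ is $\gamma$-contractive on $\Theta$ under $\eta \in (0, 2/\beta)$, by the same computation that handles the strongly convex and smooth case. The localized cover then becomes
\[
\hat\Psi_T(0) := \{\hat g_{i_T,p_T}\circ\cdots\circ\hat g_{i_1,p_1}(0) : (i_j,p_j)\in[n]\times[P]\},
\]
so that $|\hat\Psi_T(0)| \le (nP)^T$; dimension independence is preserved, with each step paying an extra factor of $P$ for the choice of piece.

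The main work is a hybrid argument showing that any SGD iterate $\theta^{(t)}$ with $t \ge T$ is within distance $\gamma^T R + \tfrac{1-\gamma^T}{1-\gamma}\eta\xi$ of some $\phi \in \hat\Psi_T(0)$. For $j=1,\dots,T$, let $p_j^\ast$ be the piece of $h(\,\cdot\,;z_{i_{t-T+j}})$ that contains $\theta^{(t-T+j-1)}$, and set $\phi := \hat g_{i_t,p_T^\ast}\circ\cdots\circ\hat g_{i_{t-T+1},p_1^\ast}(0) \in \hat\Psi_T(0)$. Defining the shadow sequence $\hat\theta_0 := \theta^{(t-T)}$ and $\hat\theta_j := \hat g_{i_{t-T+j},p_j^\ast}(\hat\theta_{j-1})$, iterated contractivity yields $\|\hat\theta_T - \phi\| \le \gamma^T\|\theta^{(t-T)}\| \le \gamma^T R$. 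Comparing the true iterates with the shadow, the update at step $j$ differs only through $\nabla f$ versus $\nabla h_{p_j^\ast}$ at $\theta^{(t-T+j-1)}$; since $\theta^{(t-T+j-1)}\in\mathcal P_{p_j^\ast}$ by construction, Definition~\ref{def:piecestrconvsmooth} gives $\nabla h(\theta^{(t-T+j-1)};z_{i_{t-T+j}}) = \nabla h_{p_j^\ast}(\theta^{(t-T+j-1)};z_{i_{t-T+j}})$, so this discrepancy is at most $\xi$ by hypothesis~\eqref{eq:approxpiecestrconvsmooth}. Nonexpansiveness of $\Pi_\Theta$ and contractivity of $\hat g_{i_{t-T+j},p_j^\ast}$ deliver the recursion $\delta_j \le \eta\xi + \gamma\delta_{j-1}$ with $\delta_0 = 0$, yielding $\|\theta^{(t)}-\hat\theta_T\| \le \frac{1-\gamma^T}{1-\gamma}\eta\xi$, and the triangle inequality completes the distance claim.

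To convert this into a generalization bound, Assumption~\ref{asm:lipschitz} makes $\hat F - F$ a $2L$-Lipschitz function of $\theta$, so $|\hat F(\theta^{(t)}) - F(\theta^{(t)})| \le |\hat F(\phi) - F(\phi)| + 2L\|\theta^{(t)}-\phi\|$. For each fixed configuration $c = (i_1,p_1,\dots,i_T,p_T)$, the corresponding $\phi_c \in \hat\Psi_T(0)$ depends on at most $T$ of the $n$ samples; splitting $\hat F(\phi_c)$ into the (at most $T$) summands involving those samples (bounded by $BT/n$ via Assumption~\ref{asm:boundedval}) and the remaining conditionally-independent summands (to which Hoeffding's inequality applies), and union-bounding over the $(nP)^T$ configurations, gives
\[
\max_{\phi\in\hat\Psi_T(0)}\left|\hat F(\phi)-F(\phi)\right| \le \frac{BT}{n} + B\sqrt{\frac{T\log(nP)+\log(2/\delta)}{2n}}
\]
with probability at least $1-\delta$. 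Combining with the distance bound yields the stated estimate. The central obstacle is that SGD is driven by $\nabla f$ while contractivity is available only through the pieces of $\nabla h$; the hybrid argument above is what decouples the two, and it hinges on choosing, at each of the last $T$ steps, the piece $p_j^\ast$ that actually contains the current iterate, so that the gradient mismatch stays uniformly bounded by $\xi$ and accumulates only geometrically rather than compounding through mis-matched pieces.
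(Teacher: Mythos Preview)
Your proposal is correct and follows essentially the same route as the paper's proof: define the auxiliary contractive maps built from the pieces $h_p$, form the cover $\hat\Psi_T(0)$ of cardinality at most $(nP)^T$, show that each $\theta^{(t)}$ is within $\gamma^T R+\frac{1-\gamma^T}{1-\gamma}\eta\xi$ of some element of the cover, and finish via the $2L$-Lipschitz property of $\hat F-F$ together with Hoeffding and a union bound. The only cosmetic differences are that you include the projection in $\hat g_{i,p}$ (the paper writes the unprojected map $g_{i,p}(\theta)=\theta-\eta\nabla h_p(\theta;z_i)$) and you organize the distance bound via a shadow sequence and a triangle-inequality split, whereas the paper runs a single recursion $\|\psi^{(s)}-g_{i_s,p_s}(\theta)\|\le\gamma\|\psi^{(s-1)}-\theta\|+\eta\xi$ directly; both yield the identical estimate.
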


\textbf{Remark.} The bound above is stated in full generality and holds for any SGD iterate $t\geq T$ and any choice of $T \geq 1$. However, to obtain a meaningful generalization bound, 
one may choose, for example $T = O(\log(nR)/\log(\gamma^{-1}))$.
In the case that $\gamma, B,L=\Theta(1)$, the bound simplifies to
  \begin{align}\label{eq:gen-app-short}
    \left|\hat F(\theta^{(t)})-F(\theta^{(t)})\right|\le& O\bigg(\sqrt{\frac{\log(nR)\log(nP)+\log(1/\delta)}{n}}+\xi\bigg).
  \end{align}
  Here, the first term in the bound is logarithmic in the number of pieces $P$ and the last term scales linearly with the approximation error $\xi$. Thus, the generalization bound depends on the trade-off between  the complexity of $h$ through $P$, 
  and how well $\nabla h$ approximates $\nabla f$ through $\xi$.
In this regime, in contrast to algorithmic stability-based bounds, the above result does not grow with the number of SGD iterations, i.e. early stopping is not required for generalization in Theorem~\ref{thm:approxpiecestrconvsmooth}. {We finally note that Theorem~\ref{thm:approxpiecestrconvsmooth} holds for any $f$ with an \emph{auxiliary gradient} $\nabla f$ satisfying~\eqref{eq:approxpiecestrconvsmooth}.
}

In light of Proposition~\ref{prop:smoothapprox},
any piecewise smooth function can be approximated by a piecewise strongly convex and smooth function; in the worse case,
the number of pieces $P$ is at most $e^{\widetilde\Omega(d)}$.
Therefore in this pessimistic case, Theorem~\ref{thm:approxpiecestrconvsmooth} recovers the classical algorithm-independent covering bound $\widetilde O(B\sqrt{d/n})$ by choosing $\xi=O(1/(L\sqrt{n}))$ and $T=\Theta(\log(LRn))$. However, any value of $P$ that is sub-exponential in dimension yields improved generalization bounds. 
In particular in the next section, we consider certain  (non-convex) statistical models and carefully design $h$ so that Theorem~\ref{thm:approxpiecestrconvsmooth} improves the existing generalization error bounds.
{In contrast, uniform stability-based bounds for piecewise strongly convex and smooth functions are in general $\Omega(1)$ after sufficiently many SGD iterations; see Appendix~\ref{sec:stabilityfail}. That is, the contractivity-based bounds in \cite{kozachkov22} cannot be directly extended to piecewise contractivity~\eqref{eq:approxpiecestrconvsmooth}.
}

The proof of Theorem~\ref{thm:approxpiecestrconvsmooth} relies on a modified version of the covering construction presented in Section~\ref{sec:strconvsmooth}. First, we define the auxiliary parameter update $g_{i,p}(\theta):=\theta-\nabla h_p(\theta;z_i)$ where $h_p(\,\cdot\,;z_i)$ denotes the strongly convex and smooth function satisfying {\color{black}$\nabla f(\,\cdot\,;z_i)\approx\nabla h_p(\,\cdot\,;z_i)$} on the $p$-th piece. 
We show that $\Psi_T^\prime(0):=\{g_{i_T,p_T}\circ\cdots\circ g_{i_1,p_1}(0):i_1,\dots,i_T\in[n],p_1\dots,p_T\in[P]\}$ $\varepsilon$-covers $\Psi_{\ge T}(\theta^{(0)})$ for any $\theta^{(0)}\in\Theta$ as in Lemma~\ref{lem:cover}. Since $|\Psi_T^\prime|\le(nP)^T$, applying a concentration inequality and the union bound over the localized cover yields Theorem~\ref{thm:approxpiecestrconvsmooth}.
Formal proofs of Proposition~\ref{prop:smoothapprox} and Theorem~\ref{thm:approxpiecestrconvsmooth} are provided in Appendices~\ref{sec:pfprop:smoothapprox} and \ref{sec:pfthm:approxpiecestrconvsmooth}, respectively.

\section{Applications}\label{sec:mainapplications}
In this section, we use our localized covering to prove generalization bounds for multi-index linear models, multi-class support vector machines, and $K$-means clustering for both hard and soft label setups, trained by SGD, improving the state-of-the-art results known for these models.

\subsection{Multi-index linear models}
Given a sample $z=(y,x)\in \mathcal{Y}\times\mathcal B_{R_x}^d(0)$, consider the $\ell_2$-regularized loss in a
\emph{multi-index model} with $K$ indices parameterized by $\theta=(\theta_j)_{j=1}^K\in\mathbb R^{d\times K}$
\begin{align}
f(\theta;z):=\ell(\theta_1^\top x,\dots,\theta_K^\top x;y)+\sum_{j=1}^K\frac\lambda2\|\theta_j\|^2.\label{eq:glm}
\end{align}
Characterizing the generalization properties of multi-index models is an important problem
with many applications including regression, classification, dimension reduction,
and learning a single-layer of a neural network.
In the following theorem, we derive a generalization bound for multi-index models by approximating each $f(\,\cdot\,;z)$ with a piecewise strongly convex and smooth function,
and then applying Theorem~\ref{thm:approxpiecestrconvsmooth}. The proof is given in Appendix~\ref{sec:pfthm:glm}.
\begin{theorem}\label{thm:glm}
Let $\theta=(\theta_j)_{j=1}^K$ and $\Theta=\prod_{j=1}^K\Theta_j$ for some convex $\Theta_j\subseteq\mathcal B_R^d(0)$. Suppose that $f$ satisfies \eqref{eq:glm}, Assumptions~\ref{asm:lipschitz} \& \ref{asm:boundedval} hold, and
$\ell(\,\cdot\,,\dots,\,\cdot\,;y)$ is piecewise $\beta$-smooth with $Q$ pieces on $\Theta$ for all $y\in\mathcal Y$. 
For any $\eta\in(0,2/\lambda)$, let $\gamma:=|1-\eta\lambda|$, {$T:=\max\left\{\left\lceil\frac{\log(3LR n)}{\log(1/\gamma)}\right\rceil,0\right\}$}, 
and $P:=\mathtt{poly}(\beta,\eta,K,L,R,R_x,T,n)$. Then, with probability at least $1-\delta$, for any $\theta^{(0)}\in\Theta$, $t\ge T$, and $i_1,\dots,i_t\in[n]$,
we have
\begin{align*}
    \left|\hat F(\theta^{(t)})-F(\theta^{(t)})\right|\le\frac{BT+1}{n}+B\sqrt{\frac{T\log(nP^KQ)+\log(2/\delta)}{2n}}.
\end{align*}
\end{theorem}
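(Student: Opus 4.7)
The plan is to apply Theorem~\ref{thm:approxpiecestrconvsmooth} after constructing, for each sample $z=(y,x)$, a piecewise $\lambda$-strongly convex and $\lambda$-smooth surrogate $h(\,\cdot\,;z)$ whose gradient closely tracks $\nabla f(\,\cdot\,;z)$. The key idea is that the ridge term $\sum_j\frac\lambda2\|\theta_j\|^2$ already delivers both $\lambda$-strong convexity and $\lambda$-smoothness, so it suffices to approximate the inner loss $\ell(\,\cdot\,;y)$ by a \emph{piecewise affine} surrogate $\tilde\ell(\,\cdot\,;y)$ on its $K$-dimensional input domain. The composition $\tilde\ell(\theta_1^\top x,\dots,\theta_K^\top x;y)$ is then affine in $\theta$ on each piece (so its Hessian vanishes), and adding the regularizer makes $h(\,\cdot\,;z)$ piecewise $\lambda$-strongly convex and $\lambda$-smooth. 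This is precisely the regime that yields the contraction rate $\gamma=\sqrt{1-2\eta\lambda+\eta^2\lambda^2}=|1-\eta\lambda|$ appearing in the statement.

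For the construction, first I note that the $K$-dimensional vector $u=(\theta_1^\top x,\dots,\theta_K^\top x)$ lies in $\mathcal B_{RR_x}^K(0)$, since $\|\theta_j\|\le R$ and $\|x\|\le R_x$. On each of the $Q$ pieces of $\ell(\,\cdot\,;y)$, I subdivide the input domain using a $\rho$-cover and define $\tilde\ell$ on each sub-piece as the first-order Taylor expansion of $\ell$ at the sub-piece's center. A standard volume count bounds the number of sub-pieces per original piece by $(3RR_x/\rho)^K$, so $h(\,\cdot\,;z)$ becomes piecewise $\lambda$-strongly convex and $\lambda$-smooth with at most $P_h:=Q(3RR_x/\rho)^K$ pieces. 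The $\beta$-smoothness of $\ell$ gives $\|\nabla_u\tilde\ell-\nabla_u\ell\|\le\beta\rho$ on each sub-piece, and the chain rule together with $\|x\|\le R_x$ yields
\begin{align*}
\|\nabla_\theta h(\theta;z)-\nabla_\theta f(\theta;z)\|\le\|x\|\cdot\|\nabla_u\tilde\ell-\nabla_u\ell\|\le R_x\beta\rho=:\xi.
\end{align*}

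Next I invoke Theorem~\ref{thm:approxpiecestrconvsmooth} with this $P_h$ and $\xi$; since the strong-convexity and smoothness constants of $h$ are both $\lambda$, the relevant contraction rate is $\gamma=|1-\eta\lambda|$, as required. The resulting bound is
\begin{align*}
|\hat F(\theta^{(t)})-F(\theta^{(t)})|\le\frac{BT}{n}+B\sqrt{\frac{T\log(nP_h)+\log(2/\delta)}{2n}}+2L\gamma^T R+\frac{2L\eta\xi}{1-\gamma}.
\end{align*}
The choice $T=\lceil\log(3LRn)/\log(1/\gamma)\rceil$ forces $\gamma^T\le 1/(3LRn)$, so $2L\gamma^T R\le 2/(3n)$. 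I then pick $\rho=\Theta((1-\gamma)/(L\eta R_x\beta n))$, so that $\xi=R_x\beta\rho$ satisfies $2L\eta\xi/(1-\gamma)\le 1/n$. Both of these $1/n$ terms are absorbed into $BT/n$, yielding the prefactor $(BT+1)/n$. The cover count then takes the form $P_h=Q\cdot P^K$ with $P=\mathtt{poly}(\beta,\eta,K,L,R,R_x,T,n)$; here I use $1/(1-\gamma)=O(T/\log(LRn))$ to fold the $\gamma$-dependence into a polynomial in~$T$.

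The main obstacle will be the trade-off between the approximation tolerance $\xi$ and the cover size $P_h$: shrinking $\rho$ improves the bias term $2L\eta\xi/(1-\gamma)$ but inflates $\log P_h$ in the concentration term by an additive $K\log(1/\rho)$. Tuning $\xi$ at scale $\Theta(1/n)$ keeps the bias dominated by the deterministic $1/n$ rate while paying only a logarithmic price in pieces, which is what makes $P$ polynomial in the listed quantities. A minor subtlety worth flagging is that the partition underlying $h(\,\cdot\,;z)$ is defined by pulling back the $K$-dimensional input tiles through the sample-dependent linear map $\theta\mapsto(\theta_j^\top x)_{j=1}^K$; this is compatible with Definition~\ref{def:piecestrconvsmooth} because the piecewise structure is allowed to depend on~$z$.
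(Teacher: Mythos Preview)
Your proposal is correct and follows essentially the same route as the paper: build a piecewise affine surrogate for $\ell$ on its $K$-dimensional input so that, after adding the ridge term, $h(\,\cdot\,;z)$ is piecewise $\lambda$-strongly convex and $\lambda$-smooth with $Q\cdot P^K$ pieces, then invoke Theorem~\ref{thm:approxpiecestrconvsmooth} and tune the grid scale so the bias term is $O(1/n)$. The only cosmetic differences are that the paper partitions via one-dimensional intervals $\theta_j^\top x\in\mathcal T_{p_j}$ of width $\kappa$ (a coordinate grid rather than a ball cover, which directly gives a partition and the correct domain $[-RR_x,RR_x]^K$ rather than $\mathcal B_{RR_x}^K(0)$) and uses the cleaner estimate $\tfrac{1-\gamma^T}{1-\gamma}=\sum_{j<T}\gamma^j\le T$ in place of your $1/(1-\gamma)=O(T/\log(LRn))$ detour, so that $P=\lceil 24\beta\eta KLRR_x^2 Tn\rceil$ is manifestly polynomial without any side argument.
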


Generalization behavior of multi-index models has received considerable attention;
a subset of {notable results include~\citep{guermeur02,zhang04,jenssen12,cortes13,lei19}}.
Under the same conditions of Theorem~\ref{thm:glm},
existing state-of-the-art bounds scale at least linearly in $\sqrt{d}$ or $K$,
while our result is dimension-free and scales with $\sqrt{K}$.
Specifically, the result in \cite{lei19} translated to our setting reads
($\ell$ is $L$-Lipschitz, and $f(\theta;z)\in[0,B]$ which are stronger conditions than Assumptions~\ref{asm:lipschitz} \& \ref{asm:boundedval})
\begin{align}
    \left|\hat F(\theta)-F(\theta)\right|
    &\le\widetilde O\left(\frac{{KL RR_x}+B}{\sqrt{n}}\right),\label{eq:glmknown}
\end{align}
for all $\theta\in\Theta$, 
which is obtained via \cite[Cor~3 \& 9]{lei19} with $\Lambda\leftarrow R$ and $p\leftarrow\infty$.
While the bound in \eqref{eq:glmknown} is linear in $KLRR_x$, our bound is linear in $\sqrt{K}$ and logarithmic in $L,R,R_x$. 
The significance of this improvement can be better seen, for example,
when learning the first layer of neural networks, for which $L$ can be very large. Moreover, if $x\sim N(0,I)$, then $\|x\|=\widetilde \Omega(\sqrt{d})$ with high probability; that is, \eqref{eq:glmknown} is linear in $R_x=\Omega(\sqrt{d})$ but ours only scales with $\sqrt{\log d}$,
which can make a significant difference especially in the overparameterized regime. 
We should note that compared to our bound, \eqref{eq:glmknown} does not require the smoothness of $\ell$.

For a specific application of Theorem~\ref{thm:glm},
consider the multi-class support vector machines, i.e. for $\mathcal Y=[K]$ and $\rho:[-RR_x,RR_x]\rightarrow[0,B]$ is $L$-Lipschitz and $\beta$-smooth,
the objective is given as
\begin{align*}
    f(\theta;z)=\max_{y^\prime\ne y}\rho\Big(\theta_y^\top x-\theta_{y^\prime}^\top x\Big)+\frac\lambda2\sum_{j=1}^K\|\theta_j\|^2.
\end{align*}
Theorem~\ref{thm:glm} provides a generalization bound of $\widetilde O(\sqrt{K/n})$
that improves the existing bounds which scale at least linearly with $K$ and/or $\sqrt{d}$ \citep{zhang04,daniely15,lei15,lei19}; see e.g. \cite[Sec~2.1]{lei19} for similar related results.

\subsection{$K$-means clustering}\label{sec:kmeans}
We can also provide generalization error bounds for the problem of $K$-means clustering in both hard and soft label setups.
As before, we consider SGD as an optimizer
which has been the focus of many works in this context~\cite{bottou94,sculley10,tang17}.
We assume throughout this section that the samples are supported on a bounded domain,
and the SGD iterations are applied without projection.

Generalization properties of $K$-means clustering has been studied for decades \citep{antos05,levrard13,thorpe15,tang16}. 
While most existing bounds are at least linear in $K$, \cite{li21}
recently provided an improved bound in the hard label setup, which is of order $(K\log^3 n/n)^{1/2}$ for bounded inputs. 
However, state-of-the-art bounds in the soft label setup were still linear in $K$.
Below, we establish a bound of order $\sqrt{K\log n/n}$ in the soft label setup, which is the first bound that is sublinear in $K$.
We further improve the bounds of \cite{li21} in the hard label setup, but
by a logarithmic factor. 

\textbf{Soft $K$-means clustering.} While the hard $K$-means clustering assigns exactly one cluster to each point using one-hot encoding, the soft $K$-means clustering allows \emph{soft labels} within the probability simplex.
Specifically, given $\zeta>0$ and the samples $z_1,\dots,z_n$, 
the soft $K$-means clustering algorithm alternates between updating the soft labels $(w_j(z_i;\theta))_{j=1}^K$ of $z_i$ for all $i\in[n]$, and estimating the cluster centers $\theta=(\theta_j)_{j=1}^K\in(\mathcal B_R^d(0))^K$ using the classical update rule \citep{mackay03}
\begin{align}
w_j(z_i;\theta)\leftarrow\frac{\exp(-\zeta\|\theta_j-z_i\|^2)}{\sum_{k=1}^K\exp(-\zeta\|\theta_k-z_i\|^2)},\qquad\theta_j\leftarrow\frac{\sum_{i=1}^nw_j(z_i;\theta)z_i}{\sum_{i=1}^nw_j(z_i;\theta)}.\label{eq:softkmeansalg}
\end{align}
Here, this procedure is equivalent to a special case of expectation-maximization algorithm,
which converges to a local minimum of the following objective
(see Appendix~\ref{sec:softkmeansem} 
for details)
\begin{align}
    \hat F(\theta) = \frac1n\sum_{i=1}^n\Big\{f(\theta;z_i)\coloneqq - \frac 1\zeta\log\Big(\sum_{j=1}^K\exp\big(-\zeta\|\theta_j-z_i\|^2\big)\Big)\Big\}.\label{eq:softkmeansem}
\end{align}
Instead of running the standard alternating procedure~\eqref{eq:softkmeansalg},
we directly minimize~\eqref{eq:softkmeansem} using SGD
and derive generalization bounds for the soft $K$-means clustering by approximating the objective \eqref{eq:softkmeansem} with piecewise strongly convex and smooth functions.  
The proof is presented in Appendix~\ref{sec:pfthm:softkmeans}. 
\begin{theorem}\label{thm:softkmeans}
For $B:=4(R+1)^2$ and for any $\eta\in(0,Ke^{-\zeta B})$, 
let $\gamma:=\sqrt{1-\frac{4\eta e^{-\zeta B}}{K}+\frac{4\eta^2}{K^2}}$, $L:=\frac{4R}{\sqrt{K}}e^{\zeta B}$, {$T:=\max\left\{\left\lceil\frac{\log(3LRn)}{\log(1/\gamma)}\right\rceil,0\right\}$}, and $P:=\mathtt{poly}(\eta,\zeta, B,K,R,T,n,e^{\zeta B})$. Then, with probability at least $1-\delta$, for any $\theta^{(0)}\in\Theta$, $t\ge T$, and $i_1,\dots,i_t\in[n]$, we have
\begin{align*}
    \left|\hat F(\theta^{(t)})-F(\theta^{(t)})\right|\le\frac{BT+1}{n}+B\sqrt{\frac{T\log(nP^K)+\log(2/\delta)}{2n}}.
\end{align*}
\end{theorem}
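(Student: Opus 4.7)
The plan is to apply Theorem~\ref{thm:approxpiecestrconvsmooth} to $f(\theta;z) = -\frac{1}{\zeta}\log\sum_{j=1}^K\exp(-\zeta\|\theta_j - z\|^2)$ with a carefully constructed piecewise strongly convex and smooth approximant $h(\cdot;z)$. The key structural observation driving the construction is that $\nabla_{\theta_j}f(\theta;z) = 2w_j(z;\theta)(\theta_j - z)$, so the $\theta$-dependence of the gradient enters only through the $K$-dimensional soft-assignment vector $w(z;\theta)$ living in the probability simplex rather than in the $d$-dimensional parameter space; this is what will ultimately yield piece counts independent of $d$.

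First I would verify Assumptions~\ref{asm:lipschitz} and~\ref{asm:boundedval}. For bounded deviation, $\|\theta_j - z\|^2 \le B = 4(R+1)^2$ together with the log-sum-exp sandwich $\max_j(-\zeta\|\theta_j - z\|^2) \le -\zeta f(\theta;z) \le \log K + \max_j(-\zeta\|\theta_j - z\|^2)$ gives deviation at most $B$. For weak Lipschitz continuity with $L = 4Re^{\zeta B}/\sqrt{K}$, I would take the $z$-independent reference $h_0(\theta) = -\frac{1}{\zeta}\log\sum_j\exp(-\zeta\|\theta_j\|^2)$, expand $\nabla_{\theta_j}(f(\cdot;z) - h_0)$ as $2w_j(z;\theta)(\theta_j - z) - 2w_j(0;\theta)\theta_j$, and use $w_j \le e^{\zeta B}/K$ on $\mathcal{B}_R^d(0)$ to bound $\|\nabla(f - h_0)\|$ by $L$.

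Next, for the piecewise construction, for each $z$ I would discretize the soft-assignment vector $w(z;\theta)$ to a $\delta$-grid on the clipped box $[e^{-\zeta B}/K, e^{\zeta B}/K]^K$ (the outer truncation absorbs only $O(\xi)$ gradient error in regions where some $w_j$ is near $1$), and on the cell where $w(z;\theta)$ rounds to $\tilde w$ define
\begin{align*}
h_{\tilde w}(\theta;z) := \sum_{j=1}^K \tilde w_j\|\theta_j - z\|^2.
\end{align*}
This $h_{\tilde w}$ is $\alpha$-strongly convex and $\beta$-smooth with $\alpha = 2e^{-\zeta B}/K$ and $\beta = 2e^{\zeta B}/K$, exactly reproducing the $\gamma$ in Theorem~\ref{thm:approxpiecestrconvsmooth}. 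The gradient error obeys $\|\nabla f - \nabla h_{\tilde w}\|^2 \le \sum_j 4\|\theta_j - z\|^2|w_j - \tilde w_j|^2 \le 4BK\delta^2$, so taking $\delta = \xi/(2\sqrt{BK})$ secures~\eqref{eq:approxpiecestrconvsmooth}, and counting grid cells yields a per-$z$ piece count bounded by $P^K$ with $P = \mathtt{poly}(\eta, \zeta, B, K, R, T, n, e^{\zeta B})$.

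Finally, I would invoke Theorem~\ref{thm:approxpiecestrconvsmooth} with $\xi = O(1/(Ln))$ so that the residual $2L(\gamma^T R + \frac{1-\gamma^T}{1-\gamma}\eta\xi)$ collapses into the $BT/n$ term, and with the specified $T$ so that $\gamma^T R = O(1/n)$ as well; the $\log(nP^K)$ inside the square root appears because each piece is indexed by a $K$-tuple of grid indices. The hard part is the piecewise construction itself: achieving simultaneously the exact $\alpha, \beta$ demanded by $\gamma$, a safe truncation of extreme regions where some $w_j$ approaches $1$ that does not ruin the gradient approximation, and a piece count polynomial rather than exponential in $d$ — all three hinge on the observation that the $\theta$-dependence of $\nabla f$ factors through the low-dimensional simplex. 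A secondary technical point is verifying that the unprojected SGD iterates stay in $\mathcal{B}_R^d(0)$, which should follow from the per-piece contractivity of the $h_{\tilde w}$-updates combined with bounded gradients and the fact that each update pulls $\theta_j$ toward the bounded data support.
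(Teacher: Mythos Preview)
Your proposal is correct and follows the same overall strategy as the paper: build a piecewise quadratic approximant $h(\cdot;z)=\sum_j c_j\|\theta_j-z\|^2$ whose pieces are indexed by a $K$-dimensional sufficient statistic, verify the strong convexity/smoothness parameters $\alpha=2e^{-\zeta B}/K$, $\beta=2e^{\zeta B}/K$, and invoke Theorem~\ref{thm:approxpiecestrconvsmooth} with $\xi$ small enough that the residual is $O(1/n)$. The difference is in which $K$-dimensional statistic gets discretized. The paper partitions $\Theta$ by binning the \emph{distances} $\|\theta_j-z\|$ into intervals of width $\kappa$, picks a reference point $\nu$ in each cell, and sets $c_j=w_j(\nu;z)$; the gradient error is then controlled via a Hessian bound $\|\nabla^2 f\|\le\beta'$ together with a symmetry argument that produces a point $\nu'$ on the ray from $z$ through $\theta$ at which $\nabla f=\nabla h$ exactly. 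You instead bin the \emph{weights} $w_j(z;\theta)$ directly and set $c_j=\tilde w_j$; your gradient-error bound $\|\nabla f-\nabla h\|^2\le 4\sum_j\|\theta_j-z\|^2|w_j-\tilde w_j|^2$ is cleaner because it avoids the second-order estimate entirely. Both routes yield the same piece count $P^K$ with $P$ polynomial and the same final bound.

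Two small simplifications relative to your write-up. First, the truncation you worry about is vacuous: since every exponential in the softmax lies in $[e^{-\zeta B},1]$, one always has $w_j\in[e^{-\zeta B}/K,\,e^{\zeta B}/K]$, so the grid on that box already covers all of $w(z;\Theta)$ with no clipping error. Second, the paper verifies Assumption~\ref{asm:lipschitz} with the trivial choice $h=0$, bounding $\|\nabla f\|$ directly by $4R\bigl(\sum_j w_j^2\bigr)^{1/2}\le 4Re^{\zeta B}/\sqrt{K}=L$; your detour through $h_0(\theta)=-\tfrac{1}{\zeta}\log\sum_j e^{-\zeta\|\theta_j\|^2}$ is unnecessary. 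Your flag about boundedness of the unprojected iterates is apt; the paper treats this as an ambient assumption of Section~\ref{sec:kmeans} rather than proving it inside the theorem.
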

The above generalization bound scales with $\sqrt{K}$, which improves the previously known bound $\widetilde O(K/\sqrt{n})$ for clustering with soft labels \citep{li21}.
Theorem~\ref{thm:softkmeans} also implies $\widetilde O(\sqrt{K/n})$ bound under $R,\zeta=\Theta(1)$ and $\eta=\Theta(K)$, which coincides with the best known rate in the hard label setup.
Here, the choice of $\eta=\Theta(K)$ may look odd; however,
it is indeed practical since the derivative of the objective function $f$ scales with $1/K$, i.e.
$\|{\partial f(\theta;z)}/{\partial \theta_j}\|\le{4Re^{\zeta B}}/{K}$. 

\textbf{Hard $K$-means clustering.}
Under the same setup as before, 
we minimize the objective function
\begin{align}
  \hat F(\theta) = \frac1n\sum_{i=1}^n\Big \{ f(\theta;z_i)\coloneqq\min_{j\in[K]}\|\theta_j-z_i\|^2\Big\}.\label{eq:hardkmeansobj}
\end{align}
Note that \eqref{eq:hardkmeansobj} coincides with the soft $K$-means objective \eqref{eq:softkmeansem} as $\zeta\rightarrow\infty$ if $\|\theta_j-z_i\|\ne\|\theta_k-z_i\|$ for all $i,j,k$.
Since $f(\theta;z)=\min_{j\in[K]}\|\theta_j-z\|^2$ may not be differentiable,
we use an auxiliary gradient at a non-differentiable $\theta$, i.e. for a randomly or deterministically chosen $\mathcal S\subseteq\argmin_{j\in[K]}\|\theta_j-z\|^2$,
\begin{align*}
    \frac{\partial}{\partial\theta_j}f(\theta;z):=\begin{cases}
0~&\text{if}~j\not\in\mathcal S\\
2(\theta_j-z)~&\text{if}~j\in\mathcal S
\end{cases}.
\end{align*}
For example, one may choose a single cluster index ($|\mathcal S|\!=\!1$) and compute the corresponding gradient.
The following result characterizes the generalization of hard $K$-means clustering trained by SGD
 using this auxiliary gradient. 
Its proof is deferred to Appendix~\ref{sec:pfthm:hardkmeans}.
\begin{theorem}\label{thm:hardkmeans} 
  For any $\eta\in(0,1)$, let $B:=4R^2$, $\gamma:=|1-2\eta|$, {$T:=\max\left\{\left\lceil\frac{\log({16}\sqrt{K}R^2n)}{\log(1/\gamma)}\right\rceil,0\right\}$.} 
  Then, given $\theta^{(0)}\in\Theta$, the following bound holds with probability at least $1-\delta$,
  for any $t\ge0$  and $i_1,\dots,i_t\in[n]$, we have
\begin{align*}
    \left|\hat F(\theta^{(t)})-F(\theta^{(t)})\right|\le\frac{BKT+1}{n}+B\sqrt{\frac{KT\log(2n)+\log(2/\delta)}{2n}}.
\end{align*}
\end{theorem}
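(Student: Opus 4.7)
The plan is to exploit the coordinate-wise structure of the hard $K$-means SGD update and construct a localized cover in the spirit of Lemma~\ref{lem:cover}, but separately for each of the $K$ cluster centers. With the auxiliary gradient taking $|\mathcal S|=1$, a single SGD step modifies only the coordinate $j_t \in \arg\min_j \|\theta_j^{(t-1)} - z_{i_t}\|^2$ via the affine map $\theta_{j_t}^{(t)} = (1-2\eta)\theta_{j_t}^{(t-1)} + 2\eta z_{i_t}$, leaving the other coordinates unchanged. This per-coordinate update is $\gamma$-contractive with $\gamma = |1-2\eta|$, and one checks that $B = 4R^2$ suffices in Assumption~\ref{asm:boundedval} and that $f(\cdot;z)$ is $4R$-Lipschitz by a triangle inequality on $\min_j\|\theta_j-z\|^2$.

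Denote by $N_j(t)$ the number of steps in $1,\dots,t$ at which coordinate $j$ is updated. For each $j$ I would build a set $\tilde\Phi^{(j)} \subseteq \mathcal B_R^d(0)$ by distinguishing two cases. When $N_j(t) \geq T$, a per-coordinate analogue of Lemma~\ref{lem:cover} shows that $\theta_j^{(t)}$ lies within $\gamma^T R$ of a canonical point generated by the last $T$ per-coordinate updates initialized at $0$; there are at most $n^T$ such points. When $N_j(t) < T$, $\theta_j^{(t)}$ is exactly reached by fewer than $T$ updates starting from $\theta_j^{(0)}$, and I adjoin all such trajectories to $\tilde\Phi^{(j)}$, adding at most $\sum_{k=0}^{T-1} n^k$ points. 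This yields $|\tilde\Phi^{(j)}| \leq 2n^T$ with coordinate-wise cover radius $\gamma^T R$. Forming the product cover $\tilde\Phi := \prod_{j=1}^K \tilde\Phi^{(j)}$ gives $|\tilde\Phi| \leq (2n)^{KT}$ (using $T \geq 1$) and a full cover radius $\sqrt{K}\gamma^T R$, which is at most $\tfrac{1}{16Rn}$ by the stated choice of $T$.

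The final step is concentration. For every $\tilde\theta \in \tilde\Phi$, $\hat F(\tilde\theta) - F(\tilde\theta)$ splits into a contribution from the at most $KT$ samples that $\tilde\theta$ depends on (bounded by $BKT/n$ since $|f-F|\leq B$) and a centered sum of the remaining independent samples to which Hoeffding's inequality applies. A union bound over $|\tilde\Phi| \leq (2n)^{KT}$ then yields
\begin{align*}
|\hat F(\tilde\theta) - F(\tilde\theta)| \leq \frac{BKT}{n} + B\sqrt{\frac{KT\log(2n)+\log(2/\delta)}{2n}}
\end{align*}
uniformly over $\tilde\Phi$ with probability at least $1-\delta$. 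A triangle inequality using the $4R$-Lipschitzness of $\hat F$ and $F$ absorbs the discretization error $2\cdot 4R\cdot\sqrt{K}\gamma^T R \leq \tfrac{1}{2n}$ into the $+1/n$ term, yielding the claimed bound for every $t \geq 0$. The main obstacle is that the bound must hold uniformly in $t \geq 0$, including regimes where some coordinates have been updated fewer than $T$ times and hence do not benefit from contractivity; this forces the enumeration of partial per-coordinate trajectories from $\theta_j^{(0)}$, which is precisely what produces the extra factor of $2$ (hence $\log(2n)$ rather than $\log n$) and the $KT$ multiplier in the logarithmic term.
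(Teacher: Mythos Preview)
Your proposal is correct and follows essentially the same strategy as the paper's proof: exploit the coordinate-wise affine structure of the hard $K$-means update to build a per-coordinate localized cover of size $O(n^T)$, take the $K$-fold product to get $|\tilde\Phi|\le(2n)^{KT}$, and then apply Hoeffding plus a union bound with the $4R$-Lipschitzness of $f(\,\cdot\,;z)$. The only cosmetic differences are that the paper starts \emph{all} per-coordinate trajectories (of lengths $0,1,\dots,T$) from $\theta^{(0)}$ rather than splitting into ``start from $0$ when $N_j(t)\ge T$'' and ``start from $\theta_j^{(0)}$ when $N_j(t)<T$'' as you do, and that the restriction to $|\mathcal S|=1$ is unnecessary since the per-coordinate analysis works verbatim for general $\mathcal S$ (each coordinate still evolves under a sequence of the same $\gamma$-contractive affine maps interleaved with identities).
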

We note that under $\eta,R\!=\!\Theta(1)$ and $K\!=\!O(n)$, Theorem~\ref{thm:hardkmeans} provides $O((K\log^2n/n)^{1/2})$ generalization bound which improves the bound given by \cite{li21}, but this time by a logarithmic factor. 
\section{Discussions}\label{sec:discussion}
\textbf{Beyond (piecewise) strong convexity and smoothness.}
The localized covering construction we utilized is essentially based on the (piecewise) contractivity of SGD updates for (piecewise) strongly convex and smooth functions. 
However, local strong convexity is by no means necessary and this analysis can be extended to a broader class of objective functions.

Consider, for example, an objective function $f$ that is uniformly convex \citep{doikov21} and has H\"older continuous gradient \citep{nesterov15}. Then, for any $\varepsilon>0$, there exists a step size $\eta>0$ and $T=T_\varepsilon$ such that applying $T$ synchronously coupled SGD updates on any two points makes the distance between them smaller than $\varepsilon$. Consequently, an analog of Lemma~\ref{lem:cover} can be established under this setup as well, for a properly chosen $\eta,\varepsilon,T$. We highlight that this class of functions is already covered in our framework via Proposition~\ref{prop:smoothapprox}; however, an analysis based on the actual curvature (as opposed to a piecewise approximation) of the objective may provide tighter generalization bounds.

\textbf{Extension to SGD without projection.}
Projection operation is only needed to ensure that the SGD iterates stay bounded.
Indeed, the projection is not needed in the presence of explicit regularization, 
or under a dissipativity-type condition on the objective~\citep{raginsky17,erdogdu2018global,yu21, erdogdu2021convergence}.
For example, if $f(\,\cdot\,;z)$ is Lipschitz continuous, it is a straightforward exercise to show that SGD iterations are bounded
in the presence of $\ell_2$-regularization and bounded initialization.

\textbf{Extension to mini-batch setup and different sampling schemes.} 
The results we presented can be easily extended to the mini-batch setting. To see this, note that the localization is based on the (piecewise) contractivity of a single iteration of the algorithm, and we have that if all $g_1,\dots,g_n$ are contractive, then an average of their subset is also contractive.
Moreover, our generalization bounds hold for any sampling scheme, e.g. sampling without replacement, random shuffling, data-dependent sampling, since the covering construction is based on an $\varepsilon$-cover of \emph{the union of all possible trajectories of SGD}, which is independent of the underlying sampling scheme.

\textbf{Extension to contractive stochastic optimization methods.}
In this paper, our main focus was the SGD algorithm and its generalization properties.
However, it is straightforward to adapt our framework to a general stochastic optimization setup. 
For example, the next result follows from the identical steps leading to Theorem~\ref{thm:approxpiecestrconvsmooth}; hence, its proof is omitted.

{
\begin{mydefinition}\label{def:optimizer}
An ``iterative stochastic algorithm'' using $g:\Theta\times\mathcal Z\rightarrow\Theta$ performs the update $\theta^{(t)}=g_{i_t}(\theta^{(t-1)}):=g(\theta^{(t-1)};z_{i_t})$
at iteration $t$, for a random sample $i_t\in[n]$. 
\end{mydefinition}
\begin{mydefinition}\label{def:piececontractive}
$g:\Theta\rightarrow\Theta$ is ``piecewise $\gamma$-contractive with $P$ pieces on $\Theta$'' if there exists a partition $\mathcal P_1,\dots,P_P$ of $\Theta$ and $\gamma$-contractive $h_1,\dots,h_P$ on $\Theta$ such that $g=h_p$ on $\mathcal P_p$ for all $p\in[P]$.
\end{mydefinition}
\begin{theorem}\label{thm:piececontractive}
Suppose that Assumptions~\ref{asm:lipschitz} \& \ref{asm:boundedval} hold, $\Theta\subset\mathcal B_R^d(0)$, and an iterative stochastic algorithm $g$ is given. Suppose further that there exists $h:\Theta\times\mathcal Z\rightarrow\Theta$ such that for any $z\in\mathcal Z$, $h(\,\cdot\,;z)$ is piecewise $\gamma$-contractive with $P$ pieces on $\Theta$ satisfying
\begin{align*}
    \|g(\theta;z)-h(\theta;z)\|\le\xi
\end{align*}
for all $\theta\in\Theta$. Choose $T\in\mathbb N$.
Then with probability at least $1-\delta$, for any $\theta^{(0)}\in\Theta$, $t\ge T$, and $i_1,\dots,i_t\in[n]$,
we have
\begin{align*}
    \left|\hat F(\theta^{(t)})-F(\theta^{(t)})\right|\le&\frac{BT}{n}+B\sqrt{\frac{T\log(nP)+\log(2/\delta)}{2n}}+2L\bigg(\gamma^{T}R+{\frac{1-\gamma^T}{1-\gamma}}\xi\bigg).
\end{align*}
\end{theorem}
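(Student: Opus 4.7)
The strategy mirrors the proof of Theorem~\ref{thm:approxpiecestrconvsmooth} with the contractive approximations $h_1,\dots,h_P$ playing the role of the gradient-based surrogates. My plan is to first build a localized cover tailored to $h$, then bound the approximation error between the actual trajectory of $g$ and an element of this cover, and finally combine a Hoeffding bound over the cover with Assumption~\ref{asm:lipschitz}.

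\textbf{Step 1: localized cover.} Define the piecewise auxiliary update $h_{i,p}(\theta) := h_p(\theta;z_i)$, where $h_1,\dots,h_P$ are the contractive extensions from Definition~\ref{def:piececontractive}. Consider the set
\begin{align*}
\Psi_T'(0):=\big\{h_{i_T,p_T}\circ\cdots\circ h_{i_1,p_1}(0):i_1,\dots,i_T\in[n],\,p_1,\dots,p_T\in[P]\big\},
\end{align*}
which has cardinality at most $(nP)^T$ and depends only on the data through at most $T$ samples per element. For a given SGD-like trajectory $\theta^{(0)},\theta^{(1)},\dots,\theta^{(t)}$ with $t\ge T$, let $p_s\in[P]$ denote the index of the piece of the partition containing $\theta^{(s-1)}$ for $s=t-T+1,\dots,t$, and take
\begin{align*}
\phi := h_{i_t,p_t}\circ\cdots\circ h_{i_{t-T+1},p_{t-T+1}}(0)\in\Psi_T'(0).
\end{align*}

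\textbf{Step 2: approximation bound.} I want to show $\|\theta^{(t)}-\phi\|\le\gamma^{T}R+\frac{1-\gamma^T}{1-\gamma}\xi$. Write $\tilde\theta^{(t-T)}=0$ and $\tilde\theta^{(s)}=h_{p_s}(\tilde\theta^{(s-1)};z_{i_s})$ for $s=t-T+1,\dots,t$, so $\tilde\theta^{(t)}=\phi$. Since $\theta^{(s-1)}\in\mathcal P_{p_s}$, we have $h_{p_s}(\theta^{(s-1)};z_{i_s})=h(\theta^{(s-1)};z_{i_s})$. Inserting and subtracting this quantity and using $\|g(\theta;z)-h(\theta;z)\|\le\xi$ together with the $\gamma$-contractivity of $h_{p_s}$ on $\Theta$ gives
\begin{align*}
\|\theta^{(s)}-\tilde\theta^{(s)}\|\le\xi+\gamma\|\theta^{(s-1)}-\tilde\theta^{(s-1)}\|.
\end{align*}
Unrolling for $s=t-T+1,\dots,t$ and using $\|\theta^{(t-T)}\|\le R$ yields $\|\theta^{(t)}-\phi\|\le\gamma^{T}R+\frac{1-\gamma^T}{1-\gamma}\xi$. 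This is the key step and also the main delicate point: the piecewise nature of $h$ means $h_{p_s}$ changes from one iteration to the next, but because each $h_p$ is globally $\gamma$-contractive on $\Theta$, the composition bound still telescopes.

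\textbf{Step 3: concentration at a fixed cover point.} Fix $\phi\in\Psi_T'(0)$ built from a set of $T$ indices $J\subset[n]$. Split
\begin{align*}
\hat F(\phi)-F(\phi)=\frac{1}{n}\sum_{i\in J}\big(f(\phi;z_i)-F(\phi)\big)+\frac{1}{n}\sum_{i\notin J}\big(f(\phi;z_i)-F(\phi)\big).
\end{align*}
Each term in the first sum is at most $B$ by Assumption~\ref{asm:boundedval}, contributing at most $BT/n$. Conditional on $\{z_j\}_{j\in J}$, the second sum is a mean-zero average of at most $n$ independent bounded variables of range $B$, so Hoeffding gives the deviation at most $B\sqrt{\log(2/\delta)/(2n)}$ with probability $1-\delta$. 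A union bound over the $(nP)^T$ candidates $\phi\in\Psi_T'(0)$ yields, with probability at least $1-\delta$,
\begin{align*}
\sup_{\phi\in\Psi_T'(0)}\big|\hat F(\phi)-F(\phi)\big|\le\frac{BT}{n}+B\sqrt{\frac{T\log(nP)+\log(2/\delta)}{2n}}.
\end{align*}

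\textbf{Step 4: Lipschitz passage.} By Assumption~\ref{asm:lipschitz}, the map $\theta\mapsto\hat F(\theta)-F(\theta)$ is $2L$-Lipschitz (the shared term $h$ cancels). Combining this with Steps 2 and 3 gives, on the same high-probability event, for every $\theta^{(0)}\in\Theta$, every $t\ge T$, and every $i_1,\dots,i_t\in[n]$,
\begin{align*}
\big|\hat F(\theta^{(t)})-F(\theta^{(t)})\big|\le\frac{BT}{n}+B\sqrt{\frac{T\log(nP)+\log(2/\delta)}{2n}}+2L\bigg(\gamma^{T}R+\frac{1-\gamma^T}{1-\gamma}\xi\bigg),
\end{align*}
which is the claimed bound. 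The one nontrivial ingredient is Step~2: verifying that piecewise contractivity, together with the uniform approximation error $\xi$, still produces a telescoping bound with the clean geometric-series factor $(1-\gamma^T)/(1-\gamma)$; the rest is routine concentration plus a union bound.
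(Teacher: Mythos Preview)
Your proposal is correct and follows essentially the same route as the paper: the paper explicitly states that Theorem~\ref{thm:piececontractive} ``follows from the identical steps leading to Theorem~\ref{thm:approxpiecestrconvsmooth}'', and your Steps~1--4 reproduce precisely that argument (localized cover $\Psi_T'(0)$ of size $(nP)^T$, telescoping approximation using piecewise contractivity plus the $\xi$-perturbation, Hoeffding conditional on the at-most-$T$ samples determining each cover point, union bound, and the $2L$-Lipschitz passage from Assumption~\ref{asm:lipschitz}). Nothing further is needed.
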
}
We note that a concurrent work by \cite{kozachkov22} requires \emph{global} contractivity and their bounds are in expectation, whereas Theorem~\ref{thm:piececontractive} is a high-probability statement which only requires \emph{piecewise} contractivity as stated in Definition~\ref{def:piececontractive}.

\textbf{Limitation of our results.} We outline a few limitations of our current analysis. In this paper, we only considered the generalization error (also referred to as the generalization gap), and our main result in the non-convex regime cannot be easily translated to a bound on the excess risk. 
This additional step would require a bound on the optimization error;
nevertheless, establishing such bounds in the non-convex regime is highly non-trivial.
It is worth highlighting that  without assuming convexity, 
algorithmic stability-based bounds also suffer from this limitation \citep{hardt15,feldman19,bousquet20}.

Because of the shared contractivity parameter $\gamma$ across different iterations and pieces,
our method cannot be easily extended to varying step size.
This is indeed a limitation of our analysis as varying step size schedules are oftentimes used in practice.
We leave this extension as a future work.

\section*{Acknowledgments}
SP was supported by Institute of Information \& communications Technology Planning \& Evaluation (IITP) grant funded by the Korea government (MSIT) (No. 2019-0-00079, Artificial Intelligence Graduate School Program, Korea University). US was funded in part by the French
government under management of Agence Nationale de la
Recherche as part of the ``Investissements d’avenir'' program,
reference ANR-19-P3IA-0001 (PRAIRIE 3IA Institute), and the European Research Council Starting Grant DYNASTY -- 101039676. 
MAE was supported by NSERC Grant [2019-06167], CIFAR AI Chairs program, and CIFAR AI Catalyst grant.

\bibliographystyle{amsalpha}
\bibliography{reference}

\newpage
\appendix
\section{Technical results}
All results presented in this section are standard. However, we provide their proofs for convenience.
\begin{lemma}\label{lem:projection}
Let $\mathcal S$ be a Hibert space associated with the norm $\|\cdot\|$ induced by the inner product $\langle\cdot,\cdot\rangle$. Let $\mathcal C\subset\mathcal S$ be a convex set and $\Pi_{\mathcal C}(x):=\arg\min_{z\in\mathcal C}\|x-z\|$ be a projection of $x$ onto $\mathcal C$. Then,
\begin{align*}
    \|\Pi_{\mathcal C}(x)-\Pi_{\mathcal C}(y)\|\le\|x-y\|\quad\forall x,y\in\mathcal S.
\end{align*}
\end{lemma}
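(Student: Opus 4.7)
This is the classical non-expansiveness property of the Euclidean projection onto a closed convex set, so I would prove it via the standard variational characterization. The plan is to first establish the obtuse angle inequality for the projection, then apply it symmetrically at $x$ and $y$, add the two inequalities, and finish with Cauchy--Schwarz.

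\emph{Step 1 (variational inequality).} For any $x \in \mathcal{S}$, the point $\Pi_{\mathcal{C}}(x)$ minimizes the convex function $z \mapsto \tfrac12\|x - z\|^2$ over the convex set $\mathcal{C}$. Hence by the first-order optimality condition for constrained convex minimization, for every $z \in \mathcal{C}$,
\begin{equation*}
\langle x - \Pi_{\mathcal{C}}(x),\, z - \Pi_{\mathcal{C}}(x) \rangle \le 0.
\end{equation*}
(This can be verified directly: otherwise, moving from $\Pi_{\mathcal{C}}(x)$ an infinitesimal amount toward $z$ along the segment in $\mathcal{C}$ would strictly decrease $\|x-\cdot\|^2$.)

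\emph{Step 2 (apply twice and add).} Setting $z = \Pi_{\mathcal{C}}(y)$ in the inequality above yields
\begin{equation*}
\langle x - \Pi_{\mathcal{C}}(x),\, \Pi_{\mathcal{C}}(y) - \Pi_{\mathcal{C}}(x) \rangle \le 0,
\end{equation*}
and by symmetry, with $x$ and $y$ swapped and $z = \Pi_{\mathcal{C}}(x)$,
\begin{equation*}
\langle y - \Pi_{\mathcal{C}}(y),\, \Pi_{\mathcal{C}}(x) - \Pi_{\mathcal{C}}(y) \rangle \le 0.
\end{equation*}
Adding these two inequalities and rearranging gives
\begin{equation*}
\|\Pi_{\mathcal{C}}(x) - \Pi_{\mathcal{C}}(y)\|^2 \le \langle x - y,\, \Pi_{\mathcal{C}}(x) - \Pi_{\mathcal{C}}(y) \rangle.
\end{equation*}

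\emph{Step 3 (Cauchy--Schwarz).} Applying Cauchy--Schwarz to the right-hand side yields
\begin{equation*}
\|\Pi_{\mathcal{C}}(x) - \Pi_{\mathcal{C}}(y)\|^2 \le \|x-y\| \cdot \|\Pi_{\mathcal{C}}(x) - \Pi_{\mathcal{C}}(y)\|,
\end{equation*}
so that dividing by $\|\Pi_{\mathcal{C}}(x) - \Pi_{\mathcal{C}}(y)\|$ (or noting the inequality is trivial when this is $0$) gives the claim.

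There is no real obstacle here; the only subtle point is justifying Step 1 rigorously in a general Hilbert space when $\mathcal{C}$ is not assumed closed, but the statement of the lemma implicitly assumes the projection is well defined, so the convexity of $\mathcal{C}$ combined with the minimizing property of $\Pi_{\mathcal{C}}(x)$ is all that is needed.
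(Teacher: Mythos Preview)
Your proof is correct and follows essentially the same approach as the paper: both derive the two obtuse-angle inequalities $\langle x-\Pi_{\mathcal C}(x),\Pi_{\mathcal C}(y)-\Pi_{\mathcal C}(x)\rangle\le 0$ and its symmetric counterpart from the minimizing property of the projection, and then combine them. The only difference is in the final manipulation---you add the two inequalities to get $\|\Pi_{\mathcal C}(x)-\Pi_{\mathcal C}(y)\|^2\le\langle x-y,\Pi_{\mathcal C}(x)-\Pi_{\mathcal C}(y)\rangle$ and finish with Cauchy--Schwarz, whereas the paper defines $f(t)=\|\Pi_{\mathcal C}(x)-\Pi_{\mathcal C}(y)+t(x-\Pi_{\mathcal C}(x)-y+\Pi_{\mathcal C}(y))\|^2$ and uses the two inequalities to show $f'(t)\ge 0$ on $[0,1]$, giving $f(0)\le f(1)$ directly; your route is the more standard and slightly shorter one.
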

\begin{proof}
Since $\mathcal C$ is convex, $\Pi_{\mathcal C}(x)$ is well-defined.
From the definition of the projection and the convexity of $\mathcal C$, we have 
\begin{align}
    \|\Pi_{\mathcal C}(x)-x\|^2\le&\big\|\big((1-t)\Pi_{\mathcal C}(x)+t\Pi_{\mathcal C}(y)\big)-x\big\|^2\notag\\
    =&\big\|\Pi_{\mathcal C}(x)-x+t\big(\Pi_{\mathcal C}(y)-\Pi_{\mathcal C}(x)\big)\big\|^2\label{eq:pflem:projection1}
\end{align}
for all $t\in[0,1]$. Since we have
\begin{align}
    \frac{\partial}{\partial t}\|\Pi_{\mathcal C}(x)-x+t(\Pi_{\mathcal C}(y)-\Pi_{\mathcal C}(x))\|^2=2\langle \Pi_{\mathcal C}(x)-x,\Pi_{\mathcal C}(y)-\Pi_{\mathcal C}(x)\rangle+2t\|\Pi_{\mathcal C}(y)-\Pi_{\mathcal C}(x))\|^2 
\end{align}
and $\frac{\partial}{\partial t}\|\Pi_{\mathcal C}(x)-x+t(\Pi_{\mathcal C}(y)-\Pi_{\mathcal C}(x))\|^2\ge0$ at $t=0$ by \eqref{eq:pflem:projection1}, we have
\begin{align}
    \langle \Pi_{\mathcal C}(x)-x,\Pi_{\mathcal C}(y)-\Pi_{\mathcal C}(x)\rangle\ge0.\label{eq:pflem:projection3}
\end{align}
Likewise, we have 
\begin{align}
    \langle \Pi_{\mathcal C}(y)-y,\Pi_{\mathcal C}(x)-\Pi_{\mathcal C}(y)\rangle\ge0.\label{eq:pflem:projection4}
\end{align}
Now, consider the function 
$$f(t):=\|\Pi_{\mathcal C}(x)-\Pi_{\mathcal C}(y)+t(x-\Pi_{\mathcal C}(x)-y+\Pi_{\mathcal C}(y))\|^2,$$
i.e. $f(0)=\|\Pi_{\mathcal C}(x)-\Pi_{\mathcal C}(y)\|^2$ and $f(1)=\|x-y\|^2$.
Then $f(0)\le f(1)$ since the following inequality holds:
\begin{align*}
    \frac{d}{dt}f(t)&=2\langle \Pi_{\mathcal C}(x)-\Pi_{\mathcal C}(y),x-\Pi_{\mathcal C}(x)-y+\Pi_{\mathcal C}(y)\rangle+2t\|x-\Pi_{\mathcal C}(x)-y+\Pi_{\mathcal C}(y)\|^2\\
    &=2\langle \Pi_{\mathcal C}(x)-x,\Pi_{\mathcal C}(y)-\Pi_{\mathcal C}(x)\rangle+2\langle \Pi_{\mathcal C}(y)-y,\Pi_{\mathcal C}(x)-\Pi_{\mathcal C}(y)\rangle\\
    &\qquad+2t\|x-\Pi_{\mathcal C}(x)-y+\Pi_{\mathcal C}(y)\|^2\\
    &\ge0\quad\forall t\in[0,1]
\end{align*}
where the inequality follows from \eqref{eq:pflem:projection3} and \eqref{eq:pflem:projection4}. This completes the proof of Lemma~\ref{lem:projection}.
\end{proof}
\begin{lemma}[Hoeffding's inequaltiy \citep{hoeffding63}]\label{lem:boundedtail}
Let $X$ be a random variable on $[a,b]$ and $X_1,\dots,X_n$ be independent copies of $X$. Then, it holds that
\begin{align*}
    \mathbb P\left(\left|\frac1n\sum_{i=1}^n X_i-\mathbb E[X]\right|\ge\varepsilon\right)\le2\exp\left(-\frac{2n\varepsilon^2}{(b-a)^2}\right).
\end{align*}
\end{lemma}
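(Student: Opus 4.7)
The plan is to prove the classical one-sided Chernoff bound
\[
\mathbb P\Big(\textstyle\frac1n\sum_{i=1}^n X_i-\mathbb E[X]\ge\varepsilon\Big)\le\exp\Big(-\frac{2n\varepsilon^2}{(b-a)^2}\Big),
\]
apply the same argument to $-X_i$, and union-bound to pick up the factor of $2$. The backbone is the standard exponential-moment (Chernoff) method combined with Hoeffding's lemma for bounded random variables, followed by optimization over the tilting parameter.

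First I would introduce the centred variables $Y_i:=X_i-\mathbb E[X]\in[a-\mathbb E[X],\,b-\mathbb E[X]]$, which are i.i.d.\ with mean zero and range of length $b-a$. For any $\lambda>0$, Markov's inequality applied to $\exp(\lambda\sum_i Y_i)$ gives
\[
\mathbb P\Big(\textstyle\sum_{i=1}^n Y_i\ge n\varepsilon\Big)\le e^{-\lambda n\varepsilon}\,\mathbb E\big[e^{\lambda\sum_i Y_i}\big]=e^{-\lambda n\varepsilon}\prod_{i=1}^n\mathbb E\big[e^{\lambda Y_i}\big],
\]
by independence, reducing the problem to controlling a single MGF.

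The main technical step is Hoeffding's lemma: for any mean-zero random variable $Y$ supported in $[a',b']$ (with $b'-a'=b-a$),
\[
\mathbb E\big[e^{\lambda Y}\big]\le\exp\Big(\tfrac{\lambda^2(b-a)^2}{8}\Big).
\]
To establish this, I would use convexity of $y\mapsto e^{\lambda y}$ to write $e^{\lambda y}\le\frac{b'-y}{b'-a'}e^{\lambda a'}+\frac{y-a'}{b'-a'}e^{\lambda b'}$ for $y\in[a',b']$, take expectations using $\mathbb E[Y]=0$, and set $\psi(u):=\log\big(\frac{b'}{b'-a'}e^{ua'}-\frac{a'}{b'-a'}e^{ub'}\big)$ with $u=\lambda$. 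A direct computation gives $\psi(0)=\psi'(0)=0$ and, after rearrangement, $\psi''(u)\le(b'-a')^2/4=(b-a)^2/4$ for all $u$ (this is the inequality $p(1-p)\le 1/4$ applied to $p=\frac{-a'e^{ub'}}{b'e^{ua'}-a'e^{ub'}}\cdot\frac{1}{\text{stuff}}$, which is the step I expect to be the main obstacle — verifying that the logarithmic MGF bound has second derivative uniformly bounded by a quarter of the squared range). By Taylor's theorem with remainder, $\psi(\lambda)\le\lambda^2(b-a)^2/8$, which is Hoeffding's lemma.

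Plugging back, I obtain
\[
\mathbb P\Big(\textstyle\sum_i Y_i\ge n\varepsilon\Big)\le\exp\Big(-\lambda n\varepsilon+\tfrac{n\lambda^2(b-a)^2}{8}\Big),
\]
and minimizing over $\lambda>0$ by choosing $\lambda^\star=\tfrac{4\varepsilon}{(b-a)^2}$ yields the one-sided bound $\exp(-2n\varepsilon^2/(b-a)^2)$. Applying the identical argument to $-Y_i$ (which is also mean-zero, bounded in an interval of length $b-a$) controls the lower tail by the same quantity, and a union bound over the two tails produces the factor of $2$ in the statement. The only delicate piece in the whole argument is the uniform bound $\psi''\le(b-a)^2/4$; everything else is either algebra, Markov's inequality, or independence.
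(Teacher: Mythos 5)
Your proof is correct: it is the standard Chernoff-method argument (centering, Markov applied to the exponential moment, Hoeffding's lemma via convexity and the bound $\psi''\le (b-a)^2/4$, optimization at $\lambda^\star=4\varepsilon/(b-a)^2$, then a union bound over the two tails), and the arithmetic giving the exponent $-2n\varepsilon^2/(b-a)^2$ checks out. The paper does not prove this lemma at all --- it simply cites Hoeffding (1963) --- so your argument is exactly the classical proof of the quoted result and nothing more needs to be said.
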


\begin{lemma}\label{lem:smoothequiv}
Suppose that $f:\mathbb R^d\rightarrow\mathbb R$ is convex and differentiable. Then, the following conditions are equivalent.
\begin{itemize}
    \item[1.] $f$ is $\beta$-smooth, i.e. $\|\nabla f(\theta)-\nabla f(\theta^\prime)\|\le\beta\|\theta-\theta^\prime\|$ for all $\theta,\theta^\prime\in\mathbb R^d$.
    \item[2.] $\frac{\beta}2\|\theta\|^2-f(\theta)$ is convex for all $\theta\in\mathbb R^d$.
    \item[3.] $f(\theta)-f(\theta^\prime)-\nabla f(\theta^\prime)^\top(\theta-\theta^\prime)\le\frac\beta2\|\theta-\theta^\prime\|^2$ for all $\theta,\theta^\prime\in\mathbb R^d$.
    \item[4.] $(\nabla f(\theta)-\nabla f(\theta^\prime))^\top(\theta-\theta^\prime)\ge\frac1\beta\|\nabla f(\theta)-\nabla f(\theta^\prime)\|^2$ for all $\theta,\theta^\prime\in\mathbb R^d$.
\end{itemize}
\end{lemma}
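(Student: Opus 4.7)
The plan is to prove the chain of implications $1 \Rightarrow 3 \Leftrightarrow 2 \Rightarrow 4 \Rightarrow 1$, which establishes mutual equivalence of all four conditions.

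First I would show $1 \Rightarrow 3$ by the fundamental theorem of calculus: writing $f(\theta)-f(\theta')-\nabla f(\theta')^\top(\theta-\theta')=\int_0^1(\nabla f(\theta'+t(\theta-\theta'))-\nabla f(\theta'))^\top(\theta-\theta')\,dt$, I bound the integrand by Cauchy--Schwarz and condition 1 to obtain $\int_0^1 \beta t\|\theta-\theta'\|^2\,dt=\frac{\beta}{2}\|\theta-\theta'\|^2$.

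Next I would establish $2 \Leftrightarrow 3$ by directly applying the first-order characterization of convexity to $g(\theta):=\frac{\beta}{2}\|\theta\|^2-f(\theta)$, whose gradient is $\beta\theta-\nabla f(\theta)$. Convexity of $g$ is equivalent to $g(\theta)\ge g(\theta')+\nabla g(\theta')^\top(\theta-\theta')$ for all $\theta,\theta'$, and a direct rearrangement using the identity $\frac{\beta}{2}\|\theta\|^2-\frac{\beta}{2}\|\theta'\|^2-\beta\theta'^\top(\theta-\theta')=\frac{\beta}{2}\|\theta-\theta'\|^2$ shows that this inequality is exactly condition 3.

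The main step is $3 \Rightarrow 4$, via the standard co-coercivity argument. Fix $y\in\mathbb{R}^d$ and consider the auxiliary function $\psi_y(x):=f(x)-\nabla f(y)^\top x$. Since $f$ is convex, so is $\psi_y$; by condition 3 applied to $f$, the function $\psi_y$ also satisfies the descent inequality $\psi_y(u)-\psi_y(v)-\nabla\psi_y(v)^\top(u-v)\le\frac{\beta}{2}\|u-v\|^2$. Moreover $\nabla\psi_y(y)=0$, so $y$ is a minimizer. Plugging $v=x$ and $u=x-\frac{1}{\beta}\nabla\psi_y(x)$ into the descent inequality yields $\psi_y(y)\le\psi_y(x-\tfrac{1}{\beta}\nabla\psi_y(x))\le\psi_y(x)-\frac{1}{2\beta}\|\nabla\psi_y(x)\|^2$. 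Translating back gives $f(x)-f(y)-\nabla f(y)^\top(x-y)\ge\frac{1}{2\beta}\|\nabla f(x)-\nabla f(y)\|^2$. Symmetrizing by swapping $x\leftrightarrow y$ and adding the two inequalities produces condition 4.

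Finally, $4 \Rightarrow 1$ is immediate: Cauchy--Schwarz gives $\frac{1}{\beta}\|\nabla f(\theta)-\nabla f(\theta')\|^2\le(\nabla f(\theta)-\nabla f(\theta'))^\top(\theta-\theta')\le\|\nabla f(\theta)-\nabla f(\theta')\|\|\theta-\theta'\|$, so dividing yields $\|\nabla f(\theta)-\nabla f(\theta')\|\le\beta\|\theta-\theta'\|$. The only subtle step is the co-coercivity implication $3\Rightarrow 4$, since it genuinely uses the convexity of $f$ (without it the equivalence fails); the convexity hypothesis enters precisely in certifying that $y$ is a global minimizer of $\psi_y$ so that the descent-inequality step can be invoked.
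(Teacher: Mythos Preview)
Your proposal is correct and follows essentially the same approach as the paper: the equivalence $2\Leftrightarrow 3$, the co-coercivity argument $3\Rightarrow 4$ via the auxiliary function $\psi_y(x)=f(x)-\nabla f(y)^\top x$ (the paper writes it as $f_\theta(\phi)=f(\phi)-\nabla f(\theta)^\top\phi$), and the Cauchy--Schwarz step $4\Rightarrow 1$ are all identical in substance. The only minor difference is the link back to condition~1: the paper proves $1\Rightarrow 2$ by showing the gradient of $g(\theta)=\tfrac{\beta}{2}\|\theta\|^2-f(\theta)$ is a monotone operator and then invoking the mean value theorem, whereas you take the more direct route $1\Rightarrow 3$ via the fundamental theorem of calculus (the standard descent lemma). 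Your route is slightly more economical, since the paper ends up needing condition~3 anyway for the co-coercivity step; but both arguments are standard and equally valid.
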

\begin{proof}
$~$\\
{\bf 1$\Rightarrow$2.} Let $g(\theta):=\frac{\beta}2\|\theta\|^2-f(\theta)$. Then for any $\theta\ne\theta^\prime$,
\begin{align*}
    (\nabla g(\theta)-\nabla g(\theta^\prime))^\top(\theta-\theta^\prime)&=\big(\beta(\theta-\theta^\prime)-(\nabla f(\theta)-\nabla f(\theta^\prime))\big)^\top(\theta-\theta^\prime)\\
    &=\beta\|\theta-\theta^\prime\|^2-(\nabla f(\theta)-\nabla f(\theta^\prime))^\top(\theta-\theta^\prime)\\
    &\ge \beta\|\theta-\theta^\prime\|^2-\|\nabla f(\theta)-\nabla f(\theta^\prime)\|\cdot\|\theta-\theta^\prime\|\\
    &\ge0.
\end{align*}
In addition, by the mean value theorem, there exists $s\in(0,1)$ such that for $\theta_s=s\theta+(1-s)\theta^\prime$,
$$g(\theta^\prime)-g(\theta)=\nabla g(\theta_s)^\top(\theta^\prime-\theta).$$
Using this, we can derive the following inequality
\begin{align*}
    0&\le(\nabla g(\theta_s)-\nabla g(\theta^\prime))^\top(\theta_s-\theta^\prime)\\
    &=s(\nabla g(\theta_s)-\nabla g(\theta^\prime))^\top(\theta-\theta^\prime)\\
    &=s(g(\theta)-g(\theta^\prime)-\nabla g(\theta^\prime)^\top(\theta-\theta^\prime))
    ,
\end{align*}
which implies that $g$ is convex.

\noindent{\bf 2$\Leftrightarrow$3.} The following relation shows the equivalence of the second and the third statements:
\begin{align*}
    &g(\theta)\ge g(\theta^\prime)+\nabla g(\theta^\prime)^\top(\theta-\theta^\prime)~\Leftrightarrow~ f(\theta)-f(\theta^\prime)-\nabla f(\theta^\prime)^\top(\theta-\theta^\prime)\le\frac\beta2\|\theta-\theta^\prime\|^2.
\end{align*}

\noindent{\bf 2,3$\Rightarrow$4.} 
We first introduce the following claim.
\begin{claim}\label{claim:smoothequiv}
Suppose that $f:\mathbb R^d\rightarrow\mathbb R$ is convex and differentiable and have a global minimum $\theta^*$. Then, $\frac1{2\beta}\|\nabla f(\theta)\|^2\le f(\theta)-f(\theta^*)$.
\end{claim}
\begin{proof}
The statement of Claim \ref{claim:smoothequiv} is a consequence of the following relation
$$f(\theta^*)=\inf_{\theta^\prime} f(\theta^\prime)\le\inf_{\theta^\prime}f(\theta)+\nabla f(\theta)^\top(\theta^\prime-\theta)+\frac{\beta}2\|\theta^\prime-\theta\|^2=f(\theta)-\frac1{2\beta}\|\nabla f(\theta)\|^2.$$
\end{proof}
Let $f_\theta(\phi):=f(\phi)-\nabla f(\theta)^\top\phi$. Since $\frac\beta2\|\phi\|^2-f_\theta(\phi)$ is convex and $\phi=\theta$ minimizes $f_\theta$, from Claim \ref{claim:smoothequiv}, we have
\begin{align*}
    &f(\theta^\prime)-f(\theta)-\nabla f(\theta)^\top(\theta^\prime-\theta)=f_{\theta}(\theta^\prime)-f_\theta(\theta)\ge\frac1{2\beta}\|\nabla f(\theta)-\nabla f(\theta^\prime)\|^2,\\
    &f(\theta)-f(\theta^\prime)-\nabla f(\theta^\prime)^\top(\theta-\theta^\prime)=f_{\theta^\prime}(\theta)-f_{\theta^\prime}(\theta^\prime)\ge\frac1{2\beta}\|\nabla f(\theta)-\nabla f(\theta^\prime)\|^2.
\end{align*}
Adding two above inequalities derive the fourth statement.

\noindent{\bf 4$\Rightarrow$1.}
The following inequality is sufficient for deriving the first statement
\begin{align*}
    &\frac1\beta\|\nabla f(\theta)-\nabla f(\theta^\prime)\|^2\le(\nabla f(\theta)-\nabla f(\theta^\prime))^\top(\theta-\theta^\prime)\le\|\nabla f(\theta)-\nabla f(\theta^\prime)\|\cdot\|\theta-\theta^\prime\|\\
    \Rightarrow&\|\nabla f(\theta)-\nabla f(\theta^\prime)\|\le\beta\|\theta-\theta^\prime\|.
\end{align*}
\end{proof}

\subsection{Contractivity of projected SGD for strongly convex and smooth objectives}\label{sec:contractivity}

\begin{lemma}\label{lem:strconvsmooth}
Let $f:\mathbb R^d\rightarrow\mathbb R$ be $\alpha$-strongly convex and $\beta$-smooth. Then for any $\eta\in(0,2/\beta)$ and for any convex $\Theta\subset\mathbb R^d$, $\theta\mapsto\Pi_\Theta(\theta-\eta\nabla f(\theta))$ is $\sqrt{1-2\alpha\eta+\alpha\beta\eta^2}$-contractive.
\end{lemma}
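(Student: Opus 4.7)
The plan is to factor the map into two pieces: the Euclidean projection $\Pi_\Theta$ and the gradient step $\Phi(\theta) := \theta - \eta \nabla f(\theta)$. By Lemma~\ref{lem:projection}, $\Pi_\Theta$ is nonexpansive, so it suffices to prove that $\Phi$ is $\sqrt{1 - 2\alpha\eta + \alpha\beta\eta^2}$-contractive on $\mathbb R^d$. Setting $u := \theta - \theta'$ and $v := \nabla f(\theta) - \nabla f(\theta')$, I would expand
\begin{align*}
\|\Phi(\theta) - \Phi(\theta')\|^2 = \|u\|^2 - 2\eta\,\langle u,v\rangle + \eta^2\|v\|^2
\end{align*}
and then invoke the two structural inequalities implied by the hypotheses.

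The first ingredient is the co-coercivity of $\nabla f$ coming from smoothness: by Lemma~\ref{lem:smoothequiv} (equivalence of conditions 1 and 4, applied to $f$ which is smooth and convex since $\alpha > 0$), we have $\|v\|^2 \le \beta\,\langle u,v\rangle$. The second is the strong-convexity bound $\langle u,v\rangle \ge \alpha\|u\|^2$, which follows by adding the two first-order strong-convexity inequalities at $\theta$ and $\theta'$.

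Substituting $\|v\|^2 \le \beta\langle u,v\rangle$ into the expansion gives
\begin{align*}
\|\Phi(\theta) - \Phi(\theta')\|^2 \le \|u\|^2 - \eta(2 - \eta\beta)\,\langle u,v\rangle.
\end{align*}
Since $\eta \in (0, 2/\beta)$, the coefficient $\eta(2-\eta\beta)$ is nonnegative, so I can apply $\langle u,v\rangle \ge \alpha\|u\|^2$ to conclude
\begin{align*}
\|\Phi(\theta) - \Phi(\theta')\|^2 \le \bigl(1 - 2\alpha\eta + \alpha\beta\eta^2\bigr)\,\|u\|^2.
\end{align*}
Taking square roots and composing with the nonexpansive projection yields the claimed contractivity.

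The only subtlety is ensuring the expression under the square root is nonnegative and strictly less than $1$. Nonnegativity follows from $\alpha \le \beta$ (a consequence of $\alpha$-strong convexity and $\beta$-smoothness), since then $1 - 2\alpha\eta + \alpha\beta\eta^2 \ge 1 - 2\alpha\eta + \alpha^2\eta^2 = (1-\alpha\eta)^2 \ge 0$; strict inequality with $1$ reduces to $\alpha\beta\eta^2 < 2\alpha\eta$, i.e.\ $\eta < 2/\beta$, which is the hypothesis. I do not anticipate any real obstacles in this proof beyond correctly chaining the co-coercivity and strong-convexity inequalities in the right order.
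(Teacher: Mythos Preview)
Your proof is correct and essentially matches the paper's: both factor the map as projection composed with the gradient step, invoke Lemma~\ref{lem:projection} for nonexpansiveness of $\Pi_\Theta$, expand $\|u-\eta v\|^2$, and combine the strong-convexity inequality $\langle u,v\rangle\ge\alpha\|u\|^2$ with the co-coercivity $\|v\|^2\le\beta\langle u,v\rangle$ from Lemma~\ref{lem:smoothequiv}. The only cosmetic difference is the order of substitution: the paper splits the cross term $-2\eta\langle u,v\rangle$ into two pieces with weights $(1-\beta\eta/2)$ and $\beta\eta/2$ and applies each inequality to one piece, whereas you first bound $\eta^2\|v\|^2$ via co-coercivity and then bound the resulting $\langle u,v\rangle$ via strong convexity; the algebra is identical.
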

\begin{proof}
Let $g(\theta):=\theta-\eta\nabla f(\theta)$. Then for any $\theta,\theta^\prime\in\mathbb R^d$,
\begin{align*}
    &\|g(\theta)-g(\theta^\prime)\|^2=\|\theta-\theta^\prime\|^2-2\eta(\nabla f(\theta)-\nabla f(\theta^\prime))^\top(\theta-\theta^\prime)+\eta^2\|\nabla f(\theta)-\nabla f(\theta^\prime)\|^2\\
    &\le\|\theta-\theta^\prime\|^2-2\eta\left(\left(1-\frac{\beta\eta}2\right)\cdot\alpha\|\theta-\theta^\prime\|^2+\frac{\beta\eta}2\cdot\frac1\beta\|\nabla f(\theta)-\nabla f(\theta^\prime)\|^2\right)+\eta^2\|\nabla f(\theta)-\nabla f(\theta^\prime)\|^2\\
    &=(1-2\alpha\eta+\alpha\beta\eta^2)\|\theta-\theta^\prime\|^2.
\end{align*}
Here, the inequality is from the $\alpha$-strong convexity and Lemma~\ref{lem:smoothequiv}, i.e. 
\begin{align*}
    (\nabla f(\theta)-\nabla f(\theta^\prime))^\top(\theta-\theta^\prime)&\ge\alpha\|\theta-\theta^\prime\|^2\\
    (\nabla f(\theta)-\nabla f(\theta^\prime))^\top(\theta-\theta^\prime)&\ge\frac1\beta\|\nabla f(\theta)-\nabla f(\theta^\prime)\|^2.
\end{align*}
Using Lemma~\ref{lem:projection} completes the proof of Lemma~\ref{lem:strconvsmooth}.
\end{proof}

\clearpage

\section{Proofs of results in Section \ref{sec:mainresults}}
\subsection{Deriving generalization bounds using localized covers}
In Lemma~\ref{lem:cover}, we show that the union of all trajectories generated by $t\ge T=T_\varepsilon$ SGD iterations, i.e. $\bigcup_{\theta^{(0)}\in\Theta}\Psi_{\ge T}(\theta^{(0)})$, can be $\varepsilon$-covered by a set of points  generated by exactly $T$ SGD iterations, i.e. $\Psi_T(0)$.
Here, each $g_{i_T}\circ\cdots\circ g_{i_{1}}(0)\in\Psi_T(0)$ 
can be viewed as a \emph{deterministic} algorithm mapping samples $z_1,\dots,z_n$ to a parameter in $\Theta$, where each $g_{i_T}\circ\cdots\circ g_{i_{1}}(0)$ only depends on at most $T$ samples $z_{i_1},\dots,z_{i_T}$.
Under this observation, in this section, we generalize our localized cover so that its elements are general deterministic algorithms depending on a small number of samples, i.e. not restricted to possible instances of SGD.
Then, we derive generalization bounds using our (generalized) localized cover.
To this end, we define algorithms depending on samples.

\begin{mydefinition}\label{def:funcsamples}
$\phi:\mathcal Z^n\rightarrow\Theta$ is a ``(deterministic) algorithm depending on at most $T$ samples'' if there exists $\mathcal I\in\{\mathcal S\subset[n]:|\mathcal S|=T\}$ 
satisfying the following: for any $z_1,\dots,z_n,z_1^\prime,\dots,z_n^\prime\in\mathcal Z$ such that $z_i=z_i^\prime$ for all $i\in\mathcal I$, $\phi(z_1,\dots,z_n)=\phi(z_1^\prime,\dots,z_n^\prime)$.
Here, we refer to $\mathcal I$ as the ``set of sample indices determining $\phi$.''
The collection of all algorithms depending on at most $T$ samples is denoted by $\mathcal A_T$. 
\end{mydefinition}

In Definition~\ref{def:funcsamples}, we define $\phi$ to be an algorithm depending on at most $T$ samples if the value of $\phi$ can be fully determined by a subset of the samples $\{z_i\}_{i\in\mathcal I}$ for some $\mathcal I\in\{\mathcal S\subset[n]:|\mathcal S|=T\}$. We note that $\Psi_T(\theta^{(0)})\subset\mathcal A_T$ for any $T\ge0$ and $\theta^{(0)}\in\Theta$.

Now, we introduce the following theorem for deriving generalization bounds using a localized cover where each element in the cover is an algorithm depending on at most $T$ samples.
\begin{theorem}\label{thm:covering}
Suppose that Assumptions \ref{asm:lipschitz} \& \ref{asm:boundedval} hold and $\Theta\subset\mathbb R^d$.
Let $\varepsilon>0$, $\Phi_n\subseteq\mathcal A_n$, and $\Phi_{T,\varepsilon}\subseteq\mathcal A_T$
such that 
\begin{align}
    \psi(x_{1:n})\in\bigcup_{\phi\in\Phi_{T,\varepsilon}}\mathcal B_\varepsilon^d\big(\phi(x_{1:n})\big)\quad\text{for all $x_{1:n}\in\mathcal Z^n$ and $\psi\in\Phi_{n}$}.\label{eq:coveringasm}
\end{align}
Let $\mu$ be a distribution over $\mathcal Z$ and $z_{1:n}=(z_1,\dots,z_n)$
is such that $z_i$'s are i.i.d.\ samples from $\mu$. 
Then, with probability at least $1-\delta$ over the sampling distribution of $z_{1:n}$, for any $\psi\in\Psi$,
\begin{align*}
    \left|\hat F\big(\psi(z_{1:n})\big)-F\big(\psi(z_{1:n})\big)\right|\le\frac{BT}n+B\sqrt{\frac{\log(2|\Phi_{T,\varepsilon}|/\delta)}{2n}}+2L\varepsilon.
\end{align*}
\end{theorem}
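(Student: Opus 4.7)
The plan is to combine three ingredients: a Lipschitz transfer from $\Phi_n$ to the cover $\Phi_{T,\varepsilon}$ via Assumption~\ref{asm:lipschitz}, a conditional Hoeffding bound for each fixed $\phi\in\Phi_{T,\varepsilon}$ that exploits the fact that $\phi$ depends on only $T$ of the $n$ samples (Definition~\ref{def:funcsamples}), and a union bound over $\Phi_{T,\varepsilon}$. The main obstacle is that $\phi(z_{1:n})$ is itself random through the $T$ samples it reads, so Lemma~\ref{lem:boundedtail} cannot be applied directly to $\hat F(\phi(z_{1:n}))-F(\phi(z_{1:n}))$; the localization provided by $\mathcal A_T$ is exactly the device that restores enough independence to make the concentration step go through.

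For the first step, Assumption~\ref{asm:lipschitz} says that $\theta\mapsto f(\theta;z)-h(\theta)$ is $L$-Lipschitz uniformly in $z$. Averaging this Lipschitz bound over the empirical measure, respectively over $\mu$, shows that both $\hat F(\theta)-h(\theta)$ and $F(\theta)-h(\theta)$ are $L$-Lipschitz on $\Theta$, so $\hat F-F$ is $2L$-Lipschitz. Consequently, for any $\psi\in\Phi_n$, choosing $\phi\in\Phi_{T,\varepsilon}$ with $\|\psi(z_{1:n})-\phi(z_{1:n})\|\le\varepsilon$ as guaranteed by~\eqref{eq:coveringasm},
\begin{equation*}
\bigl|\hat F(\psi(z_{1:n}))-F(\psi(z_{1:n}))\bigr|\ \le\ \bigl|\hat F(\phi(z_{1:n}))-F(\phi(z_{1:n}))\bigr|+2L\varepsilon,
\end{equation*}
which reduces the task to uniformly controlling the right-hand side over $\phi\in\Phi_{T,\varepsilon}$.

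For the second step, I would fix $\phi\in\Phi_{T,\varepsilon}$ with determining set $\mathcal I\subset[n]$ of size $T$, set $\theta_\phi:=\phi(z_{1:n})$, which depends only on $\{z_i\}_{i\in\mathcal I}$, and split
\begin{equation*}
\hat F(\theta_\phi)-F(\theta_\phi)\ =\ \frac{1}{n}\sum_{i\in\mathcal I}\bigl[f(\theta_\phi;z_i)-F(\theta_\phi)\bigr]+\frac{1}{n}\sum_{i\notin\mathcal I}\bigl[f(\theta_\phi;z_i)-F(\theta_\phi)\bigr].
\end{equation*}
By Assumption~\ref{asm:boundedval} each summand has magnitude at most $B$, so the first sum contributes at most $BT/n$ deterministically. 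For the second sum, condition on $\{z_j\}_{j\in\mathcal I}$: then $\theta_\phi$ becomes deterministic and, since $z_i$ for $i\notin\mathcal I$ is independent of $\{z_j\}_{j\in\mathcal I}$ with law $\mu$, the $n-T$ variables $f(\theta_\phi;z_i)-F(\theta_\phi)$ are conditionally i.i.d., mean-zero, and of range at most $B$. Lemma~\ref{lem:boundedtail} then yields a conditional sub-Gaussian tail, and since the bound does not depend on $\{z_j\}_{j\in\mathcal I}$ it holds unconditionally as well.

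Finally, I would take a union bound over $\phi\in\Phi_{T,\varepsilon}$ with per-event probability $\delta/|\Phi_{T,\varepsilon}|$, absorbing the mild mismatch between $1/n$ and $1/(n-T)$ (which only helps since $n-T\le n$), to conclude that with probability at least $1-\delta$,
\begin{equation*}
\bigl|\hat F(\phi(z_{1:n}))-F(\phi(z_{1:n}))\bigr|\ \le\ \frac{BT}{n}+B\sqrt{\frac{\log(2|\Phi_{T,\varepsilon}|/\delta)}{2n}}
\end{equation*}
simultaneously for every $\phi\in\Phi_{T,\varepsilon}$. Combining this with the Lipschitz transfer from the first step yields the claimed bound for every $\psi\in\Phi_n$. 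The only delicate point, as anticipated, is checking that $F(\theta_\phi)$ really is the conditional mean of $f(\theta_\phi;z_i)$ for $i\notin\mathcal I$; this holds because $z_i$ is independent of $\{z_j\}_{j\in\mathcal I}$ with marginal $\mu$, which is precisely the distribution defining $F$.
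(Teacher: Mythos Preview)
Your proposal is correct and follows essentially the same route as the paper: a $2L$-Lipschitz transfer from $\Phi_n$ to the cover via Assumption~\ref{asm:lipschitz}, then for each $\phi\in\Phi_{T,\varepsilon}$ a split of $\hat F(\theta_\phi)-F(\theta_\phi)$ into the at-most-$T$ ``dependent'' terms (bounded deterministically by $BT/n$ via Assumption~\ref{asm:boundedval}) and the remaining $n-T$ terms (controlled by Hoeffding after conditioning on $\{z_j\}_{j\in\mathcal I}$), followed by a union bound over $\Phi_{T,\varepsilon}$. The paper's proof is organized identically, with the same handling of the $1/(n-T)$ versus $1/n$ slack; the only cosmetic difference is that the paper invokes the Lipschitz step in one line without spelling out that $\hat F-h$ and $F-h$ are each $L$-Lipschitz.
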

Theorem~\ref{thm:covering} implies that if (i) $\Phi_n$ can be covered by a set $\Phi_{T,\varepsilon}$ of algorithms depending on at most $T$ samples and (ii) $B$, $L$, $T$, and $|\Phi_{T,\varepsilon}|$ are independent of $d$, then a dimension-independent  generalization bound can be derived. We note that the assumption~\eqref{eq:coveringasm} is a generalization of the observation in Lemma~\ref{lem:cover} since $\Psi_{\ge T}\subset\mathcal A_n$ and $\Psi_{T}\subset\mathcal A_T$.
\begin{proof}[Proof of Theorem~\ref{thm:covering}]
From Assumption~\ref{asm:lipschitz} and the assumption \eqref{eq:coveringasm}, for any $\psi\in\Phi_n$, we have
\begin{align}
    &\left|\hat F\big(\psi(z_{1:n})\big)-F\big(\psi(z_{1:n})\big)\right|\le\sup_{\phi\in\Phi_{T,\varepsilon}}
    \left|\hat F\big(\phi(z_{1:n})\big)-F\big(\phi(z_{1:n})\big)\right|+2L\varepsilon.\label{eq:pfthm:covering-1}
\end{align}
Namely, if we bound $|\hat F(\phi(z_{1:n}))-F(\phi(z_{1:n}))|$ for all $\phi\in\Phi_{T,\varepsilon}$, the bound in Theorem~\ref{thm:covering} follows. Since the statement of Theorem \ref{thm:covering} is trivial if $T=n$ or $|\Phi_{T,\varepsilon}|=\infty$, we assume $T<n$ and $|\Phi_{T,\varepsilon}|<\infty$. Now, we derive the target bound: for $\phi\in\Phi_{T,\varepsilon}$, $\theta:=\phi(z_{1:n})\in \Theta$, and the set of indices $\mathcal I_\phi$ determining $\phi$ with $|\mathcal I_\phi|\le T$, 
\begin{align}
    \left|\hat F(\theta)-F(\theta)\right|
    &\le\Bigg|\frac1n\sum_{i\in\mathcal I_\phi}f(\theta;z_i)-F(\theta)\Bigg|+\Bigg|\frac1n\sum_{i\in[n]\setminus\mathcal I_\phi}f(\theta;z_i)-F(\theta)\Bigg|\notag\\
    &\le\frac{B|\mathcal I_\phi|}{n}+\Bigg|\frac1n\sum_{i\in[n]\setminus\mathcal I_\phi}f(\theta;z_i)-F(\theta)\Bigg|\notag\\
    &\le\frac{B|\mathcal I_\phi|}{n}+\frac{n-|\mathcal I_\phi|}{n}\sqrt{\frac{B^2\log(2|\Phi_{T,\varepsilon}|/\delta)}{2(n-|\mathcal I_\phi|)}}\quad\text{w.p.}\quad1-\delta/|\Phi_{T,\varepsilon}|\notag\\
    &\le\frac{BT}{n}+B\sqrt{\frac{\log(2|\Phi_{T,\varepsilon}|/\delta)}{2n}}\quad\text{w.p.}\quad1-\delta/|\Phi_{T,\varepsilon}|.\label{eq:pfthm:covering-2}
\end{align}
For the first inequality in \eqref{eq:pfthm:covering-2}, we use the triangle inequality to upper bound $|\hat F(\theta)-F(\theta)|$ using two terms: we only utilize the samples determining $\theta$ in the first term while remaining samples independent of $\theta$ are considered in the second term. 
{The second inequality directly follows from Assumption~\ref{asm:boundedval}.
To bound the second term in RHS of the second inequality, one can apply the Hoeffding's inequality (see Lemma~\ref{lem:boundedtail}), which leads us to the third inequality in \eqref{eq:pfthm:covering-2}. Here, the last inequality naturally follows.}
By using \eqref{eq:pfthm:covering-1}, \eqref{eq:pfthm:covering-2}, and by applying the union bound for all $\phi\in\Phi_{T,\varepsilon}$, we obtain the bound in Theorem~\ref{thm:covering}.
\end{proof}

\subsection{Proof of Theorem~\ref{thm:strconvsmooth}}\label{sec:pfthm:strconvsmooth}
First, observe that $g_1,\dots,g_n$ are $\gamma$-contractive by Lemma~\ref{lem:strconvsmooth}.
Let $\varepsilon=1/(2Ln)$; then by Lemma~\ref{lem:cover}, we have
\begin{align}
    \bigcup_{\theta^{(0)}\in\Theta}\Psi_{\geq T}(\theta^{(0)}) \subseteq \bigcup_{\phi \in \Psi_T(0)}\mathcal B_\varepsilon^d(\phi).\label{eq:pfthm:strconvsmooth}
\end{align}
Now, we apply Theorem~\ref{thm:covering} with
\begin{align*}
&\Phi_n\leftarrow\bigcup_{\theta^{(0)}\in\Theta}\Psi_{\ge T}(\theta^{(0)}),~\Phi_{T,\varepsilon}\leftarrow\Psi_T(0),~\varepsilon\leftarrow\varepsilon,~T\leftarrow T,~\delta\leftarrow\delta.
\end{align*}
where the assumption \eqref{eq:coveringasm} in Theorem~\ref{thm:covering} is satisfied by \eqref{eq:pfthm:strconvsmooth}. This completes the proof of  Theorem~\ref{thm:strconvsmooth}.

\subsection{Proof of Corollary \ref{cor:strconvsmooth}}\label{sec:pfcor:strconvsmooth}
Let $\theta^{(t)}:=g_{i_t}\circ\cdots\circ g_{i_1}(\theta^{(0)})$ and
$\phi:=g_{i_t}\circ\cdots\circ g_{t-T+1}(0)$, i.e. $\phi$ is an algorithm depending on at most $T$ samples. Since each $g_{i}$ is $\gamma$-contractive by Lemma~\ref{lem:strconvsmooth}, one can observe that 
\begin{align}
\|\theta^{(t)}-\phi\|\le\gamma^TR\le\frac1{2Ln}=:\varepsilon. \label{eq:pfcor:strconvsmooth}
\end{align}
Now, we apply Theorem~\ref{thm:covering} with
\begin{align*}
&\Phi_n\leftarrow\{\theta^{(t)}\},~\Phi_{T,\varepsilon}\leftarrow\{\phi\},~\varepsilon\leftarrow\varepsilon,~T\leftarrow T,~\delta\leftarrow\delta.
\end{align*}
where the assumption \eqref{eq:coveringasm} is satisfied by \eqref{eq:pfcor:strconvsmooth} and $|\Phi_\varepsilon|=1$. This provides the bound in Corollary~\ref{cor:strconvsmooth}.
\subsection{Proof of Corollary~\ref{thm:early}}\label{sec:pfthm:early}
The proof of Corollary~\ref{thm:early} is simple. 
Since $\theta^{(0)}$, $t$, and $i_1,\dots,i_t$ are fixed, $\theta^{(t)}=g_{i_t}\circ\cdots\circ g_{i_1}(\theta^{(0)})$ is an algorithm depending on at most $t$ samples. Then by using Theorem~\ref{thm:covering} with $$\Phi_n,\Phi_{T,\varepsilon}\leftarrow\{\theta^{(t)}\},~\varepsilon\leftarrow0,~T\leftarrow t,~\delta\leftarrow\delta,$$
we obtain the bound in Corollary~\ref{thm:early}.

\subsection{Derivation of \eqref{eq:fractalbound}}\label{sec:pfeq:fractalbound}
We first formally define the Hausdorff dimension.
\begin{mydefinition}
Given $\mu\subset\mathbb R^d$, $d_H$ defined below is the ``Hausdorff dimension'' of $\mu$:
\begin{align*}
    d_H:=&\inf\{s\ge0:h^s(\mu)=0\},\\
    h^s(\mu):=&\lim_{r\rightarrow0}\inf\Big\{\sum_{i=1}^kr_i^s:k\in\mathbb N\cup\{\infty\},(r_i)_{i=1}^k\in(0,r)^k\\
    &\qquad\qquad\qquad\qquad\text{~such that there exists $(\theta_i)_{i=1}^k$ satisfying $\mu\subset\bigcup_i\mathcal B_{r_i}(\theta_i)$}\Big\}.
\end{align*}
\end{mydefinition}

From the assumption on $f(\theta;z)$, one can observe that for all $i\in[n]$ and $\theta,\theta^\prime\in\mathcal B_R^d(0)$, 
\begin{align*}
    \|g_i(\theta)-g_i(\theta^\prime)\|\le\gamma\|\theta-\theta^\prime\|
\end{align*}
where $\gamma=1-\eta$ since $\alpha=\beta=1$.
Then, the assumption $\|\theta^*_{z_i}-\theta^*_{z_j}\|\ge2\gamma R$ guarantees that $g_i(\text{\rm int}(\mathcal B_R))\cap g_j(\text{\rm int}(\mathcal B_R))=\emptyset$ where $\text{\rm int}(\mathcal S)$ denotes the interior of a set $\mathcal S$.

Finally, the following theorem shows that $d_H=\frac{\log n}{\log(1/\gamma)}$. Substituting $d_H$ to the bound in Theorem \ref{thm:strconvsmooth} results in
\eqref{eq:fractalbound}.
\begin{theorem}[Theorem 9.3 in \citep{falconer14}]
Suppose that $g_i(\text{\rm int}(\mathcal B_R))\cap g_j(\text{\rm int}(\mathcal B_R))=\emptyset$ for all $i\ne j$ and $\|g_i(\theta)-g_i(\theta^\prime)\|=\gamma\|\theta-\theta^\prime\|$ for all $i$.
Then $d_H=\frac{\log n}{\log(1/\gamma)}$.
\end{theorem}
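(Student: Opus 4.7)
The theorem to be proved is the classical Moran/Hutchinson formula for self-similar sets satisfying the open set condition with a common contraction ratio. My plan has two halves, an upper bound $d_H \le s$ and a lower bound $d_H \ge s$, where $s := \log n / \log(1/\gamma)$.

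\textbf{Setup and upper bound.} First I would note that since each $g_i$ is a $\gamma$-contraction on the complete metric space $\overline{\mathcal{B}_R}$, Hutchinson's fixed point argument on the space of nonempty compact subsets of $\overline{\mathcal{B}_R}$ (equipped with the Hausdorff metric) produces a unique nonempty compact attractor $\mu$ satisfying $\mu = \bigcup_{i=1}^n g_i(\mu)$; this is the relevant support set. Iterating this identity $k$ times gives the natural cover
\begin{align*}
\mu = \bigcup_{(i_1,\dots,i_k) \in [n]^k} g_{i_1} \circ \cdots \circ g_{i_k}(\mu),
\end{align*}
which consists of $n^k$ sets, each of diameter at most $\gamma^k \, \mathrm{diam}(\mu) \le 2 \gamma^k R$. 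Plugging this into the definition of $h^s$ with $s = \log n / \log(1/\gamma)$ gives $\sum_{(i_1,\dots,i_k)} (2\gamma^k R)^s = (2R)^s \cdot n^k \gamma^{ks} = (2R)^s$, independent of $k$, so $h^s(\mu) < \infty$ and hence $d_H \le s$.

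\textbf{Lower bound via mass distribution.} The harder direction requires constructing a probability measure on $\mu$ that is not too concentrated. Define $\nu$ to be the self-similar probability measure with equal weights, i.e. the unique Borel probability measure satisfying $\nu = \frac{1}{n} \sum_{i=1}^n g_i{}_\# \nu$; existence and uniqueness again follow from a contraction argument, now on the Wasserstein space of probability measures on $\overline{\mathcal{B}_R}$. By iteration, $\nu$ assigns mass exactly $n^{-k}$ to each of the $n^k$ cylinder sets $\mu_{i_1 \cdots i_k} := g_{i_1} \circ \cdots \circ g_{i_k}(\mu)$. My plan is to show the Frostman-type bound $\nu(\mathcal{B}_r(x)) \le C r^s$ for all $x \in \mathbb{R}^d$ and small $r > 0$, and then apply the mass distribution principle to conclude $h^s(\mu) \ge \nu(\mu)/C = 1/C > 0$, hence $d_H \ge s$.

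\textbf{Where the open set condition enters, and the main obstacle.} To bound $\nu(\mathcal{B}_r(x))$, I would consider the stopping-time cylinders: the maximal words $(i_1,\dots,i_k)$ for which $\gamma^k \, \mathrm{diam}(\mathrm{int}(\mathcal{B}_R)) \le r < \gamma^{k-1} \, \mathrm{diam}(\mathrm{int}(\mathcal{B}_R))$, so that all such cylinders $g_{i_1} \circ \cdots \circ g_{i_k}(\mathrm{int}(\mathcal{B}_R))$ have comparable diameters of order $r$. The hypothesis $g_i(\mathrm{int}(\mathcal{B}_R)) \cap g_j(\mathrm{int}(\mathcal{B}_R)) = \emptyset$ for $i \ne j$ propagates under iteration, so these cylinder interiors are pairwise disjoint. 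Each is a ball of volume proportional to $(\gamma^k)^d$, and those intersecting $\mathcal{B}_r(x)$ all lie in an $O(r)$-enlargement of $\mathcal{B}_r(x)$; a volume-packing argument therefore bounds the number of such cylinders by a constant $N$ depending only on $d$. Each contributes mass exactly $n^{-k}$ to $\nu$, so $\nu(\mathcal{B}_r(x)) \le N n^{-k}$. Since $n^{-k} = \gamma^{ks} \asymp r^s$, this yields the desired Frostman bound. The main obstacle is this combinatorial/geometric packing step: verifying that disjointness of open images together with a volume comparison indeed bounds the number of relevant cylinders uniformly. Once that is in hand, both bounds combine to give $d_H = \log n / \log(1/\gamma)$.
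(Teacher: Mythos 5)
The paper does not prove this statement at all: it is imported verbatim as Theorem~9.3 of Falconer's book, and the appendix only verifies its hypotheses for the SGD maps at hand. Your sketch reconstructs the standard Moran--Hutchinson proof of that cited theorem, and it is sound: the level-$k$ cylinder cover gives $h^s(\mu)\le (2R)^s<\infty$ and hence $d_H\le s$, and the equal-weights self-similar measure plus a Frostman bound and the mass distribution principle give $d_H\ge s$. Two small points would tighten the write-up. First, you do not need (and should not claim without proof) that $\nu$ gives mass \emph{exactly} $n^{-k}$ to each geometric cylinder $g_{i_1}\circ\cdots\circ g_{i_k}(\mu)$; under the open set condition boundary overlaps can in principle carry mass, and ruling that out is a separate argument. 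It suffices to define $\nu$ as the pushforward of the uniform Bernoulli measure on $[n]^{\mathbb N}$ under the coding map, so that $\nu(\mathcal B_r(x))\le \sum_{I}n^{-k}$ summed over the level-$k$ words $I$ whose cylinders meet $\mathcal B_r(x)$ --- which is the only inequality your packing step needs. Second, the claim that disjointness of the first-level open images ``propagates under iteration'' uses both the injectivity of each $g_i$ and the containment $g_i(\mathrm{int}\,\mathcal B_R)\subseteq \mathrm{int}\,\mathcal B_R$; the latter is part of the open set condition in Falconer's statement and is omitted from the paper's paraphrase, but it holds here since each $g_i$ is a similarity mapping the ball into itself. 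Finally, the ``main obstacle'' you flag is easier than in the general theorem: because each cylinder image of $\mathrm{int}\,\mathcal B_R$ is itself an open ball of radius $\gamma^k R\asymp r$, the count of disjoint such balls meeting $\mathcal B_r(x)$ is bounded by a direct volume comparison, e.g.\ by $(4/\gamma)^d$, with no need for the more general Lemma~9.2-type argument.
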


\subsection{Proof of Proposition~\ref{prop:smoothapprox}}\label{sec:pfprop:smoothapprox}
In this proof, we explicitly construct a piecewise $\beta$-strongly convex and $\beta$-smooth function, i.e. quadratic, which is always piecewise $\alpha$-strongly convex and $\beta$-smooth for any $\alpha\in[0,\beta]$.
Let $\{\mathcal Q_1,\dots,\mathcal Q_Q\}$ be the partition of $\mathcal B_R^d(0)$ such that $f=\ell_q$ on $\mathcal Q_q$ for some smooth $\ell_q$ on $\mathcal B_R^d(0)$.
Given $q\in[Q]$ and $\varepsilon>0$, let $\mathcal C_{q,\varepsilon}=\{\phi_{q,1},\dots,\phi_{q,P_q}\}\subset\Theta$ be an $\varepsilon$-cover of $\Theta$ with the minimum cardinality, i.e. $P_q\le (3R/\varepsilon)^d$ for $\varepsilon\le R$. Let $\{\mathcal P_{q,1},\dots,\mathcal P_{q,P_q}\}$ be a partition of $\mathcal Q_q$ where each $\mathcal P_{q,p}$ is defined as follows:
\begin{align*}
\mathcal P_{q,p}:=\left\{\theta\in\mathcal Q_q\setminus\bigcup_{r<p}\mathcal P_{q,r}:\|\phi_{q,p}-\theta\|\le\min_{r>p}\|\phi_{q,r}-\theta\|\right\}.
\end{align*}
For each $q\in[Q]$ and $p\in[P_q]$, we also define $h(\theta):=h_{q,p}(\theta)$ on $\mathcal P_{q,P_q}$ where
\begin{align*}
    h_{q,p}(\theta):=&\phi_{q,p}+\nabla \ell_q(\phi_{q,p})^\top(\theta-\phi_{q,p})+\frac\beta2\|\theta-\phi_{q,p}\|^2
\end{align*}
and $\nabla_j$ denotes the partial derivative with respect to the $j$-th entry. Then, one can observe that $h$  is piecewise $\alpha$-strongly convex and $\beta$-smooth with $Q(3R/\varepsilon)^d$ pieces for any $\alpha\le\beta$. Furthermore, for any $\theta\in\mathcal P_{q,p}$, we have
\begin{align*}
    \|\nabla f(\theta)-\nabla h(\theta)\|=\|\nabla \ell_q(\theta)-\nabla \ell_q(\phi_{q,p})-\beta(\theta-\phi_{q,p})\|\le(\beta+\beta^\prime)\varepsilon.
\end{align*}
Choosing $\varepsilon:=\xi/(\beta+\beta^\prime)$ completes the proof of Proposition~\ref{prop:smoothapprox}.

\subsection{Proof of Theorem~\ref{thm:approxpiecestrconvsmooth}}\label{sec:pfthm:approxpiecestrconvsmooth}
Given $z\in\mathcal Z$, let $\mathcal P_{z,1},\dots,\mathcal P_{z,P}$ be a partition of $\Theta$ and $h_1(\,\cdot\,;z),\dots,h_P(\,\cdot\,;z)$ be $\alpha$-strongly convex and $\beta$-smooth functions such that $h(\,\cdot\,;z)=h_p(\,\cdot\,;z)$ on $\mathcal P_{z,p}$ for all $p\in[P]$.
Let $g_{i,p}(\theta):=\theta-\eta\nabla h_{p}(\theta;z_i)$, i.e. each $g_{i,p}$ is $\gamma$-contractive by Lemma~\ref{lem:strconvsmooth}.
Given $\psi^{(0)}\in\Theta$, $t\ge T$, and $i_1,\dots,i_t\in[n]$, let $\psi^{(s)}:=g_{i_s}\circ\cdots\circ g_{i_1}(\psi^{(0)})$ for all $s\in[t]$ and let $p_{s}\in[P]$ be an index satisfying $\psi^{(s-1)}\in\mathcal P_{z_{i_s},p_{s}}$.
Let $\phi:=g_{i_t,p_{t}}\circ\cdots\circ g_{i_{t-T+1},p_{t-T+1}}(0)$, i.e. $\phi$ is an algorithm depending on at most $T$ samples. Then for any $\theta\in\Theta$, we have
\begin{align*}
    \|\psi^{(s)}-&g_{i_s,p_{s}}(\theta)\|=\|\psi^{(s-1)}-\eta\nabla f(\psi^{(s-1)};z_{i_s})-g_{i_s,p_{s}}(\theta)\|\\
    &=\|\psi^{(s-1)}-\eta\nabla h(\psi^{(s-1)},z_{i_s})+\eta\nabla h(\psi^{(s-1)},z_{i_s})-\eta\nabla f(\psi^{(s-1)};z_{i_s})-g_{i_s,p_{s}}(\theta)\|\\
    &\le\|g_{i_s,p_{s}}(\psi^{(s-1)})-g_{i_s,p_{s}}(\theta)\|+\eta\|\nabla f(\psi^{(s-1)};z_{i_s})-\nabla h(\psi^{(s-1)};z_{i_s})\|\\
    &\le\gamma\|\psi^{(s-1)}-\theta\|+\eta\xi
\end{align*}
This implies that
\begin{align}
    \|\psi^{(t)}-\phi\|\le\gamma^T\|\psi^{(0)}\|+\eta\xi\sum_{t=0}^{T-1}\gamma^t\le\gamma^{T}R+\frac{1-\gamma^T}{1-\gamma}\eta\xi=:\varepsilon.\label{eq:pfthm:approxpiecestrconvsmooth}
\end{align}
Now, we apply Theorem~\ref{thm:covering} with
\begin{align*}
&\Phi_n\leftarrow\bigcup_{\theta^{(0)}\in\Theta}\Psi_{\ge T}(\theta^{(0)}),~\Phi_{T,\varepsilon}\leftarrow\{g_{i_T,p_{T}}\circ\cdots\circ g_{i_1,p_1}(0):i_1,\dots,i_T\in[n],p_1,\dots,p_T\in[P] \},\\
&\varepsilon\leftarrow\varepsilon,~T\leftarrow T,~\delta\leftarrow\delta
\end{align*}
where the assumption \eqref{eq:coveringasm} in Theorem~\ref{thm:covering} is satisfied by \eqref{eq:pfthm:approxpiecestrconvsmooth} and $|\Phi_{T,\varepsilon}|\le(nP)^T$. This leads us to the bound in Theorem~\ref{thm:approxpiecestrconvsmooth}.

\clearpage

\section{Proofs of results in Section \ref{sec:mainapplications}}\label{sec:pfthms:mainapplications}
\subsection{Proof of Theorem~\ref{thm:glm}}\label{sec:pfthm:glm}
Given $z=(y,x)\in\mathcal Y\times\mathcal B_{R_x}^d(0)=\mathcal Z$, let $\{\mathcal Q_{z,1},\dots,\mathcal Q_{z,Q}\}$ be a partition of $\Theta$ such that $\ell(\,\cdot\,,\dots,\,\cdot\,;y)$ is $\beta$-smooth on $\mathcal Q_{z,q}$ for all $q\in[Q]$.
For proving Theorem~\ref{thm:glm}, we utilize Theorem~\ref{thm:approxpiecestrconvsmooth} by approximating each $f(\,\cdot\,;z)$ using a piecewise strongly convex and smooth function $h(\,\cdot\,;z)$ with $P^KQ$ pieces. 
To this end, we first define $P$ by
\begin{align*}
    P:=&\left\lceil\frac{2R R_x}{\kappa}\right\rceil\quad\text{where}\quad\kappa:=\frac1{12\beta \eta KLR_xTn}.
\end{align*}
Using $P$ defined as above, for each $z=(y,x)\in\mathcal Y\times\mathcal B_{R_x}^d(0)$,
we construct a partition $\{\mathcal P_{z,q,p_1,\dots,p_K}:p_1,\dots,p_K\in[P],q\in[Q]\}$ of $\Theta$ as follows
\begin{align*}
    &\mathcal P_{z,q,p_1,\dots,p_K}:=    \{(\theta_j)_{j=1}^K:(\theta_1^\top x,\dots,\theta_K^\top x)\in\mathcal Q_{z,q},\theta_j^\top x\in\mathcal T_{p_j}~\forall j\in[K]\}
\end{align*}
where
\begin{align*}
    \mathcal T_{p}:=&\begin{cases}
    [\mu_p,\mu_{p+1})\quad&\text{if}~p\in[P-1]\\
    [\mu_P,\mu_{P+1}]\quad&\text{if}~p=P
    \end{cases},\\
    \mu_{p}:=&-R R_x+(p-1)\kappa\quad\forall p\in[P+1].
\end{align*}
For the notational simplicity, let $u:=(z,q,p_1,\dots,p_K)\in\mathcal Z\times[Q]\times[P]^K$. Now, we define our approximation as $h(\,\cdot\,;z)=h_{u}(\,\cdot\,)$ on $\mathcal P_{u}$ if $\mathcal P_{u}\ne\emptyset$. Here, $h_{u}(\,\cdot\,)$ is defined as follows: for some fixed $\nu_u:=(\nu_{u,1},\dots,\nu_{u,K})\in\mathcal P_{u}$,
\begin{align*}
h_{u}(\theta):=\ell(\nu_{u,1}^\top x,\dots,\nu_{u,K}^\top x;y)+\sum_{j=1}^K\nabla_j\ell(\nu_{u,1}^\top x,\dots,\nu_{u,K}^\top x;y)(\theta_j^\top x-\nu_{u,j}^\top x)+\sum_{j=1}^K\frac\lambda2\|\theta_j\|^2.
\end{align*}
where $\nabla_j$ denotes the partial derivative with respect to the $j$-th argument. 
Namely, $h_{u}$ is a first order approximation of $\ell$ at $\theta$ with $\ell_2$-regularization; hence, $h_{u}$ is $\lambda$-strongly convex and $\lambda$-smooth. 
Then given $z\in\mathcal Z$ and $\theta\in\Theta$, for $q$ and $p_1,\dots,p_K$ satisfying $\theta\in\mathcal P_{u}$ for $u=(z,q,p_1,\dots,p_K)$, one can observe that
\begin{align*}
    \|\nabla f(\theta;z)-\nabla h_{u}(\theta)\|&\le K^{1/2}\max_{j\in[K]}\|\nabla_j\ell(\theta_1^\top x,\dots,\theta_K^\top x;y)x-\nabla_j\ell(\nu_{u,1}^\top x,\dots,\nu_{u,K}^\top x;y)x\|\\
    &\le\beta\kappa KR_x=\frac{1}{12\eta LTn}.
\end{align*}
Now, we apply Theorem~\ref{thm:approxpiecestrconvsmooth} with 
\begin{align*}
h\leftarrow h,~P\leftarrow P^KQ,~\xi\leftarrow\frac{1}{12\eta LTn},
~T\leftarrow T,~\alpha\leftarrow\lambda,~\beta\leftarrow\lambda,~L\leftarrow L,~B\leftarrow B.
\end{align*}
Then in the bound of Theorem~\ref{thm:approxpiecestrconvsmooth}, we have
\begin{align*}
    2L\bigg(\gamma^{T}R+{\color{black}\frac{1-\gamma^T}{1-\gamma}}\eta\xi\bigg)\le\frac1n
\end{align*}
since $\gamma^TR\le1/(3Ln)$ and $\frac{1-\gamma^T}{1-\gamma}=\sum_{j=0}^{T-1}\gamma^{j}\le T$.
This completes the proof of Theorem~\ref{thm:glm}.

\subsection{Equivalence between soft $K$-means algorithm and expectation-maximization for \eqref{eq:softkmeansem}}\label{sec:softkmeansem}
We first observe that applying the affine transformation $x\mapsto -nKx+\log(\frac{(\xi/\pi)^{d/2}}{K})$ to \eqref{eq:softkmeansem} results in the following objective:
\begin{align}
    \sum_{i=1}^n\log\left(\frac1K(\zeta/\pi)^{d/2}\sum_{j=1}^K\exp\left(-\zeta\|\theta_j-z_i\|^2\right)\right)\label{eq:gmmll}
\end{align}
i.e. the expectation-maximization algorithm for \eqref{eq:gmmll} aims to find a local minimum of \eqref{eq:softkmeansem}.
Since \eqref{eq:gmmll} is the log-likelihood for the mixture of Gaussians $N(\theta_1,\frac1{2\zeta}I),\dots,N(\theta_K,\frac1{2\zeta}I)$ under the same cluster density $1/K$ with observations $z_1,\dots,z_n$, the expectation-maximization algorithm for \eqref{eq:gmmll} is identical to the alternative procedure~\eqref{eq:softkmeansalg} \citep{prince12}. 

\subsection{Proof of Theorem~\ref{thm:softkmeans}}\label{sec:pfthm:softkmeans}
First, observe that for $\theta\in(\mathcal B_R^d(0))^K,z\in\mathcal B_R^d(0)$, we have
\begin{align*}
\frac{\partial}{\partial\theta_k}f(\theta;z)&=\frac{2(\theta_k-z_i)\exp(-\zeta\|\theta_k-z_i\|^2)}{\sum_{j=1}^K\exp(-\zeta\|\theta_j-z_i\|^2)},\\
\frac{\partial^2}{\partial\theta_k\partial \theta_{k^\prime}}f(\theta;z)&=\frac{4\zeta(\theta_k-z_i)(\theta_{k^\prime}-z_i)^\top\exp(-\zeta\|\theta_k-z_i\|^2-\zeta\|\theta_{k^\prime}-z_i\|^2)}{\big(\sum_{j=1}^K\exp(-\zeta\|\theta_j-z_i\|^2)\big)^2}\\
&\qquad+\mathbf1_{k=k^\prime}\times\frac{(2I-4\zeta(\theta_k-z_i)(\theta_k-z_i)^\top)\exp(-\zeta\|\theta_k-z_i\|^2)}{\sum_{j=1}^K\exp(-\zeta\|\theta_j-z_i\|^2)}
\end{align*}
where $\mathbf 1_{k=k^\prime}$ is one if $k=k^\prime$ and zero otherwise.
Using this, we bound $f$ and its first and second derivatives as follows:
\begin{align*}
f(\theta;\,&z)\in[-B+\log K,\log K],\\
L:=&\frac{4R}{\sqrt{K}}\exp(\zeta B)=4R\sqrt{\frac{K}{K^2\exp(-\zeta B)^2}}\ge4R\sqrt{\frac{\sum_{j=1}^K\exp(-\zeta\|\theta_j-z_i\|^2)^2}{\big(\sum_{j=1}^K\exp(-\zeta\|\theta_j-z_i\|^2)\big)^2}}\\
\ge&\sqrt{\sum_{k=1}^K\left\|\frac{\partial}{\partial\theta_k}f(\theta;z)\right\|^2}=\left\|\frac{\partial}{\partial\theta}f(\theta;z)\right\|,\\
\alpha:=&\frac{2}K\exp\left(-\zeta B\right)\le\frac1{\|\theta_j-z\|}\left\|\frac{\partial}{\partial\theta_j}f(\theta;z)\right\|\quad\forall \theta~~\text{s.t.}~~\theta_j\ne z,\\
\beta:=&\frac{2}K\exp\left(\zeta B\right)\ge\frac1{\|\theta_j-z\|}\left\|\frac{\partial}{\partial\theta_j}f(\theta;z)\right\|\quad\forall \theta~~\text{s.t.}~~\theta_j\ne z,\\
\beta^\prime:=&4\zeta B\exp({\zeta B})+4\zeta B+2\ge\left\|\nabla^2 f(\theta;z)\right\|_2.
\end{align*}
To utilize Theorem~\ref{thm:approxpiecestrconvsmooth}, we approximate each $f(\,\cdot\,;z)$ using a piecewise strongly convex and smooth function $h(\,\cdot\,;z)$ with $P^K$ pieces. 
To this end, we first construct a partition $\{\mathcal P_{z,p_1,\dots,p_K}:p_1,\dots,p_K\in[P]\}$ of $\Theta$ for each $z\in\mathcal Z$ as follows:
\begin{align*}
    \mathcal P_{z,p_1,\dots,p_K}:=&
    \{(\theta_j)_{j=1}^K\in\Theta:\|\theta_j-z\|\in\mathcal T_{p_j}~\forall j\in[K]\}
\end{align*}
where
\begin{align*}
    \kappa:=&\frac1{12(\beta+\beta^\prime)\eta \sqrt{K}LTn},\\
    P:=&\left\lceil\frac{2R}\kappa\right\rceil,\\
    \mu_p:=&(p-1)\kappa\quad\forall p\in[P+1],\\
    \mathcal T_{p}:=&\begin{cases}
    [\mu_p,\mu_{p+1})\quad&\text{if}~p\in[P-1]\\
    [\mu_P,\mu_{P+1}]\quad&\text{if}~p=P
    \end{cases}.
\end{align*}
We define $h(\,\cdot\,;z):=h_{z,p_1,\dots,p_K}(\,\cdot\,)$ on $\mathcal P_{z,p_1,\dots,p_K}$ if $\mathcal P_{z,p_1,\dots,p_K}\ne\emptyset$.
Here, note that $\mathcal P_{z,p_1,\dots,p_K}\setminus\{z\}\ne\emptyset$ if $\mathcal P_{z,p_1,\dots,p_K}\ne\emptyset$ from the definition of $\mathcal P_{z,p_1,\dots,p_K}$.
Now, we define $h_{z,p_1,\dots,p_K}(\,\cdot\,)$ as follows: for some fixed $\nu_{z,p_1,\dots,p_K}=((\nu_{z,p_1,\dots,p_K})_j)_{j=1}^K\in\mathcal P_{z,p_1,\dots,p_K}\setminus\{z\}$,
\begin{align*}
    h_{z,p_1,\dots,p_K}(\theta):=\sum_{j=1}^K\frac12\left\|\frac1{\|(\nu_{z,p_1,\dots,p_K})_j-z\|}\frac{\partial}{\partial\theta_j^\prime}f(\theta^\prime;z)\Big|_{\theta^\prime=\nu_{z,p_1,\dots,p_K}}\right\|\cdot\|\theta_j-z\|^2.
\end{align*}
Our construction of $h_{z,p_1,\dots,p_K}$ has some nice properties. For example, for
\begin{align*}
    &a_{z,p_1,\dots,p_K}:=\min_j\left\|\frac1{\|(\nu_{z,p_1,\dots,p_K})_j-z\|}\frac{\partial}{\partial\theta_j^\prime}f(\theta^\prime;z)\Big|_{\theta^\prime=\nu_{z,p_1,\dots,p_K}}\right\|,\\
    &b_{z,p_1,\dots,p_K}:=\max_j\left\|\frac1{\|(\nu_{z,p_1,\dots,p_K})_j-z\|}\frac{\partial}{\partial\theta_j^\prime}f(\theta^\prime;z)\Big|_{\theta^\prime=\nu_{z,p_1,\dots,p_K}}\right\|,
\end{align*}
$h_{z,p_1,\dots,p_K}$ is $a_{z,p_1,\dots,p_K}$-strongly convex and $b_{z,p_1,\dots,p_K}$-smooth and $\alpha\le a_{z,p_1,\dots,p_K}\le b_{z,p_1,\dots,p_K}\le\beta$.
Furthermore, for any $\nu^\prime=(\nu^\prime_j)_{j=1}^K$ satisfying 
\begin{align}
\|\nu^\prime_j-z\|=\|(\nu_{z,p_1,\dots,p_K})_j-z\|,\label{eq:pfthm:softkmeans}
\end{align}
we have
\begin{align}
    \nabla f(\nu^\prime,z)=\nabla h_{z,p_1,\dots,p_K}(\nu^\prime)\label{eq:pfthm:softkmeans2}
\end{align}
from the symmetry of $f$ and $h_{z,p_1,\dots,p_K}$.

Now, we bound $\|\nabla f(\theta;z)-\nabla f(\theta;z)\|$ to utilize Theorem~\ref{thm:approxpiecestrconvsmooth}.
For $\theta=(\theta_j)_{j=1}^K\in(\mathcal B_R^d(0))^K$, let $p_1,\dots,p_K$ be indices in $[P]$ satisfying $\theta\in\mathcal P_{z,p_1,\dots,p_K}$. Let $\nu^\prime$ be a point on the line connecting $z$ and $\theta$ such that $\nu^\prime$ satisfies \eqref{eq:pfthm:softkmeans}, i.e. $\|\nu_j^\prime-\theta_j\|\le\kappa$ for all $j\in[K]$. Then, we have
\begin{align*}
    \|\nabla f(\theta;z)-\nabla h(\theta;z)\|&=\|\nabla f(\theta;z)-\nabla f(\nu^\prime;z)+\nabla f(\nu^\prime;z)-\nabla h(\theta;z)\|\\
    &\le\|\nabla f(\theta;z)-\nabla f(\nu^\prime;z)\|+\|\nabla h(\theta;z)-\nabla h_{z,p_1,\dots,p_K}(\nu^\prime)\|\\
    &\le \beta^\prime\kappa\sqrt{K}+\sqrt{K}\max_{j\in[K]}\left\{\left\|\frac{\partial}{\partial\theta_j}h_{z,p_1,\dots,p_K}(\theta)-\frac{\partial}{\partial\theta_j}h_{z,p_1,\dots,p_K}(\nu^\prime)\right\|\right\}\\
    &\le (\beta+\beta^\prime)\kappa\sqrt{K}\\
    &\le\frac1{12\eta LTn}
\end{align*}
The first inequality holds since $\nabla f(\nu^\prime;z)=\nabla h_{z,p_1,\dots,p_K}(\nu^\prime)$ by \eqref{eq:pfthm:softkmeans2} and other inequalities holds from the definitions of $\beta$ and $\beta^\prime$. 
Now, we apply Theorem~\ref{thm:approxpiecestrconvsmooth} with
\begin{align*}
h\leftarrow h,~P\leftarrow P^K,~\xi\leftarrow \frac1{12\eta LTn},~T\leftarrow T,~\alpha\leftarrow\alpha,~\beta\leftarrow\beta,~L\leftarrow L,~B\leftarrow B.
\end{align*}
Then in the bound of Theorem~\ref{thm:approxpiecestrconvsmooth}, we have
\begin{align*}
    2L\bigg(\gamma^{T}R+{\color{black}\frac{1-\gamma^T}{1-\gamma}}\eta\xi\bigg)\le\frac1n
\end{align*}
since $\gamma^TR\le1/(3Ln)$ and $\frac{1-\gamma^T}{1-\gamma}=\sum_{j=0}^{T-1}\gamma^{j}\le T$.
This completes the proof of Theorem~\ref{thm:softkmeans}.

\subsection{Proof of Theorem~\ref{thm:hardkmeans}}\label{sec:pfthm:hardkmeans}
For $i\in[n]$ and $j\in[K]$, we first define $g_{i,j}$ as
\begin{align*}
    g_{i,j}(\theta):=&(\theta^\prime_k)_{k=1}^K\quad\text{where}\quad\theta^\prime_k=\begin{cases}
    \theta_k-2\eta(\theta_k-z_i)~&\text{if}~k=j\\
    \theta_k~&\text{if}~k\ne j
    \end{cases}.
\end{align*}
For $\varepsilon:=\frac1{8Rn}$, We define
\begin{align*}
    &\Phi_\varepsilon:=\{(\theta^\prime_j)_{j=1}^K:\theta^\prime_j=\big(g_{i_{j,t_j},j}\circ\cdots\circ g_{i_{j,1},j}(\theta^{(0)})\big)_j,t_j\in[T]\cup\{0\},i_{j,1},\dots,i_{j,t_j}\in[n],~\forall j\in[K]\}.\notag
\end{align*}
Given $\theta^{(0)}$, since $g_{i,j}$ is $\gamma$-contractive by Lemma~\ref{lem:strconvsmooth}, we have for any $t\ge0$, $i_1,\dots,i_t\in[n]$, and $j\in[K]$,
\begin{align*}
    \big(g_{i_t}\circ\cdots\circ g_{i_1}(\theta^{(0)})\big)_j\in\bigcup_{\phi\in\Phi_\varepsilon}\mathcal B_{K^{-1/2}\varepsilon}^d(\phi_j),
\end{align*}
i.e. 
\begin{align}
    g_{i_t}\circ\cdots\circ g_{i_1}(\theta^{(0)})\in\bigcup_{\phi\in\Phi_\varepsilon}\mathcal B_{\varepsilon}^d(\phi),\label{eq:pfthm:hardkmeans}
\end{align}
Namely, $\Phi_\varepsilon$ contains all possible SGD parameters starting from $\theta^{(0)}$ where each element in $\Phi_\varepsilon$ is an algorithm depending on at most $KT$ samples.
Since each $f(\,\cdot\,;z)$ is $(4R)$-Lipschitz on $\Theta$,
we apply Theorem~\ref{thm:covering} with
\begin{align*}
&\Phi_n\leftarrow\{g_{i_t}\circ\cdots\circ g_{i_1}(\theta^{(0)}):t\ge0,i_1,\dots,i_t\in[n]\}\\
&\Phi_{T,\varepsilon}\leftarrow\Phi_\varepsilon,~\varepsilon\leftarrow\varepsilon,~T\leftarrow KT,~\delta\leftarrow\delta,~L\leftarrow4R,~B\leftarrow B.
\end{align*}
where 
\begin{align*}
|\Phi_\varepsilon|&\le\sum_{t_1,\dots,t_K\in[T]\cup\{0\}}n^{\sum_{k=1}^K t_k}\le\sum_{t_1,\dots,t_K\in[T]\cup\{0\}}n^{KT}\le(T+1)^K\cdot n^{KT}\le(2n)^{KT}
\end{align*}
and the assumption \eqref{eq:coveringasm} in Theorem~\ref{thm:covering} is satisfied by \eqref{eq:pfthm:hardkmeans}. This leads us to the bound in Theorem~\ref{thm:hardkmeans}.

\newpage
\section{Expected generalization gap: Strongly convex and smooth case}\label{sec:expbound}
In this section, we bound the expected generalization gap using our localized cover for a contractive iterative stochastic optimizer $g$ (see Definition~\ref{def:optimizer}), i.e. each $g(\,\cdot\,;z)$  is contractive for all $z\in\mathcal Z$. 
Here, we use $\mathcal S$ for denoting the set of samples $\{z_1,\dots,z_n\}$ and $g_i$ for denoting $g(\,\cdot\,;z_i)$.
\begin{theorem}\label{thm:contractiveexp1}
Suppose that Assumptions \ref{asm:lipschitz} \& \ref{asm:boundedval} hold and there exists $\gamma\in(0,1)$ such that $g(\,\cdot\,;z)$ is $\gamma$-contractive on $\Theta\subseteq\mathcal B_R^d(0)$ for all $z\in\mathcal Z$.
Let 
$T:=\max\left\{\left\lceil\frac{\log(2LRn)}{\log(1/\gamma)}\right\rceil,0\right\}$. Then for any $\theta^{(0)}\in\Theta$, $t\ge0$, and $i_1,\dots,i_t\in[n]$, the $t$-th iterate $\theta^{(t)}:=g_{i_t}\circ\cdots\circ g_{i_1}(\theta^{(0)})$ satisfies 
\begin{align*}
|\mathbb E_{\mathcal S}[\hat F(\theta^{(t)})-F(\theta^{(t)})]|&\le\frac{BT+1}n.
\end{align*}
\end{theorem}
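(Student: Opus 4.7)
The plan is to reuse the localized cover from Lemma~\ref{lem:cover} but replace the concentration-plus-union-bound step used in Theorem~\ref{thm:strconvsmooth} with a direct expectation argument that exploits the fact that each cover element depends on only a small number of samples.

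First, I would split the argument into two regimes in $t$. If $t\le T$, then $\theta^{(t)}=g_{i_t}\circ\cdots\circ g_{i_1}(\theta^{(0)})$ is itself an algorithm depending on at most $T$ samples, namely those indexed by $\mathcal I\subseteq\{i_1,\dots,i_t\}$ with $|\mathcal I|\le T$, so there is nothing to localize. If $t>T$, I invoke the contractivity exactly as in Lemma~\ref{lem:cover}: setting $\phi:=g_{i_t}\circ\cdots\circ g_{i_{t-T+1}}(0)\in\Psi_T(0)$, the $\gamma$-contractivity of each $g_i$ and $\|\theta^{(t-T)}\|\le R$ give
\begin{align*}
\|\theta^{(t)}-\phi\|\le\gamma^T R\le\frac{1}{2Ln},
\end{align*}
by the choice of $T$. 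Crucially $\phi$ depends on only the $T$ samples indexed by $\{i_{t-T+1},\dots,i_t\}$.

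Next I would translate closeness in $\Theta$ to closeness of the generalization gap using Assumption~\ref{asm:lipschitz}. Writing $\Delta(\theta):=\hat F(\theta)-F(\theta)$, Assumption~\ref{asm:lipschitz} applied sample-wise and then averaged (resp.\ taken in expectation) yields
\begin{align*}
|\Delta(\theta^{(t)})-\Delta(\phi)|\le 2L\,\|\theta^{(t)}-\phi\|\le\frac{1}{n},
\end{align*}
because the $h$-terms cancel in the difference $\hat F-F$. Hence $|\mathbb E_{\mathcal S}[\Delta(\theta^{(t)})]|\le |\mathbb E_{\mathcal S}[\Delta(\phi)]|+1/n$.

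It remains to show $|\mathbb E_{\mathcal S}[\Delta(\phi)]|\le BT/n$ for any algorithm $\phi\in\mathcal A_T$ (Definition~\ref{def:funcsamples}) with determining index set $\mathcal I$ of size $\le T$. Here is the main point where this argument diverges from the high-probability proof: instead of Hoeffding, I split
\begin{align*}
\mathbb E_{\mathcal S}[\Delta(\phi)]=\frac1n\sum_{i\in\mathcal I}\mathbb E_{\mathcal S}\bigl[f(\phi;z_i)-F(\phi)\bigr]+\frac1n\sum_{i\notin\mathcal I}\mathbb E_{\mathcal S}\bigl[f(\phi;z_i)-F(\phi)\bigr].
\end{align*}
For $i\notin\mathcal I$, sample $z_i$ is independent of $\phi$, so conditioning on $(z_j)_{j\in\mathcal I}$ and taking the inner expectation over $z_i$ gives $\mathbb E[f(\phi;z_i)\mid z_{\mathcal I}]=F(\phi)$, killing that term exactly. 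For $i\in\mathcal I$, Assumption~\ref{asm:boundedval} bounds each summand by $B$, contributing at most $B|\mathcal I|/n\le BT/n$. Combining with the previous display gives the claimed bound $(BT+1)/n$; the $t\le T$ case drops the $1/n$ slack, so the same bound holds uniformly in $t$.

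The main non-obstacle (worth flagging rather than a real obstacle) is ensuring the dependence structure is handled correctly: $\phi$ depends on $t$ and on the realized sample sequence through the $T$ indices $\{i_{t-T+1},\dots,i_t\}$, but since those indices are deterministic (the theorem fixes $i_1,\dots,i_t$), $\phi$ is a fixed measurable function of those $T$ samples, which is exactly what the expectation-splitting step requires. Everything else is routine given Lemma~\ref{lem:cover} and Assumptions~\ref{asm:lipschitz}~\&~\ref{asm:boundedval}.
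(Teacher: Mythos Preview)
Your proposal is correct and follows essentially the same route as the paper: approximate $\theta^{(t)}$ by an element $\phi$ depending on at most $T$ samples via contractivity, transfer the gap using Assumption~\ref{asm:lipschitz}, then split $\mathbb E_{\mathcal S}[\hat F(\phi)-F(\phi)]$ over $\mathcal I$ and $[n]\setminus\mathcal I$ so that the independent part vanishes exactly and the dependent part costs $BT/n$. Your handling of the $t\le T$ case (using $\theta^{(t)}$ itself as the $T$-sample algorithm) and your direct bound on $|\Delta(\theta^{(t)})-\Delta(\phi)|$ with the $h$-terms cancelling are minor cosmetic variations of the paper's argument.
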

Compared to the uniform stability-based bound for $\gamma$-contractive $\frac1n\sum_{i=1}^ng_i$ under $L^\prime$-Lipschitz $f(\,\cdot\,;z)$ and $\|g(\theta;z)\|\le K$ \citep{kozachkov22}
\begin{align}
|\mathbb E_{\theta^{(0)},i_1,\dots,i_t,\mathcal S}[\hat F(\theta^{(t)})-F(\theta^{(t)})]|\le\frac{2KL^\prime}{(1-\gamma) n},\label{eq:stabilitybound}
\end{align}
our bound in Theorem~\ref{thm:contractiveexp1} is linear in $B$, has an additional $\log n$ factor, and requires contractive $g(\,\cdot\,;z)$ but does not depend on $L^\dagger$ and $K$. 
Here, note that our bound uniformly holds for all parameters generated by SGD while the bound \eqref{eq:stabilitybound} requires the expectation over $\theta^{(0)}$ and the indices $i_1,\dots,i_t$ used for the $t$ updates using the iterative stochastic optimizer. 

Although the bounds in Theorem~\ref{thm:contractiveexp1} and \eqref{eq:stabilitybound} provide some information about the expected generalization gap, these bounds cannot be used for understanding the absolute deviation $|\hat F(\theta^{(t)})-F(\theta^{(t)})|$, which can be of practical interest. 
In the following theorem, we provide a bound on the expectation of the absolute generalization gap where $\Psi_{\ge T}(\theta)$ and $\Psi_{T}(\theta)$ are defined for general $g_i$, analogous to Definition~\ref{def:cover}:
\begin{align}
    \Psi_T(\theta^{(0)}) :=&
    \{g_{i_T}\circ\cdots\circ g_{i_1}(\theta^{(0)}): i_1,\dots,i_T\in[n]\},\notag\\
    \Psi_{\geq T}(\theta^{(0)}) :=& \bigcup_{t\geq T}\Psi_t(\theta^{(0)}).\label{eq:cover}
\end{align}

\begin{theorem}\label{thm:contractiveexp2}
Assume the setup in Theorem~\ref{thm:contractiveexp1}. Then there exists some absolute constant $C>0$ such that
\begin{align*}
    \mathbb E_{\mathcal S}\left[\sup_{\theta^{(t)}\in\bigcup_{\theta^{(0)}\in\Theta}\Psi_{\ge T}(\theta^{(0)})}|\hat F(\theta^{(t)})-F(\theta^{(t)})|\right]\le\frac{BT+1}n+CB\sqrt{\frac{T\log n}{n}}.
\end{align*}
\end{theorem}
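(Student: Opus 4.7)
The plan is to first derive the high-probability analog of Theorem~\ref{thm:strconvsmooth} for a general contractive $g$, and then recover the expectation statement by integrating the resulting tail. The first step applies Lemma~\ref{lem:cover} with $\varepsilon = 1/(2Ln)$ and the $T$ prescribed in the statement; its only hypothesis is that the $g_i$'s be $\gamma$-contractive, which is exactly what is assumed. This yields
\[
\bigcup_{\theta^{(0)}\in\Theta}\Psi_{\ge T}(\theta^{(0)}) \ \subseteq\ \bigcup_{\phi\in\Psi_T(0)}\mathcal B_\varepsilon^d(\phi),
\]
and since Assumption~\ref{asm:lipschitz} makes $\hat F - F$ a $2L$-Lipschitz function on $\Theta$, this lets us replace the supremum over the full trajectory set by a supremum over the (much smaller) localized cover:
\[
X\ :=\ \sup_{\theta^{(t)}\in\bigcup_{\theta^{(0)}}\Psi_{\ge T}(\theta^{(0)})}\!\!\!\bigl|\hat F(\theta^{(t)})-F(\theta^{(t)})\bigr|\ \le\ \sup_{\phi\in\Psi_T(0)}\bigl|\hat F(\phi)-F(\phi)\bigr|\ +\ \tfrac{1}{n}.
\]

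The second step converts this into a uniform tail bound. Every $\phi\in\Psi_T(0)$ is a \emph{deterministic} function of at most $T$ samples, indexed by a nonrandom set $\mathcal I_\phi$. Splitting the empirical average over $\mathcal I_\phi$ and its complement and invoking Assumption~\ref{asm:boundedval},
\[
|\hat F(\phi)-F(\phi)|\ \le\ \frac{BT}{n} + \bigg|\frac{1}{n}\sum_{i\notin\mathcal I_\phi}\bigl(f(\phi;z_i)-F(\phi)\bigr)\bigg|.
\]
Conditional on $\{z_i\}_{i\in\mathcal I_\phi}$, the parameter $\phi$ is frozen, and the remaining $n - |\mathcal I_\phi|$ terms form an average of i.i.d.\ mean-zero variables of range at most $B$. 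Hoeffding's inequality (Lemma~\ref{lem:boundedtail}) combined with a union bound over $|\Psi_T(0)|\le n^T$ yields, for every $u\ge 0$,
\[
\mathbb{P}\Bigl(X > \tfrac{BT+1}{n} + B\sqrt{(T\log n + u)/(2n)}\Bigr)\ \le\ 2\, e^{-u},
\]
which is exactly the argument already used in the proof of Theorem~\ref{thm:strconvsmooth}, with contractivity playing the role of strong convexity plus smoothness.

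The final step is to integrate the tail. Using $\sqrt{T\log n+u}\le\sqrt{T\log n}+\sqrt{u}$ and writing $a:=\frac{BT+1}{n}+B\sqrt{T\log n/(2n)}$, the tail bound becomes $\mathbb{P}(X > a + s)\le 2\exp(-2ns^2/B^2)$ after substituting $s = B\sqrt{u/(2n)}$, hence
\[
\mathbb{E}[X]\ \le\ a + \int_0^\infty 2\, e^{-2ns^2/B^2}\,ds\ =\ a + B\sqrt{\pi/(2n)}\ \le\ \frac{BT+1}{n} + CB\sqrt{\frac{T\log n}{n}}
\]
for an absolute constant $C$, which is the desired bound. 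All three steps are individually routine; the only point that deserves care is the uniformity in Step 2, which hinges on the observation that each $\phi\in\Psi_T(0)$ is defined by a nonrandom $T$-element index set, so that the residual $n - |\mathcal I_\phi|$ summands in $\hat F(\phi) - F(\phi)$ are genuinely i.i.d.\ and independent of $\phi$; once that structural fact is made explicit, everything reduces to a standard Hoeffding + union bound + tail-integration calculation.
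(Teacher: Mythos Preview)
Your proof is correct and follows essentially the same route as the paper: both arguments use Lemma~\ref{lem:cover} with $\varepsilon=1/(2Ln)$ to reduce to a supremum over $\Psi_T(0)$, then split each $|\hat F(\phi)-F(\phi)|$ into the $BT/n$ contribution from $\mathcal I_\phi$ and an average of $n-|\mathcal I_\phi|$ i.i.d.\ bounded terms independent of $\phi$. The only difference is in the last step: the paper bounds $\mathbb{E}_{\mathcal S}[\sup_\phi|\cdot|]$ directly via the sub-Gaussian maximal inequality (citing Vershynin's Proposition~2.5.2 and Exercise~2.5.10), whereas you first establish the high-probability bound from Theorem~\ref{thm:strconvsmooth} and then integrate the tail---a self-contained variant that avoids the external reference and yields the same $CB\sqrt{T\log n/n}$ term.
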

To our knowledge, the bound in Theorem~\ref{thm:contractiveexp2} is the first bound on $\mathbb E_{\mathcal S}[|\hat F(\theta^{(t)})-F(\theta^{(t)})|]$ for the contractive case, including strongly convex and smooth functions.
Compared to Theorem~\ref{thm:contractiveexp1}, the bound in Theorem~\ref{thm:contractiveexp2} looses $O(\sqrt{T/(n\log n)})$ factor for taking the absolute value inside the expectation. Nevertheless, this bound can be improved by removing the supremum inside the expectation.
\begin{corollary}\label{cor:contractiveexp3}
Assume the setup in Theorem~\ref{thm:strconvsmooth}. Then there exists some absolute constant $C>0$ such that for any $\theta^{(0)}\in\Theta$, $t\ge0$, and $i_1,\dots,i_t\in[n]$, we have
\begin{align*}
    \mathbb E_{\mathcal S}\left[|\hat F(\theta^{(t)})-F(\theta^{(t)})|\right]\le\frac{BT+1}n+CB\sqrt{\frac1{n}}.
\end{align*}
\end{corollary}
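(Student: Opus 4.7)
\textbf{Proof proposal for Corollary~\ref{cor:contractiveexp3}.}

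The plan is to reuse the localization from the proof of Corollary~\ref{cor:strconvsmooth}, but instead of invoking a concentration inequality with the union bound, I will take expectation directly and exploit conditional independence. This is what removes the $\sqrt{\log n}$ factor appearing in Theorem~\ref{thm:contractiveexp2}.

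\textbf{Step 1 (localize to a low-sample-complexity surrogate).} Fix $\theta^{(0)}\in\Theta$, $t\ge 0$, and $i_1,\dots,i_t\in[n]$, and set $\theta^{(t)}=g_{i_t}\circ\cdots\circ g_{i_1}(\theta^{(0)})$. If $t\ge T$, define the surrogate $\phi:=g_{i_t}\circ\cdots\circ g_{i_{t-T+1}}(0)$, which depends on at most $T$ of the samples; by $\gamma$-contractivity (Lemma~\ref{lem:strconvsmooth}) and the choice of $T$,
\begin{align*}
  \|\theta^{(t)}-\phi\|\le\gamma^T R\le \frac{1}{2Ln}.
\end{align*}
If $t<T$, take $\phi:=\theta^{(t)}$ directly, which already depends on at most $t\le T$ samples, and the same bound holds with $0$ on the right. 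In either case, Assumption~\ref{asm:lipschitz} gives
\begin{align*}
  |\hat F(\theta^{(t)})-F(\theta^{(t)})|\le |\hat F(\phi)-F(\phi)|+\frac{1}{n}.
\end{align*}

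\textbf{Step 2 (split the empirical sum by sample dependence).} Let $\mathcal I_\phi\subseteq[n]$ be the index set determining $\phi$, with $|\mathcal I_\phi|\le T$. Decompose
\begin{align*}
  \hat F(\phi)-F(\phi)=\frac{1}{n}\sum_{i\in\mathcal I_\phi}\bigl(f(\phi;z_i)-F(\phi)\bigr)+\frac{1}{n}\sum_{i\notin\mathcal I_\phi}\bigl(f(\phi;z_i)-F(\phi)\bigr).
\end{align*}
By Assumption~\ref{asm:boundedval}, the first sum is bounded in absolute value by $BT/n$ almost surely.

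\textbf{Step 3 (variance bound via conditioning).} Condition on $\{z_j\}_{j\in\mathcal I_\phi}$. Then $\phi$ is deterministic, and for each $i\notin\mathcal I_\phi$, $z_i$ is independent of $\phi$, so $\mathbb E[f(\phi;z_i)\mid\{z_j\}_{j\in\mathcal I_\phi}]=F(\phi)$, and $\mathrm{Var}(f(\phi;z_i)\mid\{z_j\}_{j\in\mathcal I_\phi})\le B^2/4$ by Assumption~\ref{asm:boundedval}. By Jensen's inequality and independence,
\begin{align*}
  \mathbb E\!\left[\Bigl|\tfrac{1}{n}\!\sum_{i\notin\mathcal I_\phi}\!\!(f(\phi;z_i)-F(\phi))\Bigr|\,\Big|\,\{z_j\}_{j\in\mathcal I_\phi}\right]\le \frac{1}{n}\sqrt{\sum_{i\notin\mathcal I_\phi}\mathrm{Var}(f(\phi;z_i)\mid\cdot)}\le \frac{B}{2\sqrt{n}}.
\end{align*}
Taking outer expectation over $\{z_j\}_{j\in\mathcal I_\phi}$ preserves this bound. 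Combining with Step~2 and Step~1 yields
\begin{align*}
  \mathbb E_{\mathcal S}\bigl[|\hat F(\theta^{(t)})-F(\theta^{(t)})|\bigr]\le \frac{BT+1}{n}+\frac{B}{2\sqrt{n}},
\end{align*}
which is the claim with $C=1/2$.

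\textbf{Where the difficulty lies.} The step that matters is Step~3: the whole point is to avoid the union bound over $|\Psi_T(0)|=n^{O(1)}$ elements that was responsible for the extra $\sqrt{T\log n}$ in Theorem~\ref{thm:contractiveexp2}. Since here we only target a single, fixed trajectory, conditioning on the $T$ sample indices that determine $\phi$ reduces the problem to bounding a centered i.i.d.\ average of bounded random variables in $L^1$, which costs only the $\sqrt{\mathrm{Var}/n}$ factor with no logarithm. Once this observation is in place, the remaining bookkeeping (handling $t<T$, collecting constants, and transferring from $\phi$ to $\theta^{(t)}$ via the Lipschitz-type Assumption~\ref{asm:lipschitz}) is routine.
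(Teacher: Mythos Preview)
Your proof is correct and follows essentially the same route as the paper: localize $\theta^{(t)}$ to a surrogate $\phi$ depending on at most $T$ samples, split the empirical sum into the $\le T$ ``dependent'' indices (bounded by $BT/n$ via Assumption~\ref{asm:boundedval}) and the remaining ``independent'' indices, and bound the latter in $L^1$ by conditioning on $\{z_j\}_{j\in\mathcal I_\phi}$. The only cosmetic difference is that the paper cites sub-Gaussian moment bounds from Vershynin to get $CB/\sqrt{n}$, whereas your Jensen\,+\,variance argument is slightly more elementary and delivers the explicit constant $C=1/2$.
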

Compared to Theorem~\ref{thm:contractiveexp2}, the bound in Corollary~\ref{cor:contractiveexp3} does not have $\sqrt{T\log n}$ factor but the LHS in the bound is weaker.

\subsection{Proof of Theorem~\ref{thm:contractiveexp1}}\label{sec:pfthm:contractiveexp1}
For any $\theta^{(t)}\in\bigcup_{\theta^{(0)}\in\Theta}\Psi_{\ge0}(\theta^{(0)})$,
let $\theta^{(t)}=g_{i_t}\circ\cdots\circ g_{i_1}(\theta^{(0)})$. Let $\phi=g_{i_t}\circ\cdots\circ g_{i_{t-T+1}}(0)$ and $\mathcal I=\{i_{t-T+1},\dots,i_t\}\subset[n]$ if $t>T$ and $\phi=g_{i_t}\circ\cdots\circ g_{i_1}(0)$ and $\mathcal I=\{i_1,\dots,i_t\}\subset[n]$ otherwise. 
Then as in the proof of Lemma~\ref{lem:cover}, we have
\begin{align}
\theta^{(t)}\in\mathcal B_\varepsilon^d(\phi)\label{eq:pfthm:strconvsmoothexp1}    
\end{align}
for $\varepsilon=1/(2Ln)$, regardless of the choice of $\mathcal S$. Using this, we have
\begin{align*}
    &|\mathbb E_{\mathcal S}[\hat F(\theta^{(t)})-F(\theta^{(t)})]|\le|\mathbb E_{\mathcal S}[(\hat F(\theta^{(t)})-\hat F(\phi)]|+|\mathbb E_{\mathcal S}[F(\theta^{(t)})-F(\phi)]|+|\mathbb E_{\mathcal S}[\hat F(\phi)-F(\phi)]|\\
    &\le2L\varepsilon+|\mathbb E_{\mathcal S}[\hat F(\phi)-F(\phi)]|\\
    &\le2L\varepsilon+\bigg|\mathbb E_{\mathcal S}\bigg[\frac{1}n\sum_{i\in[n]\setminus\mathcal I}(\hat f(\phi;z_i)-F(\phi))\bigg]\bigg|+\bigg|\mathbb E_{z_{i}:i\in\mathcal I}\bigg[\frac{1}n\sum_{i\in\mathcal I}(\hat f(\phi;z_i)-F(\phi))\bigg]\bigg|\\
    &=2L\varepsilon+\bigg|\mathbb E_{z_{i}:i\in\mathcal I}\bigg[\frac{1}n\sum_{i\in\mathcal I}(\hat f(\phi;z_i)-F(\phi))\bigg]\bigg|\\
    &\le2L\varepsilon+\frac{BT}n=\frac{BT+1}n.
\end{align*}
The first inequality is from the triangle inequality and the second inequality is from \eqref{eq:pfthm:strconvsmoothexp1}. The third inequality is again from the triangle inequality and the first equality is from the fact that $\phi$ is independent of $\{z_i:i\in\mathcal I\}$.
The last inequality is from Assumption~\ref{asm:boundedval}.
Since the above bound holds for any $\theta^{(0)}\in\Theta$ and $\mathcal I_t$, this completes the proof of Theorem~\ref{thm:contractiveexp1}.

\subsection{Proof of Theorem~\ref{thm:contractiveexp2}}\label{sec:pfthm:contractiveexp2}
In this proof, we assume $T<n$ since the statement trivially follows otherwise. As in the statement of Lemma~\ref{lem:cover}, one can observe that 
\begin{align}
    \bigcup_{\theta^{(0)}\in\Theta}\Psi_{\geq T}(\theta^{(0)}) \subseteq \bigcup_{\phi \in \Psi_T(0)}\mathcal B_\varepsilon^d(\phi)\label{eq:pfthm:strconvsmoothexp2}
\end{align}
for $\varepsilon=1/(2Ln)$.
For each $\phi\in\Psi_T(0)$, let $\mathcal I_\phi:=\{i_1,\dots,i_T\}$ such that $\phi=g_{i_T}\circ\cdots\circ g_{i_1}(0)$.
Using this we can derive the following bound: for $\Pi_{\Psi_T(0)}(\theta^{(t)}):=\arg\min_{\phi\in\Psi_T(0)}\|\theta^{(t)}-\phi\|$, 
\begin{align*}
    &\mathbb E_{\mathcal S}\left[\sup_{\theta^{(t)}\in\bigcup_{\theta^{(0)}\in\Theta}\Psi_{\ge T}(\theta^{(0)})}|\hat F(\theta^{(t)})-F(\theta^{(t)})|\right]\le\mathbb E_{\mathcal S}\left[\sup_{\phi\in\Psi_T(0)}|\hat F(\phi)-F(\phi)|\right]\\
    &~~+\mathbb E_{\mathcal S}\left[\sup_{\theta^{(t)}\in\bigcup_{\theta^{(0)}\in\Theta}\Psi_{\ge T}(\theta^{(0)})}|\hat F(\theta^{(t)})-\hat F(\Pi_{\Psi_T(0)}(\theta^{(t)}))|+|F(\theta^{(t)})-F(\Pi_{\Psi_T(0)}(\theta^{(t)}))|\right]\\
    &\le 2L\varepsilon+\mathbb E_{\mathcal S}\left[\sup_{\phi\in\Psi_T(0)}|\hat F(\phi)-F(\phi)|\right]\\
    &\le\frac1n+\mathbb E_{\mathcal S}\left[\sup_{\phi\in\Psi_T(0)}\left|\frac{1}n\sum_{i\in\mathcal I_\phi}(f(\phi;z_i)-F(\phi))\right|\right]+\mathbb E_{\mathcal S}\left[\sup_{\phi\in\Psi_T(0)}\left|\frac{1}n\sum_{i\in[n]\setminus\mathcal I_\phi}(f(\phi;z_i)-F(\phi))\right|\right]\\
    &\le\frac{BT+1}n+\mathbb E_{\mathcal S}\left[\sup_{\phi\in\Psi_T(0)}\left|\frac{|[n]\setminus\mathcal I_\phi|}n\cdot\frac1{|[n]\setminus\mathcal I_\phi|}\sum_{i\in[n]\setminus\mathcal I_\phi}(f(\phi;z_i)-F(\phi))\right|\right]\\
    &\le\frac{BT+1}n+CB\sqrt{\frac{T\log n}{n}}.
\end{align*}
The first inequality is from the triangle inequality and the second inequality is from \eqref{eq:pfthm:strconvsmoothexp2}. 
The third inequality is again from the triangle inequality and the definition of $\varepsilon$, and the fourth inequality is from Assumption~\ref{asm:boundedval}. Since 
each $\frac1{|[n]\setminus\mathcal I_\phi|}\sum_{i\in[n]\setminus\mathcal I_\phi}(f(\phi;z_i)-F(\phi))$ is sub-Gaussian with the sub-Gaussian norm bounded by $cB/{\sqrt{|[n]\setminus\mathcal I_\phi|}}$ for some absolute constant $c$ (see Proposition 2.5.2 in \cite{vershynin18}),
the last inequality follows from a standard upper bound for empirical processes (see Exercise 2.5.10 in \cite{vershynin18}). 
This completes the proof of Theorem~\ref{thm:contractiveexp2}.

\subsection{Proof of Corollary~\ref{cor:contractiveexp3}}\label{sec:pfcor:contractiveexp3}
As in the proof of Theorem~\ref{thm:contractiveexp2}, we assume that $T<n$ without loss of generality. The proof here is almost identical to that of Theorems~\ref{thm:contractiveexp1} \& \ref{thm:contractiveexp2}.

For any $\theta^{(t)}\in\bigcup_{\theta^{(0)}\in\Theta}\Psi_{\ge0}(\theta^{(0)})$,
let $\theta^{(t)}=g_{i_t}\circ\cdots\circ g_{i_1}(\theta^{(0)})$. Let $\phi=g_{i_t}\circ\cdots\circ g_{i_{t-T+1}}(0)$ and $\mathcal I=\{i_{t-T+1},\dots,i_t\}\subset[n]$ if $t>T$ and $\phi=g_{i_t}\circ\cdots\circ g_{i_1}(0)$ and $\mathcal I=\{i_1,\dots,i_t\}\subset[n]$ otherwise. Then for $\varepsilon=1/(2Ln)$, we have
\begin{align}
    \theta^{(t)}\in\mathcal B_\varepsilon(\phi).\label{eq:pfcor:strconvsmoothexp3} 
\end{align}
Using this, we derive the following inequality:
\begin{align*}
    &\mathbb E_{\mathcal S}[|\hat F(\theta^{(t)})-F(\theta^{(t)})|]\le\mathbb E_{\mathcal S}[|(\hat F(\theta^{(t)})-\hat F(\phi)|]+\mathbb E_{\mathcal S}[|F(\theta^{(t)})-F(\phi)|]+\mathbb E_{\mathcal S}[|\hat F(\phi)-F(\phi)|]\\
    &\le2L\varepsilon+\mathbb E_{\mathcal S}[|\hat F(\phi)-F(\phi)|]\\
    &\le2L\varepsilon+\mathbb E_{\mathcal S}\bigg[\bigg|\frac{1}n\sum_{i\in[n]\setminus\mathcal I}(\hat f(\phi;z_i)-F(\phi))\bigg|\bigg]+\mathbb E_{z_{i}:i\in\mathcal I}\bigg[\bigg|\frac{1}n\sum_{i\in\mathcal I}(\hat f(\phi;z_i)-F(\phi))\bigg|\bigg]\\
    &\le2L\varepsilon+\mathbb E_{\mathcal S}\bigg[\bigg|\frac{1}n\sum_{i\in[n]\setminus\mathcal I}(\hat f(\phi;z_i)-F(\phi))\bigg|\bigg]+\frac{BT}n\\
    &\le\frac{BT+1}n+\frac{|[n]\setminus\mathcal I|}{n}\mathbb E_{z_{i}:i\in[n]\setminus\mathcal I}\bigg[\bigg|\frac1{|[n]\setminus\mathcal I|}\sum_{i\in[n]\setminus\mathcal I}(\hat f(\phi;z_i)-F(\phi))\bigg|\bigg]\\
    &\le\frac{BT+1}n+\frac{|[n]\setminus\mathcal I|}n\left(CB\sqrt{\frac{1}{|[n]\setminus\mathcal I|}}\right)\\
    &\le\frac{BT+1}n+CB\sqrt{\frac{1}{n}}.
\end{align*}
The first and second inequality follows from the triangle inequality and \eqref{eq:pfcor:strconvsmoothexp3}, respectively. 
The third inequality is again from the triangle inequality while Assumption~\ref{asm:boundedval} gives us the fourth inequality. The sixth inequality is from Proposition 2.5.2 and Exercise 2.5.10 in \cite{vershynin18} where the last inequality naturally follows. Since the above bound holds for any $\theta^{(0)}\in\Theta$ and $\mathcal I_t$, this completes the proof of Corollary~\ref{cor:contractiveexp3}

\newpage
\section{Comparison with existing bounds}\label{sec:comparison}
\renewcommand{\arraystretch}{1.4}
\begin{table}[H]
\centering
{\caption{Summary of generalization bounds for constant step-size SGD. In the column ``Assumptions'', Lip.\ assumes $L^\prime$-Lipschitz $f(\,\cdot\,;z)$, Weak Lip.\ assumes Assumption~\ref{asm:lipschitz}, Bdd.\ assumes $f(\Theta;\mathcal Z)\subset[0,B^\prime]$, and Bdd.\ Dev.\ assumes Assumption~\ref{asm:boundedval}.
If LHS of a bound does not contain expectation, that bound is a high-probability bound. 
For simplicity, we hide values other than $d,t,T,L,L^\prime,B,B^\prime,P,\xi,n$ in~$\lesssim$ where $T=\log(LRn)$ and $\Theta\subset\mathcal B_R^d(0)$. }\label{table:comparison}
\footnotesize
\begin{tabular}{|c|c|c|c|c|}
\hline
\multirow{2}{*}{Reference} & \multirow{2}{*}{Objective} & \multirow{2}{*}{Assumptions} & \multirow{2}{*}{\!\!\makecell{Stable\\as $t\!\to\!\infty$}\!\!} & \multirow{2}{*}{Bound: $\Delta^{\!(t)}\!=\!\hat F(\theta^{(t)})\!-\!F(\theta^{(t)})$}\\
& & & &\\\hline
\multirow{1}{*}{\makecell{\cite{hardt15}}}
& \multirow{6}{*}{\makecell{Strongly convex\\
\& smooth}} & \multirow{1}{*}{Lip.}& \multirow{6}{*}{\large\cmark} & \multirow{1}{*}{$|\mathbb E[\Delta^{\!(t)}]|\lesssim{(L^\prime)^2}/{n}\,^*$}\\
\cline{1-1}\cline{3-3}\cline{5-5}
Thm~\ref{thm:contractiveexp1}
& & \multirow{5}{*}{\makecell{Weak Lip.,\\\& Bdd.\ Dev.}}& &$|\mathbb E[\Delta^{\!(t)}]|\lesssim {BT}/n\,^{\|}$\\
\cline{1-1}\cline{5-5}
Thm~\ref{thm:contractiveexp2} &  & & &$\mathbb E[\sup|\Delta^{\!(t)}|]\lesssim B\sqrt{{T\log n}/{n}}\,^{\ddagger\|}$\\\cline{1-1}\cline{5-5}
Cor~\ref{cor:contractiveexp3} & &  & & $\mathbb E[|\Delta^{\!(t)}|]\lesssim B\sqrt{1/{n}}\,^{\|}$\\\cline{1-1}\cline{5-5}
Thm~\ref{thm:strconvsmooth} & & & & $\sup|\Delta^{\!(t)}|\lesssim B\sqrt{{T\log n}/{n}}\,^\ddagger$\\ \cline{1-1}\cline{5-5}
Cor~\ref{cor:strconvsmooth}--\ref{thm:early} &  & & & $|\Delta^{\!(t)}|\lesssim B\sqrt{{1}/{n}}$\\
\hline
\cite{hardt15} & \multirow{2}{*}{\makecell{Convex\\\& smooth}}& Lip. & \multirow{5}{*}{\large\xmark} & $|\mathbb E[\Delta^{\!(t)}]|\lesssim(L^\prime)^2t/n\,^*$\\\cline{1-1}\cline{3-3}\cline{5-5}
\cite{feldman19} &  & Lip.\ \& Bdd.& & $|\Delta^{\!(t)}|\lesssim t\log^2n/n+\sqrt{1/n}\,^\mathparagraph$\\\cline{1-3}\cline{5-5}
\multirow{2}{*}{\cite{bassily20}} & \multirow{2}{*}{\makecell{Convex\\\& non-smooth}} & Lip. & &$|\mathbb E[\Delta^{\!(t)}]|\lesssim(L^\prime)^2(\sqrt{t}+t/n)\,^*$\\\cline{3-3}\cline{5-5}
 &  & Lip.\ \&  Bdd.& & $|\Delta^{\!(t)}|\lesssim (L^\prime\log n)^2(\sqrt{t}+t/n)+B^\prime\sqrt{1/n}$\\\cline{1-3}\cline{5-5}
\cite{hardt15} & \multirow{2}{*}{\makecell{Non-convex\\\& smooth}} & Lip.& &$|\mathbb E[\Delta^{\!(t)}]|\lesssim{(L^\prime)^{\frac2{\beta c+1}}t^{\frac{\beta c}{\beta c+1}}}/{n}\,^{*\mathsection}$\\
\cline{1-1}\cline{3-5}
 \multirow{4}{*}{Thm~\ref{thm:approxpiecestrconvsmooth}}&  &\multirow{4}{*}{\makecell{Weak Lip.\\\& Bdd.\ Dev.}} & \multirow{4}{*}{\large\cmark}& $\sup|\Delta^{\!(t)}|\lesssim B\sqrt{{dT^2\log n}/{n}}\,^\ddagger$\\
\cline{2-2}\cline{5-5}
 & \multirow{3}{*}{\makecell{Approx.\ piecewise\\ strongly convex\\\& smooth}} & & & \multirow{3}{*}{$\sup|\Delta^{\!(t)}|\lesssim B\sqrt{{T\log (nP)}/{n}}+L\xi\,^\ddagger$}\\
 & & & &\\
 & & & &\\
\hline
\end{tabular}
\vspace{0.05in}
\caption{Summary of generalization bounds for a (piecewise) contractive optimizer with update functions $g_1,\dots,g_n$. The column ``Non-cvx SGD'' evaluates if a bound can be used for SGD on non-convex objectives without diverging as $t\rightarrow\infty$.}\label{table:comparison2}
\begin{tabular}{|c|c|c|c|c|}
\hline
\multirow{2}{*}{Reference} & \multirow{2}{*}{Optimizer} & \multirow{2}{*}{Assumptions} & \multirow{2}{*}{\!\!\makecell{Non-cvx\\SGD}\!\!} & \multirow{2}{*}{Bound: $\Delta^{\!(t)}\!=\!\hat F(\theta^{(t)})\!-\!F(\theta^{(t)})$}\\
& & & &\\\hline
\multirow{2}{*}{\cite{kozachkov22}}
& \multirow{2}{*}{\makecell{Contractive\\ $\frac1n\sum_{i=1}^ng_i$}} & \multirow{2}{*}{\makecell{Lip.\ \& \\$\|g_i(\Theta)\|\le K$}}& \multirow{2}{*}{\large\xmark}& \multirow{2}{*}{$|\mathbb E[\Delta^{\!(t)}]|\lesssim{KL^\prime}/{n}\,^{*}$}\\
& & & & \\
\hline
Thm~\ref{thm:contractiveexp1}
& \multirow{5}{*}{\makecell{Contractive $g_i$}} & \multirow{5}{*}{\makecell{Weak Lip.\\\& Bdd.\ Dev.}}& \multirow{5}{*}{\large\xmark}& $|\mathbb E[\Delta^{\!(t)}]|\lesssim {BT}/n\,^{\|}$\\
\cline{1-1}\cline{5-5}
Thm~\ref{thm:contractiveexp2} & &  & &$\mathbb E[\sup|\Delta^{\!(t)}|]\lesssim B\sqrt{{T\log n}/{n}}\,^{\ddagger\|}$\\\cline{1-1}\cline{5-5}
Cor~\ref{cor:contractiveexp3} & & & & $\mathbb E[|\Delta^{\!(t)}|]\lesssim B\sqrt{1/{n}}\,^{\|}$\\\cline{1-1}\cline{5-5}
Thm~\ref{thm:piececontractive} & & & & $\sup|\Delta^{\!(t)}|\lesssim B\sqrt{{T\log n}/{n}}\,^\ddagger$\\ \cline{1-1}\cline{5-5}
Cor~\ref{cor:strconvsmooth}--\ref{thm:early}\,$^\flat$ & & & & $|\Delta^{\!(t)}|\lesssim B\sqrt{{1}/{n}}$\\
\hline
\multirow{2}{*}{Thm~\ref{thm:piececontractive}} & \multirow{2}{*}{\makecell{Approx.\ piecewise\\contractive $g_i$}} & \multirow{2}{*}{\makecell{Weak Lip.\\\& Bdd.\ Dev.}}& \multirow{2}{*}{\large\cmark} & \multirow{2}{*}{$\sup|\Delta^{\!(t)}|\lesssim B\sqrt{{T\log (nP)}/{n}}+L\xi\,^\ddagger$}\\
& & & &\\
\hline
\end{tabular}
{\footnotesize
\begin{tablenotes}
\item[] $^*$The expectation is taken over $\{z_1,\dots,z_n\},\{i_1,\dots,i_t\},\theta^{(0)}$.
\item[] $^\|$The expectation is taken over $\{z_1,\dots,z_n\}$.
\item[] $^\ddagger$The supremum is taken over $\theta^{(t)}\in\bigcup_{\theta^{(0)}\in\Theta}\Psi_{\ge T}(\theta^{(0)})$ (see Definition~\ref{def:cover} and \eqref{eq:cover}).
\item[] $^\mathsection\beta$ denotes the smoothness parameter and the adaptive learning rate must satisfy $\eta_t\le c/t$.
\item[] $^\mathparagraph$This bound is under $1$-Lipschitz continuity of $f(\,\cdot\,;z)$ and $f(\Theta,\mathcal Z)\subset[0,1]$.
\item[] $^\flat$Although Corollaries~\ref{cor:strconvsmooth}--\ref{thm:early} are for SGD, the same result holds for contractive $g_i$ with the same proof.
\end{tablenotes}
}}
\vspace{-0.1in}
\end{table}
\renewcommand{\arraystretch}{1}

\newpage
\section{Lower bound on uniform stability-based bound for piecewise strongly convex and smooth objectives}\label{sec:stabilityfail}

In this section, we show that the uniform stability-based bound \cite{hardt15} is $\Omega(1)$ for piecewise strongly convex and smooth functions after sufficiently many SGD iterations in general.
To this end, we first introduce the formal definition of the uniform stability and a standard tool for showing that the stability implies generalization, which are from Definition 2.1 and Theorem 2.2 in \cite{hardt15}.
\begin{mydefinition}[Uniform stability]
A randomized algorithm $\rho$ is ``$\varepsilon$-uniformly stable'' if for all data sets $\mathcal S,\mathcal S^\prime\in\mathcal Z^n$ such that $\mathcal S$ and $\mathcal S^\prime$ differ in at most one example, we have
\begin{align*}
    \sup_{z\in\mathcal Z}\mathbb E_\rho[f(\rho(\mathcal S);z)-f(\rho(\mathcal S^\prime);z)]\le\varepsilon.
\end{align*}
\end{mydefinition}

\begin{theorem}\label{thm:stability} If $\rho$ is $\varepsilon$-uniformly stable, then
\begin{align*}
    |\mathbb E_{\mathcal S,\rho}[\hat F(\rho(\mathcal S))-F(\rho(\mathcal S))]|\le\varepsilon.
\end{align*}
\end{theorem}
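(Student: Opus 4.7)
The plan is to apply the classical symmetrization argument used in the stability-based generalization literature (e.g.\ Bousquet--Elisseeff, Hardt et al.). First I would introduce a ``ghost'' dataset $\mathcal S' = (z_1', \dots, z_n')$ consisting of i.i.d.\ copies of the samples in $\mathcal S$, independent of $\mathcal S$ and of the randomness in $\rho$. Since $F(\theta) = \mathbb E_{Z}[f(\theta; Z)]$ and each $z_i'$ is distributed as $Z$, I can rewrite the population risk at $\rho(\mathcal S)$ as
\begin{align*}
\mathbb E_{\mathcal S, \rho}\bigl[F(\rho(\mathcal S))\bigr]
= \mathbb E_{\mathcal S, \mathcal S', \rho}\biggl[\frac{1}{n}\sum_{i=1}^n f(\rho(\mathcal S); z_i')\biggr],
\end{align*}
while the empirical risk admits the trivial identity $\mathbb E_{\mathcal S, \rho}[\hat F(\rho(\mathcal S))] = \mathbb E_{\mathcal S, \rho}[\frac{1}{n}\sum_{i=1}^n f(\rho(\mathcal S); z_i)]$.

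Next I would perform the key renaming step: for each fixed $i$, since $(z_i, z_i')$ are i.i.d.\ and the remaining samples are symmetric in $(\mathcal S, \mathcal S')$, swapping $z_i$ and $z_i'$ does not change the joint distribution. Hence
\begin{align*}
\mathbb E_{\mathcal S, \mathcal S', \rho}\bigl[f(\rho(\mathcal S); z_i)\bigr]
= \mathbb E_{\mathcal S, \mathcal S', \rho}\bigl[f(\rho(\mathcal S^{(i)}); z_i')\bigr],
\end{align*}
where $\mathcal S^{(i)}$ denotes $\mathcal S$ with its $i$-th entry replaced by $z_i'$. Combining the two displays yields
\begin{align*}
\mathbb E_{\mathcal S, \rho}\bigl[\hat F(\rho(\mathcal S)) - F(\rho(\mathcal S))\bigr]
= \mathbb E_{\mathcal S, \mathcal S', \rho}\biggl[\frac{1}{n}\sum_{i=1}^n \bigl(f(\rho(\mathcal S^{(i)}); z_i') - f(\rho(\mathcal S); z_i')\bigr)\biggr].
\end{align*}

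Finally, since $\mathcal S$ and $\mathcal S^{(i)}$ differ in at most one example, the uniform stability hypothesis applied pointwise in $z_i' \in \mathcal Z$ gives $|\mathbb E_\rho[f(\rho(\mathcal S^{(i)}); z_i') - f(\rho(\mathcal S); z_i')]| \le \varepsilon$ for every realization of $(\mathcal S, \mathcal S')$. Averaging over the $n$ terms and the outer expectations, and applying Jensen's inequality to move the absolute value inside, yields $|\mathbb E_{\mathcal S, \rho}[\hat F(\rho(\mathcal S)) - F(\rho(\mathcal S))]| \le \varepsilon$, which is the claim. There is no real obstacle here beyond being careful about the order of expectations and the precise measurability of $\rho(\mathcal S^{(i)})$ versus $\rho(\mathcal S)$ under the coupling; the symmetrization step is the only non-obvious move, and it is a standard and well-understood technique.
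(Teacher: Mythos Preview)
The paper does not give its own proof of this statement; it simply quotes it as Theorem~2.2 of \cite{hardt15}. Your argument is precisely the standard ghost-sample/symmetrization proof from that reference (and from Bousquet--Elisseeff), and it is correct as written.
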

For the remaining section, we provide an example that
\begin{align*}
\sup_{z\in\mathcal Z}\mathbb E_\rho[f(\rho(\mathcal S);z)-f(\rho(\mathcal S^\prime);z)]=\Omega(1),
\end{align*}
regardless of $n$ where $\rho$ denotes sufficiently many SGD updates. Namely, the uniform stability-based bound based on Theorem~\ref{thm:stability} is $\Omega(1)$.

Let $\Theta=[0,4]$, $\mathcal Z=\{0,1\}$, $\theta^{(0)}\sim\text{Unif}([0,4])$, $\mathbb P(z=0)=\mathbb P(z=1)=\frac12$, $f(\,\cdot\,;0)=\min\{(x-1)^2,\frac12+\frac12(x-3)^2\}$, $f(\,\cdot\,;1)=(x-1)^2$, and the auxiliary gradient $\nabla f(2;0)=2$ at the non-differentiable point $\theta=2$, i.e., $f(\,\cdot\,;z)$ is piecewise $1$-strongly convex and $2$-smooth with the partition $\mathcal P=\{[0,2],(2,4]\}$. 
For simplicity, choose $\eta=1/3$ which can be generalized to arbitrary $\eta\in(0,1)$.
Under this setup, using Theorem~\ref{thm:approxpiecestrconvsmooth}, one can easily derive a generalization bound that does not increase with the number of SGD iterations and converges to zero as $n$ grows.

However, under the same setup and sufficiently many SGD iterations, the uniform stability-based bound is lower bounded by a constant regardless of $n$. To see this, let $\mathcal S=(0,\dots,0)$ and $\mathcal S^\prime=(1,0,\cdots,0)$. Then, one can observe that $f([0,2];0)\subset[0,2]$ and $f((2,4];0)\subset(2,4]$, i.e. SGD iterates for $\mathcal S$ converge to either $1$ or $3$ depending on whether $\theta^{(0)}\in[0,2]$ or $\theta^{(0)}\in(2,4]$.
Since we assumed $\theta^{(0)}\sim\text{Unif}([0,4])$, we have 
\begin{align}
    \lim_{t\rightarrow\infty}\mathbb E_{\theta^{(t)}}[f(\theta^{(t)};1)]=\frac12 f(1;1)+\frac12 f(3;1)=2\label{eq:unifstabex1}
\end{align}
where $\theta^{(t)}$ denotes a random parameter generated by $t$ SGD updates for $\mathcal S$, from $\theta^{(0)}\sim\text{Unif}([0,4])$.
Furthermore, we have $f(\Theta;1)\subset[0,2]$ and $f([0,2];0)\subset[0,2]$. This implies that if a single SGD update for $\mathcal S^\prime$ use the first sample in $\mathcal S^\prime$ (i.e. $z=1$), which occurs with high probability under sufficiently many SGD iterations, then the SGD iterates will converge to $1$ almost surely.
In other words, we have
\begin{align}
    \lim_{t\rightarrow\infty}\mathbb E_{\phi^{(t)}}[f(\phi^{(t)};1)]= f(1;1)=0\label{eq:unifstabex2}
\end{align}
where $\phi^{(t)}$ denotes a random parameter generated by $t$ SGD updates for $\mathcal S^\prime$, from $\phi^{(0)}\sim\text{Unif}([0,4])$.
Combining \eqref{eq:unifstabex1} and \eqref{eq:unifstabex2} implies a constant lower bound on the uniform stability-based bound, regardless of $n$, under sufficiently many SGD iterations.
We note that the same conclusion can also be derived for any $\eta\in(0,1)$ as long as the number of SGD iterations is large enough.

\end{document}